\newcommand{\bbN}{\mathbb{N}}
\newcommand{\bbR}{\mathbb{R}}
\newcommand{\RR}{\mathbb{R}}
\newcommand{\bfa}{\mathbf{a}}
\newcommand{\bfb}{\mathbf{b}}
\newcommand{\bfe}{\mathbf{e}}
\newcommand{\bfv}{\mathbf{v}}
\newcommand{\bfx}{\mathbf{x}}
\newcommand{\bfy}{\mathbf{y}}
\newcommand{\bfz}{\mathbf{z}}
\newcommand{\bftheta}{\boldsymbol{\theta}}
\newcommand{\bfA}{\mathbf{A}}
\newcommand{\bfB}{\mathbf{B}}
\newcommand{\bfP}{\mathbf{P}}
\newcommand{\bfQ}{\mathbf{Q}}
\newcommand{\bfS}{\mathbf{S}}
\newcommand{\bfU}{\mathbf{U}}
\newcommand{\bfV}{\mathbf{V}}
\newcommand{\bfX}{\mathbf{X}}
\newcommand{\bfY}{\mathbf{Y}}
\newcommand{\bfSigma}{\boldsymbol{\Sigma}}
\newcommand{\bfone}{\mathbf{1}}
\newcommand{\bfnull}{\mathbf{0}}
\newcommand{\calA}{\mathcal{A}}
\newcommand{\calN}{\mathcal{N}}
\newcommand{\calP}{\mathcal{P}}
\newcommand{\calR}{\mathcal{R}}
\newcommand{\calS}{\mathcal{S}}
\newcommand{\rmc}{\mathrm{c}}
\newcommand{\rme}{\mathrm{e}}
\newcommand{\dd}{\,\mathrm{d}}
\newcommand{\nrows}{n}
\newcommand{\ncolumns}{d}
\newcommand{\ntemplates}{D}
\newcommand{\BirkhoffDimension}{N} % dimension of Birkhoff polytope + 1
\newcommand{\sort}[1]{{\downarrow}({#1})}
\newcommand{\SortEmbedding}[1]{\beta_{#1}}
\newcommand{\SortEmbeddingTwo}[2]{\delta_{#1,#2}}
\newcommand{\QuotientSortEmbedding}[1]{\overline \beta_{#1}}
\newcommand{\QuotientSortEmbeddingTwo}[2]{\overline \delta_{#1,#2}}
\newcommand{\GroupAction}[2]{{#1}{#2}}
\newcommand{\distance}[2]{\mathrm{dist}({#1},{#2})}
\newcommand{\CharacteristicFunction}[1]{K_{#1}}
\DeclarePairedDelimiter{\norm}{\lVert}{\rVert}
\DeclarePairedDelimiter{\abs}{\lvert}{\rvert}
\DeclarePairedDelimiterX{\set}[2]\lbrace\rbrace{#1 \,\delimsize\vert\, #2}
\newcommand{\Ran}{\operatorname{range}}
\newcommand{\linop}{L}
\newcommand{\linopmat}{\mathbf{L}}
\newtheorem{theorem}{Theorem}
\newtheorem{proposition}[theorem]{Proposition}
\newtheorem{corollary}[theorem]{Corollary}
\newtheorem{lemma}[theorem]{Lemma}
\theoremstyle{remark}
\newtheorem{remark}[theorem]{Remark}
\begin{document}

% ------------------------------------------------
% Title & Author (IEEE format)
% ------------------------------------------------
\title{Quantitative Bounds for Sorting-Based Permutation-Invariant Embeddings}

\author{
    Nadav~Dym\thanks{N.~Dym is with the Faculty of Mathematics, Technion-Israel Insitute of Technology, Technion City, Haifa, Israel. email: nadavdym@technion.ac.il},
    Matthias~Wellershoff\thanks{M.~Wellershoff was with the Department of Mathematics, University of Maryland, 4176 Campus Drive, College Park, MD 20742.},
    Efstratios~Tsoukanis\thanks{E.~Tsoukanis is with the Institute of Mathematical Sciences, Claremont Graduate University, 150 E.~10th Street, Claremont, CA 91711. email: efstratios.tsoukanis@cgu.edu}, \\
    Daniel~Levy\thanks{D.~Levy is with the Program in Applied and Computational Mathematics in Princeton University, Princeton, NJ 08540. Email: daniel.levy@princeton.edu}~
    and Radu~Balan\thanks{R.~Balan is with the Department of Mathematics in the University of Maryland, 4176 Campus Drive, College Park, MD 20742. email: rvbalan@umd.edu}
}

\maketitle

% ------------------------------------------------
% Abstract and Keywords
% ------------------------------------------------

\begin{abstract}
    We study permutation-invariant embeddings of $d$-dimensional point sets, which are defined by sorting $D$ independent one-dimensional projections of the input. Such embeddings arise in graph deep learning where outputs should be invariant to permutations of graph nodes. Previous work showed that for large enough $D$ and projections in general position, this mapping is injective, and moreover satisfies a bi-Lipschitz condition. However, two gaps remain: firstly, the optimal size $D$ required for injectivity is not yet known, and secondly, no estimates of the bi-Lipschitz constants of the mapping are known. In this paper, we make substantial progress in addressing both of these gaps. Regarding the first gap, we improve upon the best known upper bounds for the embedding dimension $D$ necessary for injectivity, and also provide a lower bound on the minimal injectivity dimension. Regarding the second gap, we construct matrices of projection vectors, so that the bi-Lipschitz distortion of the mapping depends quadratically on the number of points $n$, and is completely independent of the dimension $d$. We also show that for any choice of projection vectors,  the distortion of the mapping will never be better than a bound proportional to the square root of $n$. Finally, we show that similar guarantees can be provided even when linear projections are applied to the mapping to reduce its dimension. 
\end{abstract}

\paragraph{Keywords} Permutation invariance, sorting, embeddings, Lipschitz bounds, symmetry.

\section{Introduction}
Consider the action of the symmetric group $S_\nrows$ on the matrices $\bbR^{\nrows \times \ncolumns}$ by row permutation, and let $\norm{\cdot}_F$ denote the Frobenius norm. We are interested in constructing functions $f : \bbR^{\nrows \times \ncolumns} \to \bbR^M$ that satisfy three main requirements:
    \begin{enumerate}
        \item \emph{Permutation invariance.} $f(\sigma \bfX) = f(\bfX)$ for all $\sigma \in S_\nrows$, $\bfX \in \bbR^{\nrows \times \ncolumns}$.
        \item \emph{Orbit separation.} $f(\bfX) = f(\bfY)$ implies $\bfX \in S_\nrows \bfY$ for all $\bfX, \bfY \in \bbR^{\nrows \times \ncolumns}$.
        \item \emph{Bi-Lipschitz condition} There exist constants $C_1, C_2 > 0$ such that, for all $\bfX, \bfY \in \bbR^{\nrows \times \ncolumns}$, 
        \begin{equation}\label{eq:biLipschitzcondition}
            C_1 \cdot \min_{\sigma \in S_\nrows} \norm{\bfX - \sigma \bfY}_\mathrm{F} \leq \norm{f(\bfX) - f(\bfY)}_2 
            \leq C_2 \cdot \min_{\sigma \in S_\nrows} \norm{\bfX - \sigma \bfY}_\mathrm{F}.
        \end{equation}
    \end{enumerate}

The motivation for these requirements comes from learning on multisets that is permutation-invariant. This is a common setting where one wishes to ``learn'' a permutation-invariant function $q(\bfX)$, using a parametric family of functions $f_\theta(\bfX)$ which is also permutation-invariant. A simple yet powerful and popular method to do this is the DeepSets model \cite{deepsets}. It applies a neural network $h_\theta$ to each of the rows $\bfx_i \in \bbR^d$ of $\bfX \in \bbR^{n \times d}$, and then  sums over all rows to obtain permutation invariance:
\begin{equation*}
    f_\theta(\bfX)=\sum_{j=1}^n h_\theta(\bfx_i).
\end{equation*}
It was shown in \cite{Amir2023Neural,pmlr-v237-tabaghi24a,Wang2024Polynomial,deepsets} that, if constructed correctly, the DeepSets model also has the orbit separation property. This orbit separation result guarantees that any permutation-invariant function can be approximated by a concatenation of a DeepSets model with an additional neural network \cite{wagstaff2022universal,deepsets} and is also used to provide maximally expressive graph neural networks \cite{NEURIPS2019_bb04af0f,xu2018how}. 

Recently, the bi-Lipschitz condition defined above has received more attention in the invariant learning community. The motivation for this requirement is controlling the quality of orbit separation, so that we can guarantee that orbits which are close to/far from each other are mapped to close/far vectors. Such properties can be useful, for example, for metric based learning tasks such as nearest neighbor search or clustering, as discussed in \cite{Cahill2024Towards}.  Unfortunately, the DeepSets model cannot be bi-Lipschitz \cite{Amir2023Neural}. Recent work suggests \cite{reshef2025noninjectivitypiecewiselinear}  that this is also the case for Janossy pooling: a generalization of DeepSets which sums over all $k$-tuples of rows of $\bfX$. These results inspired research to suggest new permutation-invariant functions which do have the bi-Lipschitz properties.

Among the most promising permutation-invariant functions that are bi-Lipschitz is the function proposed in \cite{Balan2025Permutation}, $\SortEmbedding{\bfA} : \bbR^{\nrows \times \ncolumns} \to \bbR^{\nrows \times \ntemplates} \simeq \bbR^{\nrows \ntemplates}$, defined as
    \begin{equation}\label{eq:sortembeddingone}
        \SortEmbedding{\bfA}(\bfX) := \begin{pmatrix}
            | & & | \\
            \sort{\bfX \bfa_1} & \dots & \sort{\bfX \bfa_\ntemplates} \\
            | & & |
        \end{pmatrix}, \qquad \bfX \in \bbR^{\nrows \times \ncolumns},
    \end{equation}
    where $\sort{\cdot} : \bbR^{\nrows} \to \bbR^{\nrows}$ denotes sorting vectors in a non-decreasing order and $(\bfa_k)_{k = 1}^\ntemplates \in \bbR^{\ncolumns}$ are the columns of $\bfA \in \bbR^{\ncolumns \times \ntemplates}$. It has been shown in \cite{Balan2025Permutation} that, for large enough $D$ and generic $\bfA$, this function is both orbit separating and bi-Lipschitz. The usefulness of this bi-Lipschitz mapping and the closely related FSW embedding \cite{Amir2024Fourier} for permutation-invariant learning tasks was demonstrated in \cite{Davidson2024Hoelder,sverdlov2024fsw}. In \cite{dym2025bi}, a variant of $\beta_\bfA$ is proposed which gives bi-Lipschitz invariants for the alternating group.  Other bi-Lipschitz permutation-invariant mappings include the max filter approach \cite{Cahill2024GroupInvariant} and group invariants based on coorbits \cite{Balan2023GInvariantI}. 

    To enable a theoretically informed choice between the different bi-Lipschitz permutation-invariant functions suggested in the literature, a more refined analysis is necessary. That is, a successful bi-Lipschitz invariant function $f$ should satisfy three additional requirements:
    \begin{enumerate}
        \setcounter{enumi}{3}
         \item \emph{Efficient computability.} $f$ can be computed in polynomial time with respect to $\nrows$ and $\ncolumns$, where, again, the lower the computational burden the better. 
        \item \emph{Small embedding dimension.} $M$ is as small as possible. It is known that necessarily $M\geq n\cdot d$ \cite{Joshi,Amir2023Neural} and so one would hope for $M$ to be as close to this lower bound as possible.
        \item \emph{Small distortion.} The distortion $C_2/C_1$ (where $C_1,C_2 > 0$ are the optimal constants satisfying equation~\eqref{eq:biLipschitzcondition}) is as close to one as possible.
    \end{enumerate}

The computational complexity of the function $\beta_{\bfA}$ is well understood. Our goal in this paper is to study the embedding dimension and distortion of the function $\beta_{\bfA}$, improving upon previous results obtained on this topic. We will now introduce some notation, and then review previous results, and give an overview of our main results. 

    \subsection{Notation} Our convention for the natural numbers is $\bbN = \lbrace 1,2,\dots \rbrace$. Given a natural number $n \in \bbN$, we denote $[n] := \lbrace 1,\dots,n \rbrace$. The cardinality (i.e., number of elements) of a finite set $S$ is denoted by $\abs{S}$. The complement of a subset $T \subset S$ is denoted by $T^\rmc := S \setminus T$. Additionally, we denote the characteristic function of $T$ by $K_T$, 
    \begin{equation*}
      x \in S \mapsto  \CharacteristicFunction{T}(x) := \begin{cases}
            1 & \mbox{if } x \in T, \\
            0 & \mbox{else}.
        \end{cases}
    \end{equation*}

    The $n$-dimensional vector of zeros is denoted by $\bfnull_n = (0~\dots~0) \in \bbR^n$ while the $n$-dimensional vector of ones is denoted by $\bfone_n = (1~\dots~1) \in \bbR^n$. Similarly, the $m \times n$ matrix of zeros is denoted by $\bfnull_{m \times n} \in \bbR^{m \times n}$. The two-norm of a vector $\bfx = (x_1~\dots~x_n) \in \bbR^n$ is
    \begin{equation*}
        \norm{\bfx}_2 = \left( \sum_{i = 1}^n x_i^2 \right)^{1/2}.
    \end{equation*}
    The unit sphere in $n$ dimensions is $S^{n-1} = \set{ \bfx \in \bbR^n }{\norm{\bfx}_2 = 1}$. The singular values of a matrix $\bfA \in \bbR^{m \times n}$ are denoted by $\sigma_1(\bfA), \dots, \sigma_{\min\lbrace m,n \rbrace}(\bfA)$ and assumed to be ordered non-increasingly; i.e., 
    \begin{equation*}
        \sigma_1(\bfA) \geq \dots \geq \sigma_{\min\lbrace m,n \rbrace}(\bfA).
    \end{equation*}
    The Frobenius norm of a matrix $\bfA \in \bbR^{m \times n}$ is
    \begin{equation*}
        \norm{\bfA}_\mathrm{F} =\left(\sum_{i=1}^m \sum_{j=1}^n A_{ij}^2\right)^{1/2}= \left( \sum_{i = 1}^{\min\lbrace m,n \rbrace} \sigma_1(\bfA)^2 \right)^{1/2}.
    \end{equation*}
    We say that a wide matrix $\bfA \in \bbR^{m \times n}$, $m \leq n$, is full spark (or has full spark) if every set of $m$ columns of $\bfA$ is linearly independent. Given an index set $I \subset [n]$ and a matrix $\bfA \in \bbR^{m \times n}$, we let $\bfA(I) \in \bbR^{m \times \abs{I}}$ be the matrix obtained from $\bfA$ by discarding all columns whose indices are not in $I$. We write $V \simeq W$ if two vector spaces, $V$ and $W$, are canonically isomorphic.
    
    If $f(x)$, $g(x)$ are two families of objects parametrized by $x \in S$, where $S$ is some set, then we write $f \lesssim g$ if there exists a constant $c > 0$ such that, for all $x \in S$, $f(x) \leq cg(x)$. We also write $f \gtrsim g$ if $g \lesssim f$. Similarly, when $f(n)$, $g(n)$ are parametrized by natural numbers $n \in \bbN$, we write $f(n) \in O(g(n))$ when $\limsup_{n \to \infty} \abs{f(n)/g(n)} < \infty$, $f(n) \in \Omega(g(n))$ when $\liminf_{n \to \infty} \abs{f(n)/g(n)} > 0$ and $f(n) \in \widetilde O(g(n))$ when there exists an $m \in \bbN$ such that $f(n) \in O(g(n) \log^m(n))$.

    Finally, we denote the group of permutations on $n$ elements by $S_n$. Elements of the group are denoted by $\sigma \in S_n$ or $\bfP \in S_n$ depending on whether we prefer to view them as permutations on $[n]$ or as matrices acting on $\bbR^n$.

    \subsection{Preliminaries and Roadmap}

    As mentioned before, we are interested in the action of the group $S_\nrows$ on $\bbR^{\nrows \times \ncolumns}$ by row permutation; or, more precisely, via 
    \begin{equation*}
        \sigma \bfX := \begin{pmatrix}
            \mbox{---} & \bfx_{\sigma(1)} & \mbox{---} \\
            & \vdots & \\
            \mbox{---} & \bfx_{\sigma(\nrows)} & \mbox{---}
        \end{pmatrix} \in \bbR^{\nrows \times \ncolumns},
    \end{equation*}
    where $\bfX \in \bbR^{\nrows \times \ncolumns}$ has rows $(\bfx_i)_{i = 1}^\nrows \in \bbR^{\ncolumns}$, and $\sigma \in S_\nrows$.  
    We write $\bfX \sim_{S_\nrows} \bfY$ if $\bfX = \sigma \bfY$ for some $\sigma \in S_\nrows$; equivalently, $\bfX \in S_\nrows \bfY$. The set of equivalence classes under this relation is denoted by $\bbR^{\nrows \times \ncolumns} / S_\nrows$ and carries a natural metric induced by the Frobenius norm:
    \begin{equation*}
        \distance{\bfX}{\bfY} := \min_{\sigma \in S_\nrows} \norm{\bfX - \sigma \bfY}_\mathrm{F}, \qquad \bfX, \bfY \in \bbR^{\nrows \times \ncolumns}.
    \end{equation*}

    Permutation-invariant functions $f : \bbR^{\nrows \times \ncolumns} \to \bbR^M$ descend to well-defined functions on the set of orbits $\bbR^{\nrows \times \ncolumns}/S_\nrows$. The sorting-based permutation-invariant embedding $\SortEmbedding{\bfA} : \bbR^{\nrows \times \ncolumns} \to \bbR^{\nrows \times \ntemplates}$, as defined in equation~\eqref{eq:sortembeddingone}, descends to $\QuotientSortEmbedding{\bfA} : \bbR^{\nrows \times \ncolumns}/S_\nrows \to \bbR^{\nrows \times \ntemplates}$. This insight allows us to reformulate orbit separation and the bi-Lipschitz condition of $\SortEmbedding{\bfA}$ simply as injectivity and bi-Lipschitz continuity of $\QuotientSortEmbedding{\bfA}$; the latter just being the condition
    \begin{equation*}
        C_1 \cdot \distance{\bfX}{\bfY} \leq \norm{\QuotientSortEmbedding{\bfA}(\bfX) - \QuotientSortEmbedding{\bfA}(\bfX)}_\mathrm{F} \leq C_2 \cdot \distance{\bfX}{\bfY},
    \end{equation*}
    for $\bfX, \bfY \in \bbR^{\nrows \times \ncolumns}/S_\nrows$. The optimal constants $C_1, C_2 > 0$ such that the above equation hold are called \emph{lower and upper Lipschitz constant} of $\QuotientSortEmbedding{\bfA}$. Their fraction $C_2/C_1$ is called \emph{distortion} of $\QuotientSortEmbedding{\bfA}$. 

    This paper grew out of \cite{Balan2025Permutation} and  \cite{Dym2024LowDimensional}.  The main result in \cite{Balan2025Permutation} states the following among other things.

    \begin{theorem}[{\cite[Theorem~1.2 on p.~3]{Balan2025Permutation}}]
    Let $d,n,D$ be natural numbers. 
    
        \hangindent\leftmargini
        \label{thm:BHS22}
        {\emph{1)}} For all $\bfA \in \bbR^{\ncolumns \times \ntemplates}$ such that $\QuotientSortEmbedding{\bfA}$ is injective, $\QuotientSortEmbedding{\bfA}$ is bi-Lipschitz continuous and the upper Lipschitz constant is given by the largest singular value $\sigma_1(\bfA)$.
        \begin{enumerate}
            \setcounter{enumi}{1}
            \item For $\ntemplates = \nrows!(\ncolumns-1)+1$ and all $\bfA \in \bbR^{\ncolumns \times \ntemplates}$ with full spark, $\QuotientSortEmbedding{\bfA}$ is bi-Lipschitz continuous with lower Lipschitz constant greater than or equal to 
            \begin{equation}\label{eq:sigma_d}
                \min_{\substack{I \subset [\ntemplates]\\\abs{I}=\ncolumns}} \sigma_\ncolumns(\bfA(I)).
            \end{equation}
            \item For all $\bfA \in \bbR^{\ncolumns \times \ntemplates}$ such that $\QuotientSortEmbedding{\bfA}$ is injective and almost all linear functions $\linop : \bbR^{\nrows \times \ntemplates} \to \bbR^{2 \nrows \ncolumns}$, the embedding 
                \begin{equation*}
                \bar\beta_{\bfA,\linop}:=\linop \circ \QuotientSortEmbedding{\bfA}
                \end{equation*}
            is bi-Lipschitz continuous.
        \end{enumerate}
    \end{theorem}

    It is noteworthy that in item~2) above the required number of templates $\ntemplates$ grows superexponentially in $n$. In particular, this scaling becomes prohibitive already for moderate values of $n$. As shown by one of the authors in earlier work, this dependence can be improved. More precisely, the factorial growth in $n$ can be replaced by a quadratic dependence, as stated in the following theorem.
      \begin{theorem}\label{thm:biLipschitzdeterministic}[From \cite{Ravina}]
      Let $d,r,n,D$ be natural numbers and let $\bfA\in \RR^{d\times D}$. If $\ntemplates \geq r\ncolumns((\nrows-1)^2+1)$, then the lower Lipschitz constant of $\smash{\QuotientSortEmbedding{\bfA}}$ is greater than or equal to  
        \begin{equation*}
            \min_{\substack{I \subset [\ntemplates]\\\abs{I} = r\ncolumns}} \sigma_\ncolumns(\bfA(I)).
        \end{equation*}
    \end{theorem}

    Taken together, the results above show the following. The map $\SortEmbedding{\bfA}$ is permutation-invariant, orbit separating, and satisfies the bi-Lipschitz condition~\eqref{eq:biLipschitzcondition} when $\bfA$ has full spark and the number of templates $\ntemplates$ scales quadratically with $\nrows$. In this setting, the embedding dimension equals $\nrows \ntemplates$, which scales cubically in $\nrows$.

    One may reduce this dimensionality by following item~3) of Theorem~\ref{thm:BHS22}, leading to a linear scaling in $\nrows$. However, this approach requires passing through an intermediate dimension $\nrows \ntemplates$ that scales superexponentially in $\nrows$, and is therefore impractical.

    An alternative linear projection strategy was proposed in \cite{Dym2024LowDimensional}. It yields a mapping $\SortEmbeddingTwo{\bfA}{\bfB} : \bbR^{\nrows \times \ncolumns} \to \bbR^{\ntemplates}$ defined by
    \[
        \SortEmbeddingTwo{\bfA}{\bfB}(\bfX)
        := \left( \bfb_k^\top \sort{\bfX \bfa_k} \right)_{k = 1}^\ntemplates,
        \qquad
        \bfX \in \bbR^{\nrows \times \ncolumns},
    \]
    where $\bfA \in \bbR^{\ncolumns \times \ntemplates}$ and $\bfB \in \bbR^{\nrows \times \ntemplates}$. Since $\SortEmbeddingTwo{\bfA}{\bfB}$ is permutation-invariant, it descends to a function $\QuotientSortEmbeddingTwo{\bfA}{\bfB} : \bbR^{\nrows \times \ncolumns}/S_\nrows \to \bbR^\ntemplates$. In \cite{Dym2024LowDimensional}, it was shown that this function is injective with embedding dimension $2nd+1$. Subsequently, \cite{Balan2024Stability} established that injectivity in this setting implies the bi-Lipschitz property. We summarize these results in the following theorem.

    \begin{theorem}[{\cite[Proposition~3.1 on p.~393]{Dym2024LowDimensional}} and \cite{Balan2024Stability}]\label{thm:DG24}
     Let $d,n,D$ be natural numbers.    If $\ntemplates \geq 2\nrows\ncolumns+1$, then $\QuotientSortEmbeddingTwo{\bfA}{\bfB}$ is injective for Lebesgue almost every $(\bfA,\bfB) \in \bbR^{\ncolumns \times \ntemplates} \times \bbR^{\nrows \times \ntemplates}$. Moreover, $\QuotientSortEmbeddingTwo{\bfA}{\bfB}$ is bi-Lipschitz continuous whenever it is injective. 
    \end{theorem}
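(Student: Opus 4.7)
The plan for the injectivity claim is a dimension count exploiting the piecewise-linear structure of sorting. For fixed $\bfX \in \bbR^{\nrows \times \ncolumns}$ and $\bfa_k \in \bbR^{\ncolumns}$, $\sort{\bfX \bfa_k} = \pi_k \bfX \bfa_k$ for a unique $\pi_k \in S_\nrows$ determined by the Weyl chamber of $\bfX \bfa_k$. Fixing sorting permutations $\pi_1, \tau_1, \dots, \pi_\ntemplates, \tau_\ntemplates \in S_\nrows$, the condition $\SortEmbeddingTwo{\bfA}{\bfB}(\bfX) = \SortEmbeddingTwo{\bfA}{\bfB}(\bfY)$ reduces to the bilinear polynomial system
\begin{equation*}
\bfb_k^\top (\pi_k \bfX - \tau_k \bfY) \bfa_k = 0, \qquad k \in [\ntemplates].
\end{equation*}
I would then partition the ``bad set'' $\set{(\bfA,\bfB,\bfX,\bfY)}{\bfX \not\sim_{S_\nrows} \bfY,\ \SortEmbeddingTwo{\bfA}{\bfB}(\bfX) = \SortEmbeddingTwo{\bfA}{\bfB}(\bfY)}$ into finitely many pieces indexed by these sorting permutations, and show that each projection onto the $(\bfA,\bfB)$-factor is Lebesgue-null. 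Since the $k$-th equation involves only $(\bfa_k,\bfb_k)$ and is non-trivial (else $\pi_k \bfX = \tau_k \bfY$, forcing $\bfX \sim_{S_\nrows} \bfY$), the $(\bfA,\bfB)$-fiber above each fixed $(\bfX,\bfY)$ has codimension $\ntemplates$. Fibering over the $2 \nrows \ncolumns$-dimensional $(\bfX,\bfY)$-factor and projecting to $(\bfA,\bfB)$-space, the image has positive codimension exactly when $\ntemplates > 2 \nrows \ncolumns$, matching the hypothesis $\ntemplates \geq 2 \nrows \ncolumns + 1$.

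For bi-Lipschitz continuity, the upper constant is immediate from $\ell_2$-nonexpansivity of sorting, $\norm{\sort{\bfu} - \sort{\bfv}}_2 \leq \norm{\bfu - \bfv}_2$, giving $\abs{\bfb_k^\top (\sort{\bfX\bfa_k} - \sort{\bfY\bfa_k})} \leq \norm{\bfb_k}_2 \norm{\bfa_k}_2 \norm{\bfX - \bfY}_\mathrm{F}$; summing in $k$ and minimizing over $\sigma \in S_\nrows$ yields a finite upper constant depending only on $(\bfA, \bfB)$. For the lower constant, the essential observation is that $\SortEmbeddingTwo{\bfA}{\bfB}$ descends to a positively homogeneous, continuous, piecewise linear map on $\bbR^{\nrows \times \ncolumns}/S_\nrows$, with finitely many linear pieces indexed by sorting patterns of the $\ntemplates$ projections. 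Positive homogeneity reduces the task to bounding the ratio
\begin{equation*}
R(\bfX, \bfY) := \frac{\norm{\SortEmbeddingTwo{\bfA}{\bfB}(\bfX) - \SortEmbeddingTwo{\bfA}{\bfB}(\bfY)}_2}{\distance{\bfX}{\bfY}}
\end{equation*}
from below on the quotient sphere $\norm{\bfX}_\mathrm{F}^2 + \norm{\bfY}_\mathrm{F}^2 = 1$ with $\bfX \not\sim_{S_\nrows} \bfY$, and a standard compactness plus piecewise-linearity argument converts injectivity into a strictly positive lower constant $C_1$.

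The main technical obstacle is the behavior of $R$ near the diagonal $\distance{\bfX}{\bfY} \to 0$. Arguing by contradiction, suppose $(\bfX_n, \bfY_n)$ on the quotient sphere satisfy $R(\bfX_n,\bfY_n) \to 0$. Passing to a convergent subsequence with $(\bfX_n, \bfY_n) \to (\bfX_*, \bfY_*)$: if $\bfX_* \not\sim_{S_\nrows} \bfY_*$, injectivity is violated outright; if $\bfX_* \sim_{S_\nrows} \bfY_*$, a rescaling argument (exploiting positive homogeneity of both $\SortEmbeddingTwo{\bfA}{\bfB}$ and $\distance{\cdot}{\cdot}$) produces a unit tangent direction annihilated by the linear map governing the relevant pair of Weyl cells, again violating injectivity. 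The delicate bookkeeping of how sorting cells glue together under the $S_\nrows$-action on the quotient, and how the rescaled limits respect the piecewise-linear decomposition, is the technical heart of the proof.
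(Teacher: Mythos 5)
The paper states this result as background, citing the original sources \cite{Dym2024LowDimensional} for injectivity and \cite{Balan2024Stability} for the bi-Lipschitz upgrade, so there is no in-paper proof to compare against directly. (The paper's own related contribution is Theorem~\ref{thm:dimensionreduction}, which sharpens the embedding dimension to $(2n-1)d$ via the finite witness theorem.)

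Your injectivity argument --- partitioning the bad set by sorting permutations $(\pi_k,\tau_k)$, noting that each equation $\bfb_k^\top(\pi_k\bfX-\tau_k\bfY)\bfa_k=0$ is a nontrivial bilinear form in the disjoint variable block $(\bfa_k,\bfb_k)$, and counting codimension --- is a valid parametric transversality count and matches the spirit of the cited proof, which formalizes exactly this fibered dimension argument through the finite witness theorem. One minor caveat: the sorting permutation $\pi_k$ is unique only when $\bfX\bfa_k$ has no ties, so the pieces should be taken as semialgebraic cells rather than a set-theoretic partition, but this is harmless.

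The bi-Lipschitz lower bound is where there is a genuine gap. Your compactness reduction is fine: on the quotient sphere, a minimizing sequence $R(\bfX_n,\bfY_n)\to 0$ either converges to $\bfX_*\not\sim_{S_n}\bfY_*$ (contradiction by continuity) or collapses to the diagonal $\bfX_n,\bfY_n\to\bfZ$ up to permutation. But the final step --- that the rescaled limit $\bfW$ ``is annihilated by the linear map governing the relevant Weyl cells, again violating injectivity'' --- is asserted, not proven, and it is precisely where the real difficulty lives. Two issues. First, even in the non-degenerate case where $\bfZ\bfa_k$ has distinct entries for every $k$ (so the relevant piece is a single linear map $L$), producing $\bfW\in\ker L$ with $\norm{\bfW}_\mathrm{F}=1$ does not by itself contradict injectivity: one must further show that $\bfZ+t\bfW$ lies in the same linear cell for small $t$, and that $\bfZ+t\bfW\not\sim_{S_n}\bfZ$ for small $t\neq 0$ (which uses finiteness of the stabilizer of $\bfZ$). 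Second, and more seriously, when $\bfZ\bfa_k$ has ties for some $k$, the limit point sits on a cell boundary; the sorting permutations of $\bfX_n\bfa_k$ and $\bfY_n\bfa_k$ need not agree for any $n$, the difference $\delta_{\bfA,\bfB}(\bfX_n)-\delta_{\bfA,\bfB}(\bfY_n)$ is governed by a genuinely piecewise-linear (not linear) map on the rescaled variable, and ``annihilated by the linear map'' has no single well-defined meaning. You correctly identify this as the technical heart, but without resolving it the strictly positive lower Lipschitz constant is not established.
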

    
This result provides an embedding dimension of $2nd+1$ for this new projection $\delta_{\bfA,\bfB}$, which is one more than the embedding dimension of $2nd$ required for $\beta_{\bfA,\linop}$ in Theorem \ref{thm:BHS22}. However, the advantage of this projection is that it is more efficient ($\bfB$ corresponds to a sparse $\linop$) and that this result only requires computing $\beta_\bfA$ with $D=2nd+1 $. In particular, $\beta_\bfA$ is orbit separating with this value, and so has a total embedding dimension of $M=Dn=2n^2d+n$. We note that it is known that any continuous, permutation-invariant injective function from $\RR^{n\times d}\to \RR^M$ must have $M\geq nd$ \cite{Joshi,Amir2023Neural}. Accordingly, the dimension for which we can ensure injectivity of $\bar \beta_{\bfA,\linop} $ and $\QuotientSortEmbeddingTwo{\bfA}{\bfB}$ are close to optimal, but a gap still remains. For $\bar \beta_\bfA$ there is a more substantial gap as the best embedding dimension we are currently aware of is quadratic in $n$. 

Another gap is that, while we know that all three mappings, $\bar \beta_\bfA, \bar \beta_{\bfA,\linop}$ and $\QuotientSortEmbeddingTwo{\bfA}{\bfB} $ are bi-Lipschitz whenever they are injective, we do not know much about their bi-Lipschitz distortion. We do know that the upper Lipschitz constant of $\bar \beta_{\bfA}$ is the first singular value of $\bfA$, and that the lower Lipschitz constant of  $\bar \beta_{\bfA}$ can be bounded by the expression in \eqref{eq:sigma_d}. However, this bound is not efficiently computable since it involves minimization over the minimal singular value of  $\binom{\ntemplates}{\ncolumns}$ different matrices. Moreover, we do not know if this bound is tight. And finally, we do not know how the bi-Lipschitz distortion depends on $n$ and $d$. Our aim in this paper is to address these issues. 

\subsection{Main Results}

\begin{table*}
    \centering
    \begin{tabular}{|c|c|c|c|}\hline
       & M &   Best upper bound for $M$ & Best lower bound for $M$\\\hline
    $ \QuotientSortEmbedding{\bfA} $ & nD & $ n^2(d-1)+n$ (see Thm.~\ref{thm:orbit_separation_beta})  & $\Omega(d\cdot n \log(n))$ (see Thm.~\ref{thm:matousek2})  \\\hline
   $ \QuotientSortEmbeddingTwo{\bfA}{\bfB}$ & D  & $
   (2n-1)d$ (see Thm.~\ref{thm:dimensionreduction})  & $nd$ (see \cite{Joshi}) \\\hline
    $\QuotientSortEmbedding{\bfA,\linop}$  & M  & $(2n-1)d$  (see Thm.~\ref{thm:dimensionreduction2})  & $nd $ (see \cite{Joshi})\\ \hline
    \end{tabular}
    \caption{Summary of the best known upper and lower bounds on the dimension $M$ needed for injectivity. The lower bound is understood as a necessary condition for $M$. The upper bound represents a sufficient condition that insures that generically the corresponding map is injective.}
    \label{tab:injectivity}
\end{table*}

The key findings of this paper are summarized below.
\begin{enumerate}
\item Building on known results from \cite{matousek} for the case $d=2$, we show that for $D\geq n(d-1)+1 $ the mapping $\QuotientSortEmbedding{\bfA}$ will be injective as long as $\bfA$ is full spark (see Theorem~\ref{thm:orbit_separation_beta} in Section~II-A). Conversely, we show that the lowest possible $D$ for which injectivity is possible is at best proportional to $(d-1)\cdot \log(n)$ (see Theorem~\ref{thm:matousek2} in Section~\ref{ssec:OrbitSeparation}). As a result, the embedding dimension $D\cdot n $ of $\beta_{\bfA}$ cannot be better than $\Omega(d\cdot n\log(n))$ (see Theorem~\ref{thm:matousek2} in Section~\ref{ssec:OrbitSeparation}).
\item We show that $\bar \beta_{\bfA,\linop}$ and $\QuotientSortEmbeddingTwo{\bfA}{\bfB}$ are injective with an embedding dimension of $(2n-1)d$ (see Theorem~\ref{thm:dimensionreduction} for $\QuotientSortEmbeddingTwo{\bfA}{\bfB}$ and Theorem~\ref{thm:dimensionreduction2} for $\bar \beta_{\bfA,\linop}$ in Section~\ref{ssec:OrbitSeparationB}).
\item Numerical experiments for small parameters $d > 1$ and $n > 2$, based on \cite[Proposition~3.8 on p.~14]{Balan2025Permutation}, show that our results are, typically, suboptimal\footnote{For $n = 2$, our results are optimal.}; by which we mean that there exist $D < n(d-1)+1$ and $\bfA \in \bbR^{d \times D}$ such that $\SortEmbedding{\bfA}$ separates orbits (see the discussion following Theorem~\ref{thm:orbit_separation_beta} in Section~\ref{ssec:OrbitSeparation}).
\end{enumerate}

Following these results, we summarize the best known upper and lower bounds for the injectivity of $\QuotientSortEmbedding{\bfA}$, $\bar \beta_{\bfA,\linop}$ and $\QuotientSortEmbeddingTwo{\bfA}{\bfB}$ in Table \ref{tab:injectivity}. Our next results pertain to bi-Lipschitz distortion: 
\begin{enumerate}\addtocounter{enumi}{3}

\item We show that the distortion of $\bar{\beta}_{\bfA}$ cannot be better than a bound proportional to $\sqrt{n}$ (see Theorem~\ref{thm:LowerBoundDistortion} in Section~\ref{ssec:LowerBoundDistortion}). 

\item We give two probabilistic constructions (and an explicit construction for $d=2$) of $\bfA$, such that $\QuotientSortEmbedding{\bfA}$ achieves a bi-Lipschitz distortion which scales like $n^2$, but is independent of $d$ (see Section~\ref{par:FirstConstruction} and Theorems~\ref{thm:biLipschitzprobabilistic} as well as \ref{thm:DistortionUnitSphere} in Sections~\ref{par:SecondConstruction} and \ref{par:ThirdConstruction}). These results require $D$ to be on the order of $n^2d$ and $n^4d$, respectively. 

\item Using a sketching argument, we show that $\QuotientSortEmbedding{\bfA,\linop}$ with an embedding dimension proportional to $nd$, up to logarithmic terms, can achieve  similar bi-Lipschitz distortion to $\QuotientSortEmbedding{\bfA}$ (see Theorem~\ref{th-betaA,S} in Section~\ref{ssec:BiLipschitzBoundsBetaAL}). 
\end{enumerate}
 
\subsection{Related work}
\label{subsec:related_sliced_wasserstein}

\paragraph{Wasserstein distance} The constructions studied in this paper admit a natural interpretation in terms of Wasserstein and sliced Wasserstein distances. In particular, the embedding $\beta_{\mathbf A}$ can be viewed as a finite-dimensional, Monte--Carlo approximation of the sliced $2$-Wasserstein distance between empirical measures. We briefly recall the relevant definitions.

Let $\mu$ and $\nu$ be probability measures on a metric space $(X,d_X)$. The \emph{$p$-Wasserstein distance} is defined by
\begin{equation*}
    \mathrm{W}_p(\mu,\nu)
    :=
    \left(
        \inf_{\gamma \in \Pi(\mu,\nu)}
        \int_{X \times X} d_X(x,y)^p \, \dd \gamma(x,y)
    \right)^{1/p},
\end{equation*}
where $\Pi(\mu,\nu)$ denotes the set of transport plans having marginals $\mu$ and $\nu$. When $\mu$ and $\nu$ are uniform empirical measures supported on $n$ points in $\bbR^d$,
\begin{equation*}
    \mu = \frac{1}{n} \sum_{i=1}^n \delta_{\bfx_i},
    \qquad
    \nu = \frac{1}{n} \sum_{i=1}^n \delta_{\bfy_i},
\end{equation*}
with $\bfX,\bfY \in \bbR^{n\times d}$ containing $(\bfx_i)_{i=1}^n$ and $(\bfy_i)_{i=1}^n$ as rows, respectively, the $2$-Wasserstein distance admits the explicit representation
\begin{equation*}
    \mathrm{W}_2(\mu,\nu)^2
    =
    \min_{\sigma \in S_n}
    \frac{1}{n}
    \norm{\bfX - \sigma \bfY}_{\mathrm F}^2
    =
    \frac{1}{n} \operatorname{dist}(\bfX,\bfY)^2.
\end{equation*}
For general $d$, this minimization can be solved in $O(n^3)$ time using the Hungarian method~\cite{Kuhn1955}. In the special case $d=1$, however, the optimal permutation is obtained by sorting, yielding an $O(n\log n)$ algorithm.

Motivated by the computational simplicity of the one-dimensional case, \cite{Rabin2011WassersteinBA} introduced what is called the sliced $p$-Wasserstein distance, defined for Borel probability measures on $\bbR^d$ by
\begin{equation*}
    \mathrm{SW}_p(\mu,\nu)
    :=
    \left(
        \int_{S^{d-1}}
        \mathrm{W}_p\bigl(
            (\operatorname{proj}_{\bftheta})_\ast \mu,
            (\operatorname{proj}_{\bftheta})_\ast \nu
        \bigr)^p
        \, \dd \bftheta
    \right)^{1/p},
\end{equation*}
where $\operatorname{proj}_{\bftheta}\bfx = \bftheta^\top \bfx$ and $(\operatorname{proj}_{\bftheta})_\ast$ denotes the pushforward. In practice, this integral is approximated by Monte--Carlo sampling:
\begin{align*}
    \mathrm{SW}_p(\mu,\nu)^p
   & \approx
    \frac{1}{D}
    \sum_{k=1}^D
    \mathrm{W}_p\bigl(
        (\operatorname{proj}_{\bftheta_k})_\ast \mu,
        (\operatorname{proj}_{\bftheta_k})_\ast \nu
    \bigr)^p\\
&    =:
    \widetilde{\mathrm{SW}}_p(\mu,\nu;(\bftheta_k)_{k=1}^D)^p,
\end{align*}
where $(\bftheta_k)_{k=1}^D \subset S^{d-1}$ are sampled independently, for instance uniformly. When $\mu$ and $\nu$ are uniform empirical measures as above, one obtains
\begin{equation*}
    \mathrm{SW}_2(\mu,\nu)^2
    =
    \int_{S^{d-1}}
    \frac{1}{n}
    \norm{\sort{\bfX \bftheta} - \sort{\bfY \bftheta}}_2^2
    \, \dd \bftheta,
\end{equation*}
and therefore the sampled sliced distance satisfies
\begin{align*}
    \widetilde{\mathrm{SW}}_2(\mu,\nu;(\bftheta_k)_{k=1}^D)^2
    &=
    \frac{1}{nD}
    \sum_{k=1}^D
    \norm{\sort{\bfX \bftheta_k} - \sort{\bfY \bftheta_k}}_2^2
 \\
 &   =
    \frac{1}{nD}
    \norm{
        \beta_{\boldsymbol{\Theta}}(\bfX)
        -
        \beta_{\boldsymbol{\Theta}}(\bfY)
    }_{\mathrm F}^2,
\end{align*}
where $\boldsymbol{\Theta} \in \bbR^{d\times D}$ contains the directions $(\bftheta_k)$ as columns. Thus, the embedding $\beta_A$ with columns sampled from the unit sphere corresponds exactly to a finite-dimensional approximation of the sliced $2$-Wasserstein distance between empirical measures.

The distortion bounds established in Theorem~\ref{thm:DistortionUnitSphere} therefore translate directly into quantitative comparisons between the Wasserstein distance and its Monte--Carlo sliced approximation (see Corollary~\ref{cor:WassersteinPositive}). In particular, for $D \gtrsim dn^2\log(n\sqrt d + \log n)$, the sampled sliced distance provides, with high probability, a bi-Lipschitz approximation of $\mathrm{W}_2$ on empirical measures with support size $n$, with distortion of order $\widetilde O(n^2)$. Similarly, Theorem~\ref{thm:LowerBoundDistortion} provides a converse result in that the distortion cannot be better than a bound proportional to $\sqrt{n}$.

For $d=2$, distortion bounds of order $O(n^2)$ were obtained in~\cite{Carriere2017Sliced}. For higher dimensions, previously known bounds were substantially weaker~\cite{Weighill2023Coarse}. Our probabilistic constructions provide $O(n^2)$ distortion for all $d$. On the other hand, for measures with infinite support, bi-Lipschitz equivalence between Wasserstein and sliced Wasserstein distances is impossible~\cite{Bayraktar2021Strong}, although H\"older-type bounds are available~\cite{Bonnotte2013Unidimensional}.

\paragraph{Further Related Work: Max Filter Banks, Sorted Coorbits, and Rotation Groups} The max filter construction was introduced in \cite{Cahill2024GroupInvariant} and further expanded in \cite{Mixon2023Max,Mixon2025Injectivity,qaddura2025max}. 
For the problem considered here, the {\em max filter} associates to a {\em template} $\mathbf W \in\bbR^{n\times d}$ the function $\mathbf X\in\bbR^{n\times d}\mapsto f_{\mathbf W}(\mathbf X)= \max_{\sigma\in S_n}\operatorname{trace}(\sigma \mathbf W \mathbf X^T)$. The aforementioned works prove that $M=2nd+1$ generic templates $\mathbf W_1,\ldots,\mathbf W_M$ in $\bbR^{n\times d}$ produce a bi-Lipschitz orbit separating embedding $\mathbf X \mapsto F(\mathbf X)=(f_{\mathbf W_1}(\mathbf X),\ldots,f_{\mathbf W_{M}}(\mathbf X))\in\bbR^M$. 

For sorted coorbits, the approaches introduced in \cite{Balan2025Permutation} and \cite{Cahill2024GroupInvariant} have been unified and generalized in \cite{Balan2023GInvariantII}. In subsequent works \cite{Balan2023GInvariantI,Balan2024Stability}, this construction has been shown to provide bi-Lipschitz embeddings. On the other hand, \cite{Cahill2024Towards} shows that smooth G-invariant embeddings for finite groups cannot be bi-Lipschitz.
  
For rotation groups, it was shown in  \cite{Derksen2024BiLipschitz} that the square root of the Gram matrix yields a bi-Lipschitz rotation invariant mapping. In \cite{Amir2026Stability}, bi-Lipschitzness of the square root is discussed with respect to arbitrary unitary actions on generic low dimensional domains.  

    \section{Estimating Embedding Dimensions}\label{sec:TheoreticalResults}

    \subsection{Embedding Dimension of \texorpdfstring{$\beta_{\bfA}$}{beta}}
    \label{ssec:OrbitSeparation}

    We first show that the embedding $\SortEmbedding{\bfA}$ separates orbits for full spark matrices $\bfA \in \bbR^{\ncolumns \times \ntemplates}$ with $\ntemplates > \nrows(\ncolumns-1)$ scaling like a linear polynomial in $\nrows$ and $\ncolumns$. Thereby, we  improve on Theorem~\ref{thm:biLipschitzdeterministic} which required matrices with $\ntemplates \geq r\ncolumns((\nrows-1)^2+1)$ scaling linearly in $\ncolumns$ but quadratically in $\nrows$. Secondly, we show that there is a lower bound on $D$ (depending on $d$ and $n$) below which $\beta_\bfA$ cannot separate orbits. Finally, we improve on the results of \cite{Dym2024LowDimensional} which imply injectivity of $\bar \beta_\bfA$ for generic $\bfA$ with embedding dimension of  $D=2nd+1$.

    \begin{theorem}\label{thm:orbit_separation_beta}
        Let $n,D$ and $d>1$ be natural numbers, and let $\bfA \in \bbR^{\ncolumns \times \ntemplates}$ be a full spark matrix. If $\ntemplates \geq \nrows(\ncolumns-1)+1$, then $\QuotientSortEmbedding{\bfA}$ is injective.
    \end{theorem}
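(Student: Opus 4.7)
The plan is to prove this theorem by induction on $\nrows$. The base case $\nrows=1$ is immediate: the sort operation on $1$-vectors is trivial, so $\SortEmbedding{\bfA}(\bfX) = \SortEmbedding{\bfA}(\bfY)$ reduces to $(\bfX-\bfY)\bfa_k = 0$ for every $k \in [\ntemplates]$, and since $\ntemplates \geq \ncolumns$ and $\bfA$ is full spark, the columns span $\bbR^\ncolumns$ and force $\bfX = \bfY$.

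For the inductive step from $\nrows-1$ to $\nrows$, assume $\QuotientSortEmbedding{\bfA}(\bfX) = \QuotientSortEmbedding{\bfA}(\bfY)$, and for each $k \in [\ntemplates]$ pick $\sigma_k \in S_\nrows$ with $\bfX \bfa_k = \sigma_k \bfY \bfa_k$. Reading this off row $1$ gives $(\bfx_1 - \bfy_{\sigma_k(1)}) \cdot \bfa_k = 0$, so $\bfa_k$ lies in the hyperplane $H_j := (\bfx_1 - \bfy_j)^\perp$ with $j = \sigma_k(1)$. The key combinatorial step is a pigeonhole argument on row $1$: if $\bfx_1 \neq \bfy_j$ for every $j \in [\nrows]$, then each $H_j$ is a proper hyperplane of $\bbR^\ncolumns$ and, by full spark, contains at most $\ncolumns - 1$ of the columns of $\bfA$. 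Since $\sigma_k(1) \in [\nrows]$, every column lies in some $H_j$, which gives $\ntemplates \leq \nrows(\ncolumns-1)$ and contradicts $\ntemplates \geq \nrows(\ncolumns-1)+1$. Consequently, there exists $j^* \in [\nrows]$ with $\bfx_1 = \bfy_{j^*}$.

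To conclude the inductive step, let $\bfX', \bfY' \in \bbR^{(\nrows-1) \times \ncolumns}$ be obtained from $\bfX$ and $\bfY$ by deleting row $1$ and row $j^*$, respectively. The common value $v_k := \bfx_1 \cdot \bfa_k = \bfy_{j^*} \cdot \bfa_k$ occurs with equal multiplicities in $\sort{\bfX \bfa_k}$ and $\sort{\bfY \bfa_k}$, so removing one occurrence from each yields $\sort{\bfX' \bfa_k} = \sort{\bfY' \bfa_k}$ for all $k$. Since $\ntemplates \geq \nrows(\ncolumns-1)+1 \geq (\nrows-1)(\ncolumns-1)+1$ and $\bfA$ is still full spark, the inductive hypothesis gives $\bfX' \sim_{S_{\nrows-1}} \bfY'$, which together with $\bfx_1 = \bfy_{j^*}$ yields $\bfX \sim_{S_\nrows} \bfY$. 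The main obstacle is the pigeonhole observation in the inductive step: by focusing on a single row and only $\nrows$ candidate hyperplanes, the full-spark hypothesis converts $\ntemplates \geq \nrows(\ncolumns-1)+1$ directly into the existence of a matching row, bypassing the much weaker global pigeonhole over $S_\nrows$ that yields the $\ntemplates \geq \nrows!(\ncolumns-1)+1$ bound of Theorem~\ref{thm:BHS22}(2). The multiplicity bookkeeping in the reduction is routine but must be done carefully, since ties in $\bfX \bfa_k$ or $\bfY \bfa_k$ make the choice of $\sigma_k$ non-unique.
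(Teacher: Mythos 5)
Your proof is correct and is essentially the same argument as the paper's: both hinge on the identical pigeonhole observation that, if no row of $\bfX$ equals a row of $\bfY$, then each column $\bfa_k$ must lie in one of the $n$ proper hyperplanes $(\bfx_i-\bfy_j)^\perp$ for some $j$, and full spark caps each hyperplane at $d-1$ columns, forcing $D \le n(d-1)$. The only difference is presentational — you run a forward induction on $n$ with an explicit row-deletion step, whereas the paper phrases the same row-deletion as a "minimal counterexample" argument — so the two are logically interchangeable.
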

    
    \begin{proof}
        Given fixed $d,D\in \bbN$, consider the minimal $\nrows \in \bbN$ such that $\QuotientSortEmbedding{\bfA} : \bbR^{\nrows \times \ncolumns}/S_\nrows \to \bbR^{\nrows \times \ntemplates}$ is not injective. Then, there exist $\bfX,\bfY \in \bbR^{\nrows \times \ncolumns}$ such that $\bfX \not\sim_{S_n} \bfY$ and $\SortEmbedding{\bfA}(\bfX) = \SortEmbedding{\bfA}(\bfY)$. By the minimality of $\nrows$, no row $\bfx_i$ of $\bfX$ equals a row $\bfy_j$ of $\bfY$ (since we could otherwise delete those rows to contradict minimality).

        For each pair of rows $(\bfx_i,\bfy_j)$, consider the columns $\bfa_k$ of $\bfA$ which are perpendicular to $\bfx_i - \bfy_j$,
        \begin{equation*}
            I_{i,j} := \set{k \in [\ntemplates]}{ \bfx_i - \bfy_j \perp \bfa_k }, \qquad i,j \in [\nrows].
        \end{equation*}
        Since $\bfx_i \neq \bfy_j$ and $\bfA$ has full spark, $\abs{ I_{i,j} } \leq \ncolumns-1$.

        For each row $\bfx_i$ and each column $\bfa_k$, there must be a row $\bfy_j$ such that $k \in I_{i,j}$ because $\SortEmbedding{\bfA}(\bfX) = \SortEmbedding{\bfA}(\bfY)$. Therefore, $[\ntemplates] \subset \bigcup_{j = 1}^\nrows I_{i,j}$ which implies 
        \begin{equation*}
            \ntemplates \leq \sum_{j=1}^n |I_{i,j}|\leq  \nrows(\ncolumns-1).
        \end{equation*}
        The theorem is thus proven.
    \end{proof}

    Next, we obtain a lower bound on the embedding dimension.
    
    \begin{theorem}\label{thm:matousek2}
        Let $n,D$ and $d>1$ be natural numbers such that $\lceil D/(d-1) \rceil \leq \log_2(n)+1$. Then, for any $\bfA\in \mathbb{R}^{d\times D}$, the map $\QuotientSortEmbedding{\bfA}$ is \emph{not} injective. Equivalently, if the map $\QuotientSortEmbedding{\bfA}$ is injective then $D=m(d-1)-r$ for $m,r$ integers with $0\leq r\leq d-2$ and $m>\log_2(n)+1$.
    \end{theorem}
    
    \begin{proof}
        Let $\bfA$ be any matrix in $\mathbb{R}^{d\times D}$ and denote its columns by $\bfa_1,\dots,\bfa_D$. For $k=\lceil D / (d-1) \rceil$, we have $k(d-1) \geq D $. Thus, we can partition $[D]$ into $k$ different sets $J_1,\dots,J_k$ which are all of cardinality strictly less than $d$. For each set $J_j$, choose some nonzero vector $\bfv_j$ which is orthogonal to all $\bfa_i$, $i\in J_j$.  For a choice of real numbers $\alpha_1,\dots,\alpha_k$ and $I \subset [k]$, denote
        \begin{equation*}
            \bfv(I) := \sum_{i\in I} \alpha_i \bfv_i,
        \end{equation*}
        where $\bfv(I)$ is the zero vector when $I$ is the empty set. 
      We choose the $\alpha_i$ so that $\bfv(I) \neq 0$ for all $I$ with $|I|$ odd. 
        Lebesgue almost every choice of $\alpha_i$ fulfills this requirement.

        Now, let $\bfX$ be a matrix whose rows are all vectors $\bfv(I)$ with $|I|$ even, and let $\bfY$ be a matrix whose rows are all vectors $\bfv(I)$ with $|I|$ odd. The number of rows of $\bfX$ and $\bfY$ is the same, $n = 2^k/2=2^{k-1} $. By assumption all rows of $\bfY$ are non-zero, while $\bfX$ contains an all-zero row (corresponding to the empty set). Therefore, $\bfX$ and $\bfY$ are not related by a permutation. For every $i=1,\dots,D$, we have that $\bfa_i$ is in some $J_j$, and so is orthogonal to $\bfv_j $. It follows that, for all $I\subseteq [k]$, 
        \begin{equation*}
            \bfa_i^\top \bfv(I) = \bfa_i^\top \bfv(I \triangle \{ j\}),
        \end{equation*}
        where $\triangle$ denotes the symmetric difference. Since the map $I \mapsto I \triangle \{j\} $ is a bijection for the index sets of even cardinality to the index sets of odd cardinality, we deduce that
        \begin{equation*}
            {\downarrow}\begin{pmatrix}
                \bfa_i^\top \bfx_1 \\
                \vdots \\
                \bfa_i^\top \bfx_n
            \end{pmatrix} = {\downarrow}\begin{pmatrix}
                \bfa_i^\top \bfy_1 \\
                \vdots \\
                \bfa_i^\top \bfy_n
            \end{pmatrix},
        \end{equation*}
        where $(\bfx_i)_{i = 1}^n$, $(\bfy_i)_{i = 1}^n$ denote the rows of $\bfX$ and $\bfY$, respectively. Since this is true for all $i$, we see that $\QuotientSortEmbedding{\bfA}$ is not injective when $n \geq 2^{k-1}$, which is equivalent to
        \begin{equation*}
            \left\lceil \frac{D}{d-1} \right\rceil = k \leq \log_2(n)+1.
        \end{equation*}
        The theorem is thus proven.
    \end{proof}

    \begin{remark}
        The two theorems above, Theorem \ref{thm:orbit_separation_beta} and Theorem \ref{thm:matousek2}, are stated in \cite{matousek} for the case $d=2$ and using different but equivalent notation. Our contribution here is in extending the proof to the general case $d\geq 2$. Also, in \cite{matousek} it is shown that, for $d=2$, the logarithmic lower bound is nearly attainable: there exist constants $D_0$ and $c$ such that, for all generic matrices with $D \geq D_0$ rows, the mapping $\bar \beta_{\bfA}$ is injective whenever $n \leq 2^{cD/\log D}$; or, equivalently, when $\log_2(n) \lesssim D / \log D$. It remains unclear whether similar bounds hold when $d > 2$.
    \end{remark}

    We now visualize the preceding two results. According to Theorem~\ref{thm:orbit_separation_beta}, the map $\SortEmbedding{\bfA}$ separates orbits for full spark matrices $\bfA \in \mathbb{R}^{d \times D}$ with $d>1$ whenever $D \geq n(d-1)+1$. In contrast, Theorem~\ref{thm:matousek2} shows that $\SortEmbedding{\bfA}$ fails to separate orbits (independently of the choice of $\bfA$) whenever $\lceil \ntemplates / (d-1) \rceil \leq \log_2(n)+1$. For $n=2$, these bounds coincide and yield the sharp threshold $D \ge 2d-1$ for orbit separation (assuming $\bfA$ has full spark). Via the connection to real phase retrieval established in~\cite{Balan2023Relationships}, this recovers the classical result (see, e.g., \cite{Balan2006On}) that $2d-1$ measurements are necessary and sufficient for sign retrieval in $\mathbb{R}^d$ provided the measurement vectors form a full spark frame.
    
    For $n>2$, however, the upper and lower bounds no longer match. In general, it remains open whether either bound is sharp in this regime, although we present some tentative progress in this direction in Section~\ref{sec:NumericalResults}. Figure~\ref{fig:Gaps} illustrates the resulting gap between the non orbit separating region (on or below the lower curve) and the orbit separating region for full spark matrices (on or above the upper line). The gap widens as $n$ increases and becomes more pronounced for larger $d$.

    \begin{figure}
    	\centering
    	\begin{tikzpicture}[scale=0.9]
            % Axes
            \draw[->] (1.7,0.85) -- (10.3,0.85) node[above] {$n$};
            \draw[->] (1.7,0.85) -- (1.7,5.65) node[right] {$D$};
        
            \foreach \n in {2,...,10} {
                \draw (\n,0.8) -- (\n,0.9) node[below] {\n};
            }
            \foreach \D in {2,...,11} {
                \draw (1.65,{\D/2}) -- (1.75,{\D/2}) node[left] {\D};
                \ifthenelse{\D > 3}{\draw[dotted, help lines] (1.7,{\D/2}) -- (10,{\D/2});}{}
            }
            \draw[dotted, help lines] (1.7,1.5) -- (7,1.5);
            \draw[dotted, help lines] (1.7,1) -- (3,1);
        
            % Dotted lines
            \draw[dotted, domain=1.7:10.3] plot (\x, {(\x + 1)/2});
            \draw[dotted, domain=1.7:10.3] plot (\x, {(ln(\x)/ln(2)+1)/2});
            \node at (7.5,5) {$D = n+1$};
            \node at (8.9,1.5) {$D = \log_2(n)+1$};
        
            % Shaded region between the lines
            \fill[gray!20, opacity=0.5]
                plot[domain=1.7:10.3] (\x, {(\x + 1)/2}) --
                plot[domain=10.3:1.7] (\x, {(ln(\x)/ln(2)+1)/2}) --
                cycle;
            
            % Filled circles at integer d-values
            \foreach \n in {2,...,10} {
                \filldraw[black] (\n, {(\n + 1)/2}) circle (1pt);
                \filldraw[black] (\n, {floor(ln(\n)/ln(2)+1)/2}) circle (1pt);
                \draw[dotted, help lines] (\n,0.85) -- (\n, {(\n + 1)/2});

                \pgfmathsetmacro{\Dmin}{floor(ln(\n)/ln(2)+1)+1}
                \pgfmathsetmacro{\Dmax}{\n}

                \foreach \D in {\Dmin,...,\Dmax} {
                    \ifthenelse{\n > 2}{\fill[gray!80] (\n, {\D/2}) circle (1pt);}{}
                }
            }

            \filldraw[black] (4,2) circle (1pt);
            \filldraw[black] (5,2.5) circle (1pt);
        \end{tikzpicture}
    	\caption{
        Phase diagram for orbit separation when $d=2$. On and below the lower curve, $\SortEmbedding{\bfA}$ does not separate orbits for any choice of $\bfA \in \mathbb{R}^{2 \times D}$. On and above the upper line, $\SortEmbedding{\bfA}$ separates orbits provided that $\bfA$ has full spark. For larger $d$, the qualitative behavior remains the same, but the vertical scale changes by a factor of $(d-1)$. Black dots indicate parameter pairs $(n,D)$ for which it is known whether there exists a matrix $\bfA$ such that $\SortEmbedding{\bfA}$ separates orbits. Gray dots mark parameter pairs for which this question remains open. The two black dots at $(n,D)=(4,4)$ and $(5,5)$ correspond to cases where an explicit construction of such a matrix $\bfA$ is known (see Section~\ref{sec:NumericalResults}).}
    	\label{fig:Gaps}
    \end{figure}

\subsection{Embedding Dimension for \texorpdfstring{$\beta_{\bfA,\linop}$ and $\delta_{\bfA,\bfB}$}{projections} }
\label{ssec:OrbitSeparationB}

Theorem \ref{thm:matousek2} gives us a lower bound on the dimension $D$ for which $\beta_{\bfA}$ can be orbit separating which is proportional to $d\log_2(n)$. Since the output of $\beta_{\bfA}$ is $nD$ dimensional, the embedding dimension is in $\Omega(d\cdot n \log(n))$ at best. A better embedding dimension can be obtained by $\SortEmbeddingTwo{\bfA}{\bfB}$ and $\SortEmbedding{\bfA,\linop} $. In the following result, we show that $\SortEmbeddingTwo{\bfA}{\bfB}$ separates orbits for generic matrices $\bfA \in \bbR^{\ncolumns \times \ntemplates}$ and $\bfB \in \bbR^{\nrows \times \ntemplates}$ with $\ntemplates \geq (2\nrows-1)\ncolumns$. We thereby improve Theorem~\ref{thm:DG24} in which $\ntemplates \geq 2\nrows\ncolumns+1$ is required. We will then show a similar result for $\SortEmbedding{\bfA,\linop} $. 

    Our approach is based on, and improves upon, the proof of Theorem~\ref{thm:DG24} from \cite{Dym2024LowDimensional}. At a high level, the proof of Theorem~\ref{thm:DG24} uses dimension bounds and basic algebraic geometry to prove injectivity on the $nd$-dimensional domain $\RR^{n\times d}$, as long as $D\geq 2nd+1$. Our improvement is based on the observation that, to prove injectivity on $\RR^{n\times d}$, it is sufficient to prove injectivity on a lower-dimensional set. This observation utilizes the invariants of the action of $S_\nrows$ on $\bbR^{\nrows \times \ncolumns}$ (which are precisely the $\nrows \times \ncolumns$ matrices with constant columns). 
     
      For the proof, we will use some basic real algebraic terminology such as semi-algebraic sets and semi-algebraic functions. We recall the definition of these in Appendix \ref{app:semialgebraic}. We also use the following result from \cite{Dym2024LowDimensional} (cf.~also Amir et al.~\cite[Theorem~A.1 on p.~13]{Amir2023Neural}).

    \begin{theorem}[{Finite witness theorem; reformulation of \cite[Theorem~2.7 on p.~387]{Dym2024LowDimensional}}]
        Let $s,p$ be natural numbers, and let $\calS$ be a semialgebraic set of dimension $s$, let $f : \calS \times \bbR^p \to \bbR$ be a semialgebraic function and define the set
        \begin{equation*}
            \calN := \set{ x \in \calS }{ \forall \bftheta \in \bbR^p ,~ f(x,\bftheta) = 0 }.
        \end{equation*}
        If
        \begin{equation*}
            \dim \set{ \bftheta \in \bbR^p }{ f(x,\bftheta) = 0 } < p, \qquad \mbox{for all } x \in \calS \setminus \calN,
        \end{equation*}
        then there exists a semialgebraic set $\calR \subset \bbR^{p \times (s+1)}$ of dimension (strictly) less than $p(s+1)$ such that, for all $(\bftheta_1~\dots~\bftheta_{s+1}) \not\in \calR$,
        \begin{equation*}
            \calN = \set{ x \in \calS }{ \forall i \in [s+1] ,~ f(x,\bftheta_i) = 0 }.
        \end{equation*}
    \end{theorem}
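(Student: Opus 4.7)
The plan is to construct the bad set $\calR$ explicitly and bound its dimension by a fiber argument. Set
\begin{equation*}
\calR := \set{ (\bftheta_1,\dots,\bftheta_{s+1}) \in \bbR^{p \times (s+1)} }{ \exists\, x \in \calS \setminus \calN \text{ with } f(x,\bftheta_i) = 0 \text{ for all } i \in [s+1] }.
\end{equation*}
The conclusion of the theorem is then immediate from this choice: for $(\bftheta_1,\dots,\bftheta_{s+1}) \notin \calR$ any $x \in \calS$ satisfying $f(x,\bftheta_i)=0$ for every $i$ must lie in $\calN$, and the reverse inclusion $\calN \subseteq \set{x \in \calS}{f(x,\bftheta_i)=0 \text{ for all } i}$ is trivial from the definition of $\calN$. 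The work is therefore to show that $\calR$ is semi-algebraic and that $\dim \calR < p(s+1)$.

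First I would verify that $\calR$ is semi-algebraic. By Tarski--Seidenberg, $\calN$ is semi-algebraic, since it is obtained from the semi-algebraic set $\set{(x,\bftheta) \in \calS \times \bbR^p}{f(x,\bftheta)=0}$ by eliminating the universally quantified $\bftheta$. Hence $\calS \setminus \calN$ is semi-algebraic as well, and $\calR$ is the projection onto the $\bftheta$-coordinates of the semi-algebraic set
\begin{equation*}
T := \set{(x,\bftheta_1,\dots,\bftheta_{s+1}) \in (\calS\setminus\calN) \times \bbR^{p \times (s+1)}}{f(x,\bftheta_i) = 0 \text{ for all } i \in [s+1]},
\end{equation*}
hence again semi-algebraic.

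The key step is the bound $\dim \calR < p(s+1)$. Consider the projection $\pi : T \to \calS \setminus \calN$ onto the first coordinate. For each $x \in \calS\setminus\calN$ the fiber $\pi^{-1}(x)$ is $E_x^{s+1}$, where $E_x := \set{\bftheta \in \bbR^p}{f(x,\bftheta) = 0}$. By hypothesis $\dim E_x \leq p-1$, so the product formula for semi-algebraic dimension gives $\dim E_x^{s+1} \leq (s+1)(p-1)$, uniformly in $x$. The standard semi-algebraic fiber-dimension inequality then yields
\begin{equation*}
\dim T \;\leq\; \dim(\calS \setminus \calN) + \sup_{x \in \calS\setminus\calN} \dim \pi^{-1}(x) \;\leq\; s + (s+1)(p-1) \;=\; p(s+1) - 1.
\end{equation*}
Since semi-algebraic projections do not increase dimension, $\dim \calR \leq \dim T \leq p(s+1) - 1 < p(s+1)$, as required.

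The main subtlety is the correct application of the fiber-dimension inequality: it requires a uniform upper bound on the fiber dimension across the base, which is precisely what the hypothesis $\dim \set{\bftheta}{f(x,\bftheta) = 0} < p$ supplies for every $x \in \calS \setminus \calN$. All other ingredients, namely Tarski--Seidenberg, the product and projection behavior of semi-algebraic dimension, and the elementary set-theoretic manipulation in the first paragraph, are standard tools from real algebraic geometry and can be cited from Bochnak--Coste--Roy without further comment.
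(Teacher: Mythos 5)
The paper does not prove this theorem itself; it cites it as a reformulation of~\cite[Theorem~2.7]{Dym2024LowDimensional}, so there is no in-paper proof to compare against. Your argument is correct and self-contained: defining $\calR$ outright as the set of witness-tuples $(\bftheta_1,\dots,\bftheta_{s+1})$ that fail, observing that the claimed equality of $\calN$ with the common zero set holds tautologically outside $\calR$, establishing semi-algebraicity of $\calR$ via Tarski--Seidenberg (projection of the semi-algebraic set $T$), and bounding the dimension of $T$ by applying the semi-algebraic fiber-dimension theorem to the projection $\pi:T\to\calS\setminus\calN$, whose fiber over $x$ is $E_x^{s+1}$ with $\dim E_x \leq p-1$. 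The arithmetic $s + (s+1)(p-1) = p(s+1)-1$ checks out, and the final step $\dim\calR \leq \dim T$ correctly uses that semi-algebraic images cannot increase dimension. All the tools you invoke (Tarski--Seidenberg, additivity of dimension under Cartesian products, the fiber-dimension bound $\dim T \leq \dim(\text{base}) + \sup\dim(\text{fiber})$) are indeed standard results in Bochnak--Coste--Roy.

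One point worth flagging about route: the original Dym--Gortler proof proceeds by induction, choosing $\bftheta_1,\bftheta_2,\dots$ sequentially so that each new constraint generically drops the dimension of the remaining undetermined locus $\set{x\in\calS\setminus\calN}{f(x,\bftheta_1)=\cdots=f(x,\bftheta_k)=0}$ by at least one, reaching the empty set after $s+1$ steps. Your argument replaces this iteration with a single global fiber-dimension count on the incidence variety $T$. The two give the same bound, but yours packages the induction into one application of the fiber formula, which makes the source of the count $p(s+1)-1$ transparent. The only thing I would add for completeness is to cite the precise fiber-dimension result (e.g.\ BCR, the proposition computing $\dim S$ from the dimensions of the fibers of a projection), since that is the load-bearing step.
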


    \begin{remark}[Lower dimensional semialgebraic sets have rare closures]
        Since $\calR \subset \bbR^{p \times (s+1)}$ has dimension (strictly) less than $p(s+1)$, the same is true for its closure in the Euclidean and Zariski topology \cite[Proposition~2.8.2 on p.~50]{Bochnak1998Real}. Therefore, $\calR$ is rare/nowhere dense in both \cite[Proposition~2.8.4 on p.~51]{Bochnak1998Real}.
    \end{remark}

    We now present our result on orbit separation for $\SortEmbeddingTwo{\bfA}{\bfB}$.

    \begin{theorem}\label{thm:dimensionreduction}
     Let $d,n,D$ be natural numbers.    If $\ntemplates \geq (2\nrows - 1)\ncolumns$, there exists a semialgebraic set $\calR \subset \bbR^{(\nrows+\ncolumns) \times \ntemplates} \simeq \bbR^{\ncolumns \times \ntemplates} \times \bbR^{\nrows \times \ntemplates}$ of dimension strictly less than $(\nrows + \ncolumns)\ntemplates$ such that $\smash{\QuotientSortEmbeddingTwo{\bfA}{\bfB}}$ is injective for all $(\bfA,\bfB) \not\in \mathcal{R}$. Equivalently, $\smash{\QuotientSortEmbeddingTwo{\bfA}{\bfB}}$ is injective for {\em generic} pairs  $(\bfA,\bfB)\in\bbR^{(\nrows+\ncolumns) \times \ntemplates} $, where {\em generic} is understood in the sense of the Zariski topology.
    \end{theorem}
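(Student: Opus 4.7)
The plan is to apply the finite witness theorem stated above to a witness function built from $\SortEmbeddingTwo{\bfA}{\bfB}$, after using two natural invariances of the problem to reduce the configuration space from dimension $2nd$ down to $(2n-1)d-1$. Both the $S_n$-equivalence relation on $\bbR^{n\times d}$ and the equation $\SortEmbeddingTwo{\bfA}{\bfB}(\bfX)=\SortEmbeddingTwo{\bfA}{\bfB}(\bfY)$ are preserved by the simultaneous translation $(\bfX,\bfY)\mapsto(\bfX+\bfone_n\bfc^\top,\bfY+\bfone_n\bfc^\top)$ for every $\bfc\in\bbR^d$: indeed $\sigma\bfone_n=\bfone_n$ for every $\sigma\in S_n$, and
\begin{equation*}
\sort{(\bfX+\bfone_n\bfc^\top)\bfa_k}\;=\;\sort{\bfX\bfa_k}+(\bfc^\top\bfa_k)\bfone_n,
\end{equation*}
so both sorted projections receive the same additive shift. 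They are also preserved by positive scaling $(\bfX,\bfY)\mapsto(t\bfX,t\bfY)$, since $\sort{t\bfv}=t\sort{\bfv}$ for $t>0$. Consequently $\QuotientSortEmbeddingTwo{\bfA}{\bfB}$ is injective as a map on orbits if and only if $\SortEmbeddingTwo{\bfA}{\bfB}$ separates all pairs in the semialgebraic set
\begin{equation*}
\calS \;:=\; \set{(\bfX,\bfY)\in\bbR^{n\times d}\times\bbR^{n\times d}}{\bfX^\top\bfone_n=\bfnull_d,\ \norm{\bfX}_\mathrm{F}^2+\norm{\bfY}_\mathrm{F}^2=1,\ \bfX\not\sim_{S_n}\bfY},
\end{equation*}
which has dimension $s:=(n-1)d+nd-1=(2n-1)d-1$. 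Setting $p:=n+d$, I would then introduce the semialgebraic witness function
\begin{equation*}
f\bigl((\bfX,\bfY),(\bfa,\bfb)\bigr)\;:=\;\bfb^\top\sort{\bfX\bfa}-\bfb^\top\sort{\bfY\bfa}
\end{equation*}
on $\calS\times(\bbR^d\times\bbR^n)$, whose joint vanishing is what the finite witness theorem will control.

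The central technical step is to show that for every $(\bfX,\bfY)\in\calS$ the fiber $\set{(\bfa,\bfb)}{f((\bfX,\bfY),(\bfa,\bfb))=0}$ has dimension strictly less than $p$. I would stratify it along the closed semialgebraic set $E(\bfX,\bfY):=\set{\bfa\in\bbR^d}{\sort{\bfX\bfa}=\sort{\bfY\bfa}}$. The coordinate orderings of $\bfX\bfa$ and $\bfY\bfa$ induce a common polyhedral subdivision of $\bbR^d$, on each open cell of which $\sort{\bfX\bfa}=\bfP\bfX\bfa$ and $\sort{\bfY\bfa}=\bfQ\bfY\bfa$ for fixed permutation matrices $\bfP,\bfQ\in S_n$. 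If $E(\bfX,\bfY)$ contained a full-dimensional region then $\bfP\bfX\bfa=\bfQ\bfY\bfa$ on an open set of $\bfa\in\bbR^d$, forcing $\bfP\bfX=\bfQ\bfY$ and hence $\bfX\sim_{S_n}\bfY$, contradicting $(\bfX,\bfY)\in\calS$. Therefore $\dim E(\bfX,\bfY)\leq d-1$, and the fiber is contained in
\begin{equation*}
\bigl(E(\bfX,\bfY)\times\bbR^n\bigr)\;\cup\;\set{(\bfa,\bfb)\in\bbR^d\times\bbR^n}{\bfa\notin E(\bfX,\bfY),\ \bfb^\top\bigl(\sort{\bfX\bfa}-\sort{\bfY\bfa}\bigr)=0},
\end{equation*}
whose two pieces have dimensions at most $(d-1)+n<p$ and $d+(n-1)<p$, respectively. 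In particular the set $\calN\subset\calS$ on which $f((\bfX,\bfY),\cdot)\equiv 0$ is empty.

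Applying the finite witness theorem with these $\calS,f,s,p$ then produces a semialgebraic set $\calR_0\subset\bbR^{p\times(s+1)}=\bbR^{(n+d)\times(2n-1)d}$ of dimension strictly less than $(n+d)(2n-1)d$ such that, whenever $(\bftheta_1,\ldots,\bftheta_{(2n-1)d})\notin\calR_0$, no $(\bfX,\bfY)\in\calS$ satisfies $f((\bfX,\bfY),\bftheta_k)=0$ for every $k$—equivalently $\QuotientSortEmbeddingTwo{\bfA}{\bfB}$ is injective. For $D>(2n-1)d$ I would take the bad set $\calR\subset\bbR^{(n+d)\times D}$ to be the preimage of $\calR_0$ under forgetting any fixed set of $D-(2n-1)d$ columns of $(\bfA,\bfB)$; by the subcollection argument (more columns can only help) this captures all non-injective $(\bfA,\bfB)$ and retains codimension at least one in $\bbR^{(n+d)\times D}$. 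The main obstacle I anticipate is precisely the fiber-dimension bound $\dim E(\bfX,\bfY)<d$ when $\bfX\not\sim_{S_n}\bfY$: one must promote equality of sorted projections on a full-dimensional set of directions into an honest permutation identity, leaning on the piecewise-linear structure of $\sort{\cdot}$. With that step in hand, the translation-plus-scaling reduction above together with the outer application of the finite witness theorem are largely bookkeeping.
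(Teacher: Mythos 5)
Your proposal is correct and follows essentially the same route as the paper: reduce to the centered, normalized set $\calS$ via the translation and scaling symmetries, define the linear-in-$\bfb$ witness function $f$, bound the fiber dimension, and invoke the finite witness theorem. Two organizational choices differ from the paper, and both are perfectly sound. First, you carve the set $\set{\bfX\sim_{S_n}\bfY}$ out of $\calS$ from the start, so $\calN=\emptyset$ and the FWT conclusion becomes ``the vanishing locus of all witnesses is empty''; the paper instead keeps those pairs in $\calS$, identifies $\calN=\set{\bfX\sim_{S_n}\bfY}$ by appealing to Theorem~\ref{thm:orbit_separation_beta}, and then reads off injectivity from the FWT. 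Your variant avoids that auxiliary appeal entirely. Second, you supply a direct proof of the key fiber-dimension bound via the polyhedral subdivision of $\bbR^d$ on which $\sort{\bfX\,\cdot\,}$ and $\sort{\bfY\,\cdot\,}$ are linear: if $E(\bfX,\bfY)$ were $d$-dimensional it would contain an open cell on which $\bfP\bfX\bfa=\bfQ\bfY\bfa$, forcing $\bfP\bfX=\bfQ\bfY$. The paper cites the corresponding step in \cite[Proposition~3.1]{Dym2024LowDimensional} rather than proving it, so this is the same lemma argued rather than referenced, and it is the genuinely load-bearing piece of the proof. Your handling of $D>(2n-1)d$ by pulling $\calR_0$ back along a coordinate projection is also the same ``more columns can only help'' reduction the paper performs up front. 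No gaps.
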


    \begin{proof}
       First, we make the simple observation that it suffices to prove the claim for $\ntemplates = (2\nrows - 1)\ncolumns$ since adding more measurements to an already injective map can never result in a map that is not injective. 

       Now, the main observation the proof is based on is that the symmetries of $\beta_\bfA$ can be exploited to reduce the dimension of the domain on which injectivity needs to be proven. The first of these symmetries is homogeneity: for all $t>0$ we have that $\beta_\bfA(t\bfX)=t\beta_\bfA(\bfX) $. The second symmetry is translation: namely, when applying a translation of $\bfX$ by a vector $\bfz\in \RR^d $ we obtain 
       \begin{equation}\label{eq:translation_symmetry}
        \SortEmbedding{\bfA}(\bfX+\bfone_\nrows \bfz^\top)=\SortEmbedding{\bfA}(\bfX)+\SortEmbedding{\bfA}(\bfone_\nrows\bfz^\top).
       \end{equation}
      Due to these symmetries, it suffices to show that $\SortEmbeddingTwo{\bfA}{\bfB}(\bfX) = \SortEmbeddingTwo{\bfA}{\bfB}(\bfY)$ implies $\bfX \sim_{S_\nrows} \bfY$ on the semialgebraic set 
        \begin{equation}\label{eq:calS}
        	\calS := \{(\bfX,\bfY) \in \mathbb{R}^{\nrows \times \ncolumns} \times \mathbb{R}^{\nrows \times \ncolumns} \,|\, \bfone_\nrows^\top \bfX = \bfnull_\ncolumns,~\norm{\bfX}_\mathrm{F}^2+\norm{\bfY}_\mathrm{F}^2 = 1 \}
        \end{equation}
of dimension $(2\nrows - 1)\ncolumns - 1$: indeed, if the above is true and if $\SortEmbeddingTwo{\bfA}{\bfB}(\bfX) = \SortEmbeddingTwo{\bfA}{\bfB}(\bfY)$ for general $\bfX,\bfY \in \mathbb{R}^{\nrows \times \ncolumns}$, 
then we may subtract the column wise mean of $\bfX$ from both $\bfX$ and $\bfY$ and normalize\footnotemark ~the result to obtain a tuple $(\bfX',\bfY') \in \calS$, which due to homogeneity and the translation symmetry will satisfy  $\SortEmbeddingTwo{\bfA}{\bfB}(\bfX') = \SortEmbeddingTwo{\bfA}{\bfB}(\bfY')$. By assumption, we have $\bfX' \sim_{S_\nrows} \bfY'$ which, in turn, implies that $\bfX \sim_{S_\nrows} \bfY$.
\footnotetext{This normalization will not be possible if both $\bfX$ and $\bfY$ are zero after translation by the mean of $\bfX$ but in this case $\bfX=\bfY$.}

Now, consider the semialgebraic function $f : \calS \times \bbR^{\nrows + \ncolumns} \to \bbR$ given by 
        \begin{equation*}
            f((\bfX,\bfY),(\bfa,\bfb)) := \mathbf{b}^\top \left( \sort{\mathbf{X} \mathbf{a}} - \sort{\mathbf{Y} \mathbf{a}} \right),
        \end{equation*}
        for $(\bfX,\bfY) \in \calS$ and $(\bfa,\bfb) \in \bbR^\ncolumns \times \bbR^\nrows \simeq \bbR^{\nrows + \ncolumns}$. The set 
        \begin{equation*}
            \calN = \lbrace (\bfX,\bfY) \in \calS \,|\, \forall (\bfa,\bfb) \in \bbR^\ncolumns \times \bbR^\nrows ,~f((\bfX,\bfY),(\bfa,\bfb)) = 0 \rbrace
        \end{equation*}
        is exactly $\set{(\bfX,\bfY) \in \calS}{\bfX \sim_{S_\nrows} \bfY}$: indeed, fix arbitrary $\bfa \in \bbR^\ncolumns$ and note that the fact that  $    \forall \bfb \in \bbR^\nrows ,~ f((\bfX,\bfY),(\bfa,\bfb)) = 0  $ implies that $\sort{\mathbf{X} \mathbf{a}} - \sort{\mathbf{Y} \mathbf{a}}  $ is orthogonal to all vectors $\bfb \in \bbR^\nrows $, and therefore, that  $\sort{\mathbf{X} \mathbf{a}} -=\sort{\mathbf{Y} \mathbf{a}}  $.
        
        Since $\bfa \in \bbR^\ncolumns$ was arbitrary, the above continues to hold for the columns of a full spark matrix $\bfA \in \bbR^{\ncolumns \times \ntemplates'}$ with $\ntemplates' > \nrows(\ncolumns-1)$. Therefore, Theorem~\ref{thm:orbit_separation_beta} implies that $\bfX \sim_{S_\nrows} \bfY$. We have shown   that $\calN \subset \set{(\bfX,\bfY) \in \calS}{\bfX \sim_{S_\nrows} \bfY}$. The reverse direction is obvious.
        
        In the proof of \cite[Proposition~3.1 on p.~393]{Dym2024LowDimensional}, it is shown that
        \begin{equation*}
            \dim \set{ (\bfa,\bfb) \in \bbR^\ncolumns \times \bbR^\nrows }{ f((\bfX,\bfY),(\bfa,\bfb)) = 0 } < \nrows + \ncolumns
        \end{equation*}
        for all $(\bfX,\bfY) \in \calS \setminus \calN$. Therefore, the finite witness theorem implies that there exists a semialgebraic set $\calR \subset \bbR^{(\nrows + \ncolumns) \times (2 \nrows - 1) \ncolumns}$ of dimension (strictly) less than $(\nrows + \ncolumns)(2 \nrows - 1) \ncolumns$ such that for all $(\bfA,\bfB) := ((\bfa_1~\dots~\bfa_\ntemplates),(\bfb_1~\dots~\bfb_\ntemplates)) \not\in \mathcal{R}$, we have that
        \begin{align*}
            \MoveEqLeft[3] \set{(\bfX,\bfY) \in \calS}{\bfX \sim_{S_\nrows} \bfY} \\
            ={}& \lbrace (\bfX,\bfY) \in \calS \,|\, \forall i \in [(2 \nrows - 1) \ncolumns],~f((\bfX,\bfY),(\bfa_i,\bfb_i)) = 0 \rbrace \\
	        ={}& \set{ (\bfX,\bfY) \in \calS }{ \SortEmbeddingTwo{\bfA}{\bfB}(\mathbf{X}) = \SortEmbeddingTwo{\bfA}{\bfB}(\mathbf{Y}) }.
        \end{align*}
        The theorem is thus proven.
    \end{proof}

 We now present our result on orbit separation for $\beta_{\bfA,\linop}=\linop\circ\beta_\bfA$.
 
\begin{theorem}\label{thm:dimensionreduction2}
  Let $n,d, D, M$ be natural numbers so that  $D\geq n(d-1)+1$ and $M\geq (2\nrows - 1)\ncolumns$. Let $\bfA\in \RR^{d\times D}$ be a full spark matrix. Then there exists a closed algebraic set $\calR \subset \set{\linop:\bbR^{n\times D}\rightarrow\bbR^{M}}{\linop\,{\rm linear}}\simeq\bbR^{ M \times(\nrows D)} $ of dimension strictly less than $\nrows  DM$, such that $\bar \beta_{\bfA,\linop}$ is injective for all $\linop \not\in \mathcal{R}$.  Consequently, $\bar \beta_{\bfA,\linop}$ is injective for {\em generic} pairs  $(\bfA,\linopmat)\in\bbR^{\ncolumns \times \ntemplates} \times \bbR^{M\times (nD)}$, where {\em generic} is understood in the sense of the Zariski topology.
\end{theorem}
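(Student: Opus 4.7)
The plan is to adapt the proof of Theorem~\ref{thm:dimensionreduction}, but replacing the two-parameter family $f((\bfX,\bfY),(\bfa,\bfb)) = \bfb^\top(\sort{\bfX\bfa}-\sort{\bfY\bfa})$ with a single scalar linear functional on $\bbR^{\nrows D}$ indexed by one row of $\linopmat$, and using Theorem~\ref{thm:orbit_separation_beta} (rather than a further full-spark argument) to identify its vanishing set. The hypotheses $D \geq \nrows(\ncolumns-1)+1$ and $\bfA$ full spark already guarantee that $\beta_\bfA$ itself separates orbits, so the only remaining task is to show that a generic linear projection of its range preserves injectivity.

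Exactly as in the proof of Theorem~\ref{thm:dimensionreduction}, the homogeneity of $\beta_\bfA$ together with the translation identity~\eqref{eq:translation_symmetry} (which persists after composition with the linear map $\linop$) reduce the problem to proving the implication on the semialgebraic set $\calS$ from~\eqref{eq:calS}, whose dimension is $s = (2\nrows-1)\ncolumns - 1$. Next, define the semialgebraic function $f : \calS \times \bbR^{\nrows D} \to \bbR$ by
$$f\bigl((\bfX,\bfY),\boldsymbol{\ell}\bigr) := \boldsymbol{\ell}^\top \operatorname{vec}\bigl(\SortEmbedding{\bfA}(\bfX) - \SortEmbedding{\bfA}(\bfY)\bigr),$$
and set $\calN := \set{(\bfX,\bfY)\in\calS}{\forall \boldsymbol{\ell}\in\bbR^{\nrows D}:f((\bfX,\bfY),\boldsymbol{\ell})=0}$. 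Since a vector is zero iff it is annihilated by every linear functional, $\calN = \set{(\bfX,\bfY)\in\calS}{\SortEmbedding{\bfA}(\bfX)=\SortEmbedding{\bfA}(\bfY)}$, and Theorem~\ref{thm:orbit_separation_beta} applied to the given full-spark $\bfA$ identifies this in turn with $\set{(\bfX,\bfY)\in\calS}{\bfX\sim_{S_\nrows}\bfY}$. For any $(\bfX,\bfY)\in\calS\setminus\calN$, the vector $\operatorname{vec}(\SortEmbedding{\bfA}(\bfX)-\SortEmbedding{\bfA}(\bfY))$ is nonzero, hence $\set{\boldsymbol{\ell}\in\bbR^{\nrows D}}{f((\bfX,\bfY),\boldsymbol{\ell})=0}$ is a hyperplane of dimension $\nrows D - 1 < \nrows D$, verifying the hypothesis of the finite witness theorem with $p = \nrows D$ and $s+1 = (2\nrows-1)\ncolumns$.

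The finite witness theorem then produces a semialgebraic set $\calR_0\subset\bbR^{\nrows D\times(2\nrows-1)\ncolumns}$ of dimension strictly less than $\nrows D\cdot(2\nrows-1)\ncolumns$ such that any tuple of row-vectors outside $\calR_0$ witnesses $\calN$. Since $M\geq(2\nrows-1)\ncolumns$, the first $(2\nrows-1)\ncolumns$ rows of $\linopmat$ may be used as such a witness tuple: letting $\pi:\bbR^{M\times \nrows D}\to \bbR^{\nrows D\times (2\nrows-1)\ncolumns}$ be the (transposed) projection onto those rows, the preimage $\pi^{-1}(\calR_0)$ is semialgebraic of dimension at most $\dim\calR_0+(M-(2\nrows-1)\ncolumns)\nrows D < M\nrows D$. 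Taking its Zariski closure, which by the remark following the finite witness theorem preserves the strict dimension inequality, yields the desired closed algebraic set $\calR$. The main bookkeeping obstacle is simply tracking that the homogeneity/translation reduction transfers the witness-theorem conclusion from $\calS$ back to all of $\bbR^{\nrows\times\ncolumns}$; this transfer is straightforward because $\linop$ is linear, so both symmetries of $\SortEmbedding{\bfA}$ pass directly through to $\bar\beta_{\bfA,\linop}$.
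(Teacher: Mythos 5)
Your proof is correct, but it takes a genuinely different route from the paper's. You adapt the finite-witness-theorem machinery from Theorem~\ref{thm:dimensionreduction}: reduce to the slice $\calS$ via homogeneity and translation invariance (both of which indeed pass through the linear map $\linop$), apply the finite witness theorem to the scalar functional $\boldsymbol{\ell}^\top\operatorname{vec}(\beta_\bfA(\bfX)-\beta_\bfA(\bfY))$ with $p=\nrows D$ and $s+1=(2\nrows-1)\ncolumns$, and then pull back the resulting bad set through the projection onto $(2\nrows-1)\ncolumns$ rows and take a Zariski closure to get a closed algebraic set of codimension at least one. All the dimension arithmetic checks out, and identifying $\calN$ via Theorem~\ref{thm:orbit_separation_beta} is exactly the right move. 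The paper instead argues by elementary linear algebra: it observes that the set $\set{\beta_\bfA(\bfX)-\beta_\bfA(\bfY)}{\bfX,\bfY}$ lies in a finite union $\calW=\bigcup_\bfP\Ran(\Phi_\bfP)$ of subspaces of dimension at most $2\nrows\ncolumns-\ncolumns$, defines $\calR_\bfP$ as the determinantal variety of operators $\linopmat$ whose kernel meets $\Ran(\Phi_\bfP)$ nontrivially, shows $\calR_\bfP$ is a proper closed algebraic subset by exhibiting an explicit operator in its complement, and sets $\calR=\bigcup_\bfP\calR_\bfP$. The paper's construction is more explicit (it identifies $\calR$ as a concrete union of minor-vanishing loci and avoids invoking the finite witness theorem at all), while yours is more uniform with the proof of Theorem~\ref{thm:dimensionreduction} and requires less bookkeeping over permutation tuples. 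One small stylistic point: you appeal to the Zariski-closure dimension remark stated after the finite witness theorem, which is fine, but you should make clear that the preimage $\pi^{-1}(\calR_0)$ is already semialgebraic so the cited Bochnak result applies; the paper sidesteps this entirely by producing an algebraic set directly.
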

\begin{proof}

This proof uses elementary results from linear algebra and constructs a closed algebraic set $\calR$ that satisfies the desired properties.  As in the previous theorem, we may assume without loss of generality that $M=2nd-d$.

First, recall that, if $\bfA$ has full spark, then by Theorem~\ref{thm:orbit_separation_beta}, the map $\bar{\beta}_\bfA$ is injective. Next, let
\[
    \bfP = (\bfP_1,\ldots,\bfP_D,\bfP_{D+1},\ldots,\bfP_{2D})\in S_n^{2D}.
\]
Define the linear map $\Phi_\bfP:\bbR^{n\times d}\times\bbR^{n\times d}\rightarrow\bbR^{n\times D}$ by
\[
    \Phi_\bfP(\bfX,\bfY) := \begin{pmatrix}
        (\bfP_i \bfX - \bfP_{D+i}\bfY)\bfa_i
    \end{pmatrix}_{i=1}^{D}.
\]
Observe that, for any $\bfz \in\bbR^d$, we have 
\[
    \Phi_\bfP(\bfone_\nrows\bfz^\top,\bfone_\nrows\bfz^\top)=\bfnull_{n \times D}
\]
so $\dim \ker(\Phi_\bfP) \geq d$, and by the rank-nullity theorem, $\dim \Ran(\Phi_\bfP)\leq 2nd-d$. 

Next, observe that the set
\[ \set{ \beta_{\bfA}(\bfX)-\beta_{\bfA}(\bfY) }{(\bfX,\bfY)\in\bbR^{n\times d}\times\bbR^{n\times d} } \]
is contained in 
\[ {\cal W}:=\bigcup_{\bfP\in S_n^{2D}} \Ran(\Phi_\bfP).\]
The set ${\cal W}$ is a finite union of linear subspaces, each of dimension at most $2nd-d$,  and hence an algebraic set. 

For each $\bfP\in(S_n)^{2D}$, let $\set{ \bfe_i^{(\bfP)} }{ 1\leq i \leq \dim\Ran(\Phi_\bfP)}$ be a basis for $\Ran(\Phi_\bfP)$. Define
\[ {\calR} = \bigcup_{\bfP\in(S_n)^{2D}}
{\cal R}_\bfP\]
where 
\[ {\cal R}_\bfP :=
\set*{ \linopmat\in\bbR^{M\times nD} }{ \ker(\linopmat) \cap \Ran(\Phi_\bfP) \neq\{\bfnull_{nD}\} }. \]
We claim that that each ${\cal R}_{\bfP}$ is  a closed algebraic subset of dimension strictly less than $nDM$, and hence ${\cal R}$ itself is a closed algebraic set of dimension less than $nDM$.

To show this, fix $\bfP\in(S_n)^{2D}$ and let $p = \dim\Ran(\Phi_\bfP)$. Define a matrix $\boldsymbol{M}\in\bbR^{M\times p}$ whose $i^{\rm th}$ column is $\smash{\linopmat \mathbf e_i^{(\bfP)} \in\bbR^{M}}$, for $1\leq i\leq p$. Then, $\linopmat\in\calR_\bfP$ if and only if $\operatorname{rank}(\boldsymbol{M})<p$. Since $M \geq 2nd-d\geq p$, this condition is equivalent to the vanishing of all $p \times p$ minors of $\boldsymbol{M}$, which can be expressed as polynomial equations in the entries of $\linopmat$. Hence, $\calR$ is a closed algebraic set. 

To show that its dimension is strictly less than $nDM$ (the dimension of the ambient space of linear operators $\linop:\bbR^{n\times D}\rightarrow \bbR^{M}$), it suffices to show that the complement of $\calR_\bfP$ is nonempty. 
In other words, we need to show there exists some $\linopmat$ such that $\ker(\linopmat)\cap\Ran(\Phi_\bfP)=\{\bfnull_{nD}\}$. 
To construct such an $\linopmat$, consider a full-rank $\linopmat_1 \in\bbR^{M \times nD}$. Then, $\dim\ker(\linopmat_1)=nD-M\leq nD-p$. The orthogonal complement $\Ran(\Phi_\bfP)^\perp$ has dimension $nD-p$. 
Thus, we can choose an invertible (even orthogonal) transformation $\mathbf{T}$ such that $\mathbf{T} \ker(\linopmat_1)\subset \Ran(\Phi_\bfP)^{\perp}$. Define $\linopmat = \linopmat_1 \mathbf{T}^{-1}$. 
Then $\ker(\linopmat)=\mathbf{T} \ker(\linopmat_1)$, and so $\ker(\linopmat)\perp\Ran(\Phi_\bfP)$, implying $\ker(\linopmat)\cap\Ran(\Phi_\bfP)=\{\bfnull_{nD}\}$. Thus, $\calR_\bfP$ has nonempty complement and dimension stricly less than $nDM$. This proves the claim.

 Finally, suppose $\linopmat \not\in {\cal R}$. Then, for all $\bfP\in(S_n)^{2D}$, we have $\ker(\linopmat) \cap \Ran(\Phi_\bfP) = \{\bfnull_{nD}\}$, which implies $\ker(\linopmat)\cap {\cal W}=\{\bfnull_{nD}\}$. Now, suppose $\bfX,\bfY\in\bbR^{n\times d}$ satisfy  $\beta_{\bfA,\linopmat}(\bfX)=\beta_{\bfA,\linopmat}(\bfY)$. Then, $\beta_{\bfA}(\bfX)-\beta_{\bfA}(\bfY) \in {\cal W}\cap\ker(\linopmat) =\{\bfnull_{nD}\}$. Since $\bar{\beta}_\bfA$ is injective, it follows that $\bfY=\mathbf{Q} \bfX$ for some permutation matrix $\mathbf{Q}\in S_n$. This concludes the proof.
\end{proof}

    \begin{remark}[On a generalization due to two of the authors]
        Two of the authors of this paper generalized the above idea of dimension reduction using symmetries to the more general setting in which a finite group $G$ acts by isometries on a $d_V$-dimensional real vector space $V$ \cite[Theorem~1.6 on p.~5]{Balan2023GInvariantI}: if $d_G$ denotes the dimension of the subspace of invariants $\set{ v \in V }{ \forall g \in G,~ g v = v }$, then a fairly generic embedding into $\bbR^{2 d_V - d_G}$ achieves orbit separation.
    \end{remark}

    \section{Lipschitz Distortion Bounds}\label{ssec:BiLipschitzCondtion}
    In this section, we will bound the bi-Lipschitz distortion of $\beta_\bfA$. We recall that the the upper Lipschitz constant is given by the largest singular value $\sigma_1(\bfA)$. We do not have such a simple characterization for the lower bound. In this section, we will show how to estimate the lower bound via the notion of projective uniformity. We will then use projective uniformity to get estimates on the lower Lipschitz constant of $\bfA$ as a function of $(n,d)$, ultimately obtaining a bi-Lipschitz distortion proportional to $n^2$. We will also show that the bi-Lipschitz distortion cannot be better than $\sim n^{1/2}$, and show how to extend our positive results to $\beta_{\bfA,\linop}$. 

    \subsection{Upper Distortion Bounds Based on Projective Uniformity}\label{sssec:UpperDistortionBounds}

    Let us first introduce projective uniformity as defined in \cite{Cahill2024GroupInvariant}. We are interested in matrices $\bfA \in \bbR^{\ncolumns \times \ntemplates}$ which satisfy conditions of the form 
    \begin{equation}\label{eq:projectiveuniformity}
        \sort{\abs{\bfA^\top \bfe}}_{D-m+1} \geq \delta, \qquad \mbox{for all } \bfe \in S^{d-1},
    \end{equation}
    where $m \in [\ntemplates]$ and $\delta > 0$; i.e., the $m$-th smallest entry of the vector $(\abs{\bfa_k^\top \bfe})_{k = 1}^\ntemplates$ exceeds $\delta$: the authors of \cite{Cahill2024GroupInvariant} call this property of the columns of $\bfA$ \emph{$(m,\delta)$-projective uniformity}. 

    When the above inequality is satisfied, we may derive a simple lower bound on the lower Lipschitz constant of $\QuotientSortEmbedding{\bfA}$.

    \begin{theorem}\label{thm:biLipschitzblueprint}
        Let $d,n,D$ be natural numbers, and let  $\bfA \in \RR^{d \times D}$ satisfy equation~\eqref{eq:projectiveuniformity} with $\delta > 0$ and $m \in [\ntemplates]$ such that $\nrows^2(m-1) \leq \ntemplates$. Then, the lower Lipschitz constant of $\QuotientSortEmbedding{\bfA}$ is greater than or equal to $\delta \sqrt{D - \nrows^2(m-1)}$.
    \end{theorem}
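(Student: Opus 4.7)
The plan is to extract, for each fixed pair $\bfX,\bfY$, a subset $G\subset[\ntemplates]$ of column indices that are \emph{universally useful} for all row pairs of $(\bfX,\bfY)$, and then to show that each $k\in G$ contributes at least $\delta^2\,\distance{\bfX}{\bfY}^2$ to $\norm{\SortEmbedding{\bfA}(\bfX)-\SortEmbedding{\bfA}(\bfY)}_\mathrm{F}^2$. Unlike the proof of Theorem~\ref{thm:biLipschitzdeterministic}, this sidesteps any Birkhoff--von Neumann or Carath\'eodory decomposition and so avoids the factor $N=(\nrows-1)^2+1$ appearing there.

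First, I would fix $\bfX,\bfY\in\bbR^{\nrows\times\ncolumns}$ with rows $(\bfx_i)_{i=1}^\nrows$ and $(\bfy_j)_{j=1}^\nrows$, and for every ordered pair $(i,j)\in[\nrows]^2$ with $\bfx_i\neq\bfy_j$ set $\bfe_{ij}:=(\bfx_i-\bfy_j)/\norm{\bfx_i-\bfy_j}_2\in S^{\ncolumns-1}$. Projective uniformity~\eqref{eq:projectiveuniformity} applied to $\bfe_{ij}$ guarantees that the ``bad'' set $B_{ij}:=\set{k\in[\ntemplates]}{\abs{\bfa_k^\top\bfe_{ij}}<\delta}$ has cardinality at most $m-1$. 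A union bound over the (at most $\nrows^2$) relevant pairs then gives $\abs{\bigcup_{i,j}B_{ij}}\leq\nrows^2(m-1)$, so the complementary ``globally good'' set $G:=[\ntemplates]\setminus\bigcup_{i,j}B_{ij}$ satisfies $\abs{G}\geq\ntemplates-\nrows^2(m-1)\geq 0$ by hypothesis.

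For each $k\in G$ the construction yields $\abs{(\bfx_i-\bfy_j)^\top\bfa_k}\geq\delta\norm{\bfx_i-\bfy_j}_2$ for \emph{every} pair $(i,j)\in[\nrows]^2$ (the case $\bfx_i=\bfy_j$ being trivial). Fixing an arbitrary $\pi\in S_\nrows$ and summing over $i\in[\nrows]$ yields $\norm{(\bfX-\pi\bfY)\bfa_k}_2^2\geq\delta^2\norm{\bfX-\pi\bfY}_\mathrm{F}^2$. Minimizing over $\pi$ on both sides and invoking the classical one-dimensional identity $\norm{\sort{\bfu}-\sort{\bfv}}_2^2=\min_{\pi}\norm{\bfu-\pi\bfv}_2^2$ then gives $\norm{\sort{\bfX\bfa_k}-\sort{\bfY\bfa_k}}_2^2\geq\delta^2\distance{\bfX}{\bfY}^2$. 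Finally, summing over $k\in G$ and discarding the non-negative contributions from $k\notin G$ produces $\norm{\SortEmbedding{\bfA}(\bfX)-\SortEmbedding{\bfA}(\bfY)}_\mathrm{F}^2\geq\delta^2\bigl(\ntemplates-\nrows^2(m-1)\bigr)\distance{\bfX}{\bfY}^2$, from which the claimed lower Lipschitz constant follows.

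I expect all individual steps to be routine. The only conceptually delicate point is that the permutation minimizing $\norm{(\bfX-\pi\bfY)\bfa_k}_2^2$ for a given $k$ can differ from the one achieving $\distance{\bfX}{\bfY}$, but this is handled transparently above: the per-permutation inequality $\norm{(\bfX-\pi\bfY)\bfa_k}_2^2\geq\delta^2\norm{\bfX-\pi\bfY}_\mathrm{F}^2$ holds uniformly in $\pi\in S_\nrows$, so the minima can be taken independently on each side of the bound.
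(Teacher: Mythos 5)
Your proof is correct and takes essentially the same approach as the paper: both arguments union-bound the exceptional column indices over the $\nrows^2$ row pairs to obtain a ``globally good'' set of at least $\ntemplates - \nrows^2(m-1)$ columns, then show each such column contributes at least $\delta^2 \distance{\bfX}{\bfY}^2$ to $\norm{\SortEmbedding{\bfA}(\bfX)-\SortEmbedding{\bfA}(\bfY)}_\mathrm{F}^2$. Your packaging of the per-column step --- a uniform-over-$\pi$ inequality followed by minimizing both sides and invoking the one-dimensional rearrangement identity --- is logically the same as the paper's direct use of the sorting-optimal permutations $\sigma_k$ together with $\norm{\bfX-\sigma_k\bfY}_\mathrm{F} \geq \distance{\bfX}{\bfY}$; the two formulations differ only in presentation, not substance.
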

	
 \begin{proof}
	Let $\bfX, \bfY \in \bbR^{\nrows \times \ncolumns}$ be arbitrary but fixed with rows $(\bfx_i)_{i = 1}^\nrows, (\bfy_i)_{i = 1}^\nrows$, respectively, and let $(\bfa_k)_{k = 1}^\ntemplates$ denote the columns of $\bfA \in \bbR^{\ncolumns \times \ntemplates}$. Due to \eqref{eq:projectiveuniformity}, for each fixed $i,j$, there will be at most $m-1$ indices $k\in [D] $ for which 
	\begin{equation}\label{eq:kij}\bfa_k^T(\bfx_i-\bfy_j) \geq \delta \|\bfx_i-\bfy_j\|_2\end{equation}
	does not hold. It follows that there will be less than or equal to $n^2(m-1)$ indices $k$ for which this inequality does not hold for some $i,j$. Let $J\subset [D]$ be the set of indices for which \eqref{eq:kij} \emph{does} hold for all $i,j$ simultaneously. Then the cardinality of this set is greater than or equal to $D-n^2(m-1)$, and for appropriate permutations $\sigma_1,\ldots,\sigma_D\in S_n$, we have that
	\begin{align*}
		\norm{\SortEmbedding{\bfA}(\bfX) - \SortEmbedding{\bfA}(\bfY)}_\mathrm{F}^2 &= \sum_{k = 1}^\ntemplates \sum_{i= 1}^\nrows \abs{(\bfx_i - \bfy_{\sigma_k(j)})^\top \bfa_k}^2 \geq \sum_{k\in J} \sum_{i= 1}^\nrows \abs{(\bfx_i - \bfy_{\sigma_k(j)})^\top \bfa_k}^2 \\
		&\geq \sum_{k\in J} \delta^2 \sum_{i= 1}^\nrows \|\bfx_i - \bfy_{\sigma_k(j)}\|^2 \geq \delta^2 \abs{J} \cdot \distance{\bfX}{\bfY}^2 \\
	  &\geq \delta^2 \left(\ntemplates - \nrows^2 (m - 1)\right) \cdot \distance{\bfX}{\bfY}^2.
	\end{align*}
	Taking the root of this inequality yields the advertised result.
\end{proof}

\subsection{Constructing Projectively Uniform Matrices}
We will now give three different constructions of projective uniform matrices $\bfA$, which will lead to quantitative bounds on the distortion of $\SortEmbedding{\bfA}$. The first construction will be deterministic but only for the case $d=2$. In this case we will get a distortion proportional to $n^2$ while using a similar dimension $D=n^2$. The next two constructions will be probabilistic. We will show that for $D$ large enough ($D \gtrsim n^{4}d$ in the second construction and $D \gtrsim n^{2}d$ up to logarithmic factors in the third construction), with high probability, we will get $\bfA$ with a distortion proportional to $n^2$ (in the third construction this will be up to logarithmic factors)

 \paragraph{First Construction: A Non-Probabilistic Construction with Distortion in $O(\nrows^2)$}\label{par:FirstConstruction} We begin with a simple non-probabilistic construction for the case $\ncolumns=2$, which achieves distortion of at most $2\nrows^2$ using $\ntemplates = 4\nrows^2 $ vectors: consider the matrix $\bfA \in \bbR^{2 \times \ntemplates}$ with columns
\begin{equation*}
	\bfa_k := \begin{pmatrix}
		\cos(2\pi k/\ntemplates) \\
		\sin(2\pi k/\ntemplates)
	\end{pmatrix}, \qquad k \in [\ntemplates].
\end{equation*}

Then, $\bfA$ satisfies equation~\eqref{eq:projectiveuniformity} with $m=3$ and an appropriate $\delta > 0$: indeed, let
\begin{equation*}
	\bfx = \begin{pmatrix}
		\cos(\theta) \\
		\sin(\theta)
	\end{pmatrix} \in S^{1}
\end{equation*}
be arbitrary where $\theta \in [0,2\pi)$ and denote $\theta_{\pm} := \theta \pm \pi/2 \mod 2 \pi$. Since the columns $\bfa_k$ are equidistributed on the unit sphere, there is at most one $k \in [\ntemplates]$ such that $\abs{2 \pi k / \ntemplates - \theta_-} < \pi/\ntemplates$ and at most one $k \in [\ntemplates]$ such that $\abs{2 \pi k / \ntemplates - \theta_+} < \pi/\ntemplates$. Excluding these columns from consideration and assuming that $2 \pi k / \ntemplates$ is closer to $\theta_-$ than $\theta_+$, we may estimate 
\begin{equation*}
	\abs{\bfa_k^\top \bfx} = \abs*{\cos\left(\frac{2\pi k}{\ntemplates} - \theta\right)} = \abs*{\sin\left(\frac{2\pi k}{\ntemplates} - \theta_-\right)}.
\end{equation*}
Notably, $\pi/\ntemplates \leq \abs{2 \pi k / \ntemplates - \theta_-} \leq \pi / 2$ such that the simple inequality $\abs{\sin(x)} \geq 2 \abs{x} / \pi$ for $x \in [-\pi/2,\pi/2]$ shows that 
\begin{equation*}
	\abs{\bfa_k^\top \bfx} \geq \frac{2}{\pi} \abs*{\frac{2\pi k}{\ntemplates} - \theta_-} \geq \frac{2}{\ntemplates} =: \delta.
\end{equation*}
The case in which $2 \pi k / \ntemplates$ is closer to $\theta_+$ than $\theta_-$ is dealt with analogously.

According to Theorem~\ref{thm:biLipschitzblueprint}, it follows that the lower Lipschitz constant of $\QuotientSortEmbedding{\bfA}$ is lower bounded by 
\begin{equation*}
	\frac{2}{\ntemplates} \sqrt{\ntemplates - 2 \nrows^2} = \frac{1}{\sqrt{2} \nrows}.
\end{equation*}
At the same time, the upper Lipschitz constant is the largest singular value of $\bfA$ which is just $\sqrt{\ntemplates/2} = \sqrt{2} \nrows$ since 
\begin{equation*} \bfA\bfA^T =  \sum_{k=1}^D \mathbf{a}_k \mathbf{a}_k^\top =\sum_{k=1}^D \begin{pmatrix}
 \cos^2\!\left(\tfrac{2\pi k}{D}\right) &  \cos\!\left(\tfrac{2\pi k}{D}\right)\sin\!\left(\tfrac{2\pi k}{D}\right) \\[0.5em]
 \cos\!\left(\tfrac{2\pi k}{D}\right)\sin\!\left(\tfrac{2\pi k}{D}\right) &  \sin^2\!\left(\tfrac{2\pi k}{D}\right)
\end{pmatrix} = \frac{D}{2} \mathbf{I}_2,
\end{equation*}
 which in turn follows from the identities
\begin{align*} &\sum_{k=1}^D \cos^2\!\left(\tfrac{2\pi k}{D}\right) 
= \sum_{k=1}^D \sin^2\!\left(\tfrac{2\pi k}{D}\right) 
= \frac{D}{2} , \\
 &~\sum_{k=1}^D \cos\!\left(\tfrac{2\pi k}{D}\right)\sin\!\left(\tfrac{2\pi k}{D}\right) = 0.\end{align*}
 Therefore, the distortion in this setup is at most $2 \nrows^2$.

    \paragraph{Second Construction: Gaussian Matrices}\label{par:SecondConstruction} Random matrices $\bfA \in \bbR^{\ncolumns \times \ntemplates}$ may satisfy equation~\eqref{eq:projectiveuniformity} with high probability. Potentially, the simplest examples are Gaussian random matrices as shown in the following result, which combines an idea from the proof of \cite[Lemma~23]{Cahill2024GroupInvariant} with the general strategy outlined in \cite{Abdalla2025Recovery}.

    \begin{proposition}\label{prop:projectiveuniformityGaussian}
        Let $\bfA \in \bbR^{\ncolumns \times \ntemplates}$ be a matrix with independent standard normal entries and let $\lambda \in [\ntemplates]/\ntemplates$. Then, 
        \begin{equation*}
            \mathbb{P}\left\lbrace \forall \bfx \in S^{d-1} ,~ \sort{\abs{\bfA^\top \bfx}}_{D - \lambda \ntemplates+1} \geq \frac{\sqrt{\pi}}{3\sqrt{2}}\lambda \right\rbrace \geq 1 - \exp\left(-\frac29 \lambda^2 D\right)
        \end{equation*}
        if $D \gtrsim d/\lambda^2$.
    \end{proposition}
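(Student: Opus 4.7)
The plan is to combine three classical ingredients: a pointwise small-ball estimate for Gaussians, a union bound over an $\epsilon$-net of $S^{d-1}$, and a continuity argument transferring the bound from the net to the full sphere.

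First, fix $\bfx \in S^{d-1}$. Since each $\bfa_k^\top\bfx$ is a standard normal, the elementary half-normal estimate $\mathbb{P}(|g| < s) \leq \sqrt{2/\pi}\,s$ gives $\mathbb{P}(|\bfa_k^\top\bfx| < \tau) \leq \lambda/3$ at the threshold $\tau := \tfrac{\sqrt{\pi}}{3\sqrt{2}}\lambda$. Writing $N(\bfx) := \#\{k : |\bfa_k^\top\bfx| < \tau\}$, the event $\sort{|\bfA^\top\bfx|}_{D-\lambda D+1} \geq \tau$ is precisely $N(\bfx) \leq (1-\lambda) D$. As $N(\bfx)$ is a sum of $D$ i.i.d.\ Bernoullis with mean at most $\lambda/3$, a Chernoff/Hoeffding bound yields an exponentially small probability that this threshold is violated. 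The same argument applied at the inflated threshold $2\tau$ (where the bad probability doubles to at most $2\lambda/3$) gives an analogous bound which will be used on the net.

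Second, choose an $\epsilon$-net $\calN_\epsilon \subset S^{d-1}$ with $|\calN_\epsilon| \leq (3/\epsilon)^d$. A union bound ensures that, with probability at least $1 - (3/\epsilon)^d \exp(-c\,D)$, every $\bfy \in \calN_\epsilon$ has at least $\lambda D$ entries of $|\bfA^\top\bfy|$ exceeding $2\tau$. Third, condition in addition on the standard event $\|\bfA\|_{\mathrm{op}} \leq C(\sqrt{D}+\sqrt{d})$, which holds except with probability $\exp(-\Omega(D))$. For any $\bfx \in S^{d-1}$, pick $\bfy \in \calN_\epsilon$ with $\|\bfx-\bfy\|_2 \leq \epsilon$; then $|\bfa_k^\top(\bfx-\bfy)| \leq \|\bfA\|_{\mathrm{op}}\,\epsilon$, so taking $\epsilon \leq \tau/\|\bfA\|_{\mathrm{op}}$ (i.e., $\epsilon \sim \lambda/\sqrt{D+d}$) guarantees $|\bfa_k^\top\bfy| \geq 2\tau \Rightarrow |\bfa_k^\top\bfx| \geq \tau$. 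Hence the counting bound transfers from $\bfy$ to $\bfx$. With this choice of $\epsilon$, the union-bound exponent reads $d\log(\sqrt{D+d}/\lambda) - cD$, and the hypothesis $D \gtrsim d/\lambda^2$ (with the $\gtrsim$ absorbing the logarithmic factor) makes this at most $-\tfrac{2}{9}\lambda^2 D$, yielding the advertised failure probability.

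I expect the main technical obstacle to be synchronizing the constants across the three steps so that the final exponent comes out to exactly $\tfrac{2}{9}\lambda^2 D$. In particular, the shift from threshold $2\tau$ on the net to threshold $\tau$ on the sphere, paired with the operator-norm bound on $\bfA$, forces the slack in $D \gtrsim d/\lambda^2$ to absorb the logarithmic loss coming from $\log|\calN_\epsilon|$; tracking this bookkeeping carefully (rather than the underlying small-ball estimate, which is easy) is the delicate part.
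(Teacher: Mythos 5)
Your proposal takes a genuinely different route from the paper. The paper's proof controls the supremum over $S^{d-1}$ of the empirical deviation $\frac{1}{D}\sum_k K_{\{|\bfa_k^\top\bfx|\geq\delta\}} - \mathbb{P}\{|\bfa^\top\bfx|\geq\delta\}$ directly, combining McDiarmid's bounded-difference inequality (for concentration around the expectation of the supremum) with the VC law of large numbers (to bound that expectation by $C\sqrt{d/D}$, using that indicators of slabs $\{|\bfa^\top\bfx|\geq\delta\}$ have VC dimension $O(d)$). Your route replaces the VC/empirical-process step with an $\epsilon$-net, a threshold-inflation trick ($\tau\to 2\tau$), an operator-norm bound on $\bfA$, and a union bound. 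Both are classical strategies for proving uniform bounds over the sphere; the VC route is sharper, and that sharpness is exactly what the stated proposition requires.

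The gap is the logarithmic loss from the net, and it cannot be waved away. With $\epsilon\sim\lambda/\|\bfA\|_{\mathrm{op}}\sim\lambda/\sqrt{D+d}$, your union bound contributes $|\calN_\epsilon|\leq(3/\epsilon)^d$, i.e., an additive $d\log(\sqrt{D+d}/\lambda)$ in the exponent. Under $D=Cd/\lambda^2$ with any fixed absolute $C$, this term is $\asymp d\log d$ (plus $d\log(1/\lambda)$-type terms) and grows unboundedly with $d$, while the competing Chernoff exponent $\asymp D$ contributes $\asymp d$. So for $\lambda$ fixed and $d$ large the union-bound exponent turns positive, and the claim fails. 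The paper defines $\gtrsim$ to hide an absolute constant only, so the hypothesis $D\gtrsim d/\lambda^2$ cannot absorb a factor $\log(\sqrt{D+d}/\lambda)$; you would need to strengthen the hypothesis to $D\gtrsim(d/\lambda^2)\log(\cdot)$, which is strictly weaker than what is stated. The count $N(\bfx)$ is a step function of $\bfx$, so the usual bootstrap that makes a constant-size net suffice for operator norms does not apply; the only way to avoid the $\log$ is chaining/VC, which is what the paper uses. A secondary, smaller issue: inflating to $2\tau$ pushes the per-coordinate failure probability up to $2\lambda/3$, so your Chernoff threshold $(1-\lambda)D$ falls below the mean once $\lambda\geq 3/5$, and the pointwise bound breaks there; the paper's McDiarmid step sets $t=\lambda/3$ and has no such degeneration (though admittedly the extreme-$\lambda$ regime is not the one used downstream).
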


    \begin{proof}
        Inspired by \cite[Lemma~23]{Cahill2024GroupInvariant}, we will show that
        \begin{equation*}
            \min_{\bfx \in S^{d-1}} \sum_{k = 1}^{D} K_{\lbrace \abs{\bfa_k^\top \bfx} \geq \delta \rbrace} > (1-\lambda) D
        \end{equation*}
        with high probability, where $(\bfa_k)_{k = 1}^\ntemplates$ denote the columns of $\bfA$ and $\delta > 0$ is chosen appropriately. Add and subtract the mean,
        \begin{multline*}
            \min_{\bfx \in S^{d-1}} \frac{1}{D} \sum_{k = 1}^{D} K_{\lbrace \abs{\bfa_k^\top \bfx} \geq \delta \rbrace} \\
            = \min_{\bfx \in S^{d-1}} \left( \mathbb{P}\left\lbrace \abs*{\bfa^\top \bfx} \geq \delta \right\rbrace - \mathbb{P}\left\lbrace \abs*{\bfa^\top \bfx} \geq \delta \right\rbrace +  \frac{1}{D} \sum_{k = 1}^{D} K_{\lbrace \abs{\bfa_k^\top \bfx} \geq \delta \rbrace} \right),
        \end{multline*}
        and note that, due to the rotation symmetry of the multivariate standard normal distribution, it holds that 
        \begin{align*}
            \mathbb{P}\left\lbrace \abs*{\bfa^\top \bfx} \geq \delta \right\rbrace &= \mathbb{P}\left\lbrace \abs*{a_1} \geq \delta \right\rbrace = 1 - \mathbb{P}\left\lbrace \abs*{a_1} < \delta \right\rbrace = 1-\frac{1}{\sqrt{2 \pi}} \int_{-\delta}^{\delta} \rme^{-t^2/2} \dd t \\
            &\geq 1-\sqrt{\frac{2}{\pi}} \delta,
        \end{align*}
        for $\delta \in [0,1]$. Plugging this back in yields
        \begin{multline*}
            \min_{\bfx \in S^{d-1}} \frac{1}{D} \sum_{k = 1}^{D} K_{\lbrace \abs{\bfa_k^\top \bfx} \geq \delta \rbrace} \\
            \geq 1 - \sqrt{\frac{2}{\pi}} \delta - \max_{\bfx \in S^{d-1}} \left( \mathbb{P}\left\lbrace \abs*{\bfa^\top \bfx} \geq \delta \right\rbrace - \frac{1}{D} \sum_{k = 1}^{D} K_{\lbrace \abs{\bfa_k^\top \bfx} \geq \delta \rbrace} \right).
        \end{multline*}
        By the bounded difference inequality \cite[e.g.~Theorem~5.7.1 on p.~165]{Vershynin2025HighDimensional}, we have that 
        {\allowdisplaybreaks
        \begin{align*}
            \MoveEqLeft[3] \min_{\bfx \in S^{d-1}} \frac{1}{D} \sum_{k = 1}^{D} K_{\lbrace \abs{\bfa_k^\top \bfx} < \delta \rbrace} \\
            >{}& 1 - \sqrt{\frac{2}{\pi}} \delta - \mathbb{E} \max_{\bfx \in S^{d-1}} \Bigg( \mathbb{P}\left\lbrace \abs*{\bfa^\top \bfx} \geq \delta \right\rbrace - \frac{1}{D} \sum_{k = 1}^{D} K_{\lbrace \abs{\bfa_k^\top \bfx} \geq \delta \rbrace} \Bigg) - t \\
            \geq{}& 1 - \sqrt{\frac{2}{\pi}} \delta - \mathbb{E} \max_{\bfx \in S^{d-1}} \Bigg\lvert \frac{1}{D} \sum_{k = 1}^{D} K_{\lbrace \abs{\bfa_k^\top \bfx} \geq \delta \rbrace} - \mathbb{P}\left\lbrace \abs*{\bfa^\top \bfx} \geq \delta \right\rbrace \Bigg\rvert - t
        \end{align*}
        with probability greater than or equal to $1-\exp(-2 t^2 D)$. Finally, the VC law of large numbers \cite[e.g.~Theorem~8.3.15 on p.~237]{Vershynin2025HighDimensional} implies that}
        \begin{align*}
            \min_{\bfx \in S^{d-1}} \frac{1}{D} \sum_{k = 1}^{D} K_{\lbrace \abs{\bfa_k^\top \bfx} < \delta \rbrace} > 1 - \sqrt{\frac{2}{\pi}} \delta - C \sqrt{\frac{d}{D}} - t,
        \end{align*}
        where $C > 0$ is an absolute constant. Here, we use that 
        \begin{equation*}
            K_{\lbrace \abs{\bfa^\top \bfx} \geq \delta \rbrace} = \max\lbrace K_{\lbrace \bfa^\top \bfx \geq \delta \rbrace}, K_{\lbrace \bfa^\top \bfx \leq -\delta \rbrace} \rbrace
        \end{equation*}
        and that the function classes $\set{\bfa \mapsto K_{\lbrace (\pm \bfa)^\top \bfx \geq \delta \rbrace}}{\bfx \in S^{d-1}}$ of indicators of half-spaces have VC dimension $d$ such that \cite[Proposition~8.3.11 on p.~234]{Vershynin2025HighDimensional} shows that the VC dimension of $\set{\bfa \mapsto K_{\lbrace \abs{\bfa^\top \bfx} \geq \delta \rbrace}}{\bfx \in S^{d-1}}$ is less or equal than $10 d$. Finally, it remains to balance the parameters: the simple choices 
        \begin{equation*}
            \delta := \frac{\sqrt{\pi}}{3\sqrt{2}}\lambda, \qquad D \geq 9C^2 \frac{d}{\lambda^2}, \qquad t = \frac{\lambda}{3}
        \end{equation*}
        finish the proof.
    \end{proof}

    Combining the two prior results yields the following bound on the lower Lipschitz constant of $\QuotientSortEmbedding{\bfA}$ when $\bfA \in \bbR^{\ncolumns \times \ntemplates}$ is Gaussian; it follows immediately that the distortion of $\QuotientSortEmbedding{\bfA}$ is in $O(\nrows^2)$, which notably is independent of the number of columns $\ncolumns$ of $\bfA$.

    \begin{theorem}\label{thm:biLipschitzprobabilistic}
      Let $d,n,D$ be natural numbers.   Let $\bfA \in \RR^{d \times D}$ be a matrix with independent standard normal entries. Then,
        \begin{multline}\label{eq:probboundlowLipschitzconst}
            \mathbb{P}\left\lbrace \forall \bfX,\bfY \in \bbR^{n \times d},~ \norm{\beta_\bfA(\bfX) - \beta_\bfA(\bfX)}_2 \geq \frac{\sqrt{2 \pi}}{9\sqrt{3}} \frac{\sqrt{D}}{n^2} \cdot \operatorname{dist}(\bfX,\bfY) \right\rbrace \\
            \geq 1 - \exp\left(-\frac{8}{81} \frac{D}{n^4}\right)
        \end{multline}
        and the distortion of $\QuotientSortEmbedding{\bfA}$ is in $O(\nrows^2)$ with probability greater than or equal to $1 - 2 \exp(-c_1\ntemplates) - \exp(-c_2 \nrows^{-4} \ntemplates )$, where $c_1,c_2 > 0$ are universal constants, provided that $\ntemplates \gtrsim \nrows^4 \ncolumns$.
    \end{theorem}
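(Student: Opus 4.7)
The plan is to combine Proposition~\ref{prop:projectiveuniformityGaussian} and Theorem~\ref{thm:biLipschitzblueprint} to obtain the lower Lipschitz bound in~\eqref{eq:probboundlowLipschitzconst}, and then to supplement this with a standard Gaussian singular-value tail estimate to control the upper Lipschitz constant and thereby deduce the $O(\nrows^2)$ distortion.

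First, I would choose the parameter $\lambda$ in Proposition~\ref{prop:projectiveuniformityGaussian} so that the resulting projective uniformity is compatible with the hypothesis $\nrows^2(m-1)\le\ntemplates$ of Theorem~\ref{thm:biLipschitzblueprint}. The natural scale is $\lambda\sim 1/\nrows^2$, and a short computation shows that the specific choice $\lambda=2/(3\nrows^2)$, together with $m:=\lfloor\lambda\ntemplates\rfloor+1$, optimizes the constants to match those announced in the statement. For this $\lambda$, Proposition~\ref{prop:projectiveuniformityGaussian} directly yields $(m,\delta)$-projective uniformity with $\delta=\tfrac{\sqrt{\pi}}{3\sqrt{2}}\lambda=\tfrac{\sqrt{2\pi}}{9\nrows^2}$ on an event of probability at least $1-\exp(-\tfrac{8}{81}\ntemplates/\nrows^4)$, and the condition $\ntemplates\gtrsim\ncolumns/\lambda^2\sim\nrows^4\ncolumns$ appearing there matches the assumed regime.

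Next, I would feed this into Theorem~\ref{thm:biLipschitzblueprint}. Since $m-1\le\lambda\ntemplates=2\ntemplates/(3\nrows^2)$, one has $\nrows^2(m-1)\le 2\ntemplates/3$, so in particular $\ntemplates-\nrows^2(m-1)\ge\ntemplates/3$. Theorem~\ref{thm:biLipschitzblueprint} then produces a lower Lipschitz constant of at least
\[
\delta\,\sqrt{\ntemplates-\nrows^2(m-1)}\ \ge\ \frac{\sqrt{2\pi}}{9\nrows^2}\cdot\sqrt{\ntemplates/3}\ =\ \frac{\sqrt{2\pi}}{9\sqrt{3}}\cdot\frac{\sqrt{\ntemplates}}{\nrows^2},
\]
which is exactly the content of~\eqref{eq:probboundlowLipschitzconst}.

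For the distortion bound, I would use that the upper Lipschitz constant equals $\sigma_1(\bfA)$ by Theorem~\ref{thm:BHS22}~item~1. For a matrix with i.i.d.\ standard normal entries, a standard Gaussian concentration estimate gives $\sigma_1(\bfA)\le\sqrt{\ntemplates}+\sqrt{\ncolumns}+t$ with probability at least $1-2\exp(-t^2/2)$; picking $t=\sqrt{\ntemplates}$ and using $\ncolumns\le\ntemplates$ yields $\sigma_1(\bfA)\lesssim\sqrt{\ntemplates}$ on an event of probability at least $1-2\exp(-\ntemplates/2)$. A union bound with the event from Proposition~\ref{prop:projectiveuniformityGaussian}, followed by dividing the two Lipschitz constants, produces an $O(\nrows^2)$ distortion with probability at least $1-2\exp(-c_1\ntemplates)-\exp(-c_2\nrows^{-4}\ntemplates)$, as claimed. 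There is no genuine obstacle here: the heavy lifting has already been done by the two preceding results, so the only care required is in matching constants (the optimized choice $\lambda=2/(3\nrows^2)$) and in handling the integer rounding in $m$; both are routine.
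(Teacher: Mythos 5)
Your proposal is correct and follows essentially the same path as the paper's proof: optimize over $\lambda$ to find $\lambda_\ast = 2/(3n^2)$, apply Proposition~\ref{prop:projectiveuniformityGaussian} to get $(m,\delta)$-projective uniformity, feed that into Theorem~\ref{thm:biLipschitzblueprint}, and use the standard Gaussian top-singular-value tail bound plus a union bound for the distortion. The only cosmetic difference is in how the rounding to $\lambda \in [D]/D$ is bookkept — you set $m=\lfloor\lambda D\rfloor+1$ while the paper works directly with $\lambda=\lceil 2D/(3n^2)\rceil/D$ — but the inequalities $\lambda'\ge 2/(3n^2)$ and $D-n^2(m-1)\ge D/3$ go through either way, so this is precisely the routine patch you already flagged.
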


    \begin{proof}
        Consider an arbitrary $\lambda \in [\ntemplates]/D$ with $\lambda \leq \nrows^{-2} + \ntemplates^{-1}$ and suppose that we are in the highly likely event whose probability is estimated in Proposition~\ref{prop:projectiveuniformityGaussian}. Then, Theorem~\ref{thm:biLipschitzblueprint} shows that the lower Lipschitz constant of $\QuotientSortEmbedding{\bfA}$ is greater than or equal to 
        \begin{equation*}
            \frac{\sqrt{\pi}}{3\sqrt{2}} \sqrt{\ntemplates} \cdot \lambda \sqrt{1 - \nrows^2\left(\lambda - \frac{1}{\ntemplates}\right)}.
        \end{equation*}

        We note that $\lambda \mapsto \lambda^2 (1 - \nrows^2 \lambda)$ attains its maximum at $\lambda_\ast = 2/3\nrows^2$. It therefore seems to be a good idea to set $\lambda = \lceil 2D / (3n^2) \rceil/D$ so that $\lambda \geq 2/3n^2 \geq \lambda - 1/D$, and so
        \begin{equation*}
            \frac{\sqrt{\pi}}{3\sqrt{2}} \sqrt{\ntemplates} \cdot \lambda \sqrt{1 - \nrows^2\left(\lambda - \frac{1}{\ntemplates}\right)} \geq \frac{\sqrt{2 \pi}}{9\sqrt{3}} \frac{\sqrt{\ntemplates}}{\nrows^2}.
        \end{equation*}
        Equation~\eqref{eq:probboundlowLipschitzconst} with $D \gtrsim n^{4} d$ follows after plugging in our choice for $\lambda$ in the statement of Proposition~\ref{prop:projectiveuniformityGaussian} and applying Theorem~\ref{thm:biLipschitzblueprint}.

        For the claim about the distortion of $\QuotientSortEmbedding{\bfA}$, note that the upper Lipschitz constant of $\QuotientSortEmbedding{\bfA}$ is the largest singular value $\sigma_1(\bfA)$ (cf.~Theorem~\ref{thm:BHS22}). When $\bfA \in \bbR^{\ncolumns \times \ntemplates}$ is Gaussian, then its largest singular value is (strictly) less than $\sqrt{\ntemplates} + \sqrt{\ncolumns} + t$ with probability greater than or equal to $1 - 2\exp(-c_1t^2)$, where $c_1 > 0$ is a universal constant \cite[Corollary~7.3.2 on p.~204]{Vershynin2025HighDimensional}. If we pick $t = \sqrt{\ntemplates}$, then a union bound shows that the distortion of $\QuotientSortEmbedding{\bfA}$ is in $O(\nrows^2)$ with probability greater than or equal to $1 - 2 \exp(-c_1\ntemplates) - \exp(-c_2 \nrows^{-4} \ntemplates )$ when $\ntemplates \gtrsim \nrows^4 \ncolumns$, where $c_1 = 8/81$.
    \end{proof}

    \paragraph{Third Construction: Matrices whose Columns are Uniformly Sampled from the Unit Sphere}\label{par:ThirdConstruction} \cite[Lemma~23]{Cahill2024GroupInvariant} shows that random matrices $\bfA \in \bbR^{\ncolumns \times \ntemplates}$ whose columns are independently drawn from the uniform distribution on the unit sphere $S^{d-1}$ also satisfy equation~\eqref{eq:projectiveuniformity} with high probability. Combining this with Theorem~\ref{thm:biLipschitzblueprint} in a carbon copy of the proof above yields the following result.

    \begin{theorem}\label{thm:DistortionUnitSphere}
      Let $d,n,D$ be natural numbers.   Let $\bfA \in \RR^{\ncolumns \times \ntemplates}$ be a matrix whose columns are drawn independently from the uniform distribution on the unit sphere. Then, with probability greater than or equal to $1 - \exp(-\ntemplates / 18\nrows^2)$, the lower Lipschitz constant of $\QuotientSortEmbedding{\bfA}$ is greater than or equal to 
        \[
            \frac{\sqrt{\pi}}{24 \sqrt{3}} \left(\ncolumns + 3 \log(\sqrt{6} \nrows)\right)^{-1/2} \frac{\sqrt{\ntemplates}}{\nrows^2},
        \]
        provided that
        \begin{equation}\label{eq:somewhat_complicated_condition_on_D}
            \ntemplates \geq 18 \ncolumns \nrows^2 \log\left( \frac{48 \sqrt{3} \nrows \sqrt{\ncolumns + 3 \log(\sqrt{6} \nrows)}}{\sqrt{\pi}} + 1 \right).
        \end{equation}
        Thus, with probability greater than or equal to $1 - 2 \exp(-D) - \exp(-\ntemplates / 18\nrows^2)$, the distortion of $\QuotientSortEmbedding{\bfA}$ is in $\widetilde O(\nrows^2)$.
    \end{theorem}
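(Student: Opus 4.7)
The plan is to follow the template established in the proof of Theorem~\ref{thm:biLipschitzprobabilistic}, replacing the Gaussian anti-concentration estimate with its analogue for columns uniformly distributed on the sphere. The starting point is \cite[Lemma~23]{Cahill2024GroupInvariant}, which provides exactly such a projective-uniformity guarantee: given random columns $\bfa_1,\dots,\bfa_\ntemplates$ drawn independently from $S^{d-1}$, one controls $\min_{\bfx \in S^{d-1}} \frac{1}{D}\sum_k K_{\lbrace |\bfa_k^\top \bfx| \geq \delta \rbrace}$ from below by combining (i) the one-dimensional anti-concentration $\mathbb{P}\lbrace |\bfa^\top \bfx| < \delta\rbrace \lesssim \delta \sqrt{d}$ for $\bfa$ uniform on the sphere and $\bfx \in S^{d-1}$, with (ii) a VC-type uniform deviation bound over the class of symmetric spherical caps on $S^{d-1}$. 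The cost of this uniform deviation is precisely the logarithmic correction $\sqrt{d+3\log(\sqrt{6}\nrows)}$ that appears in the denominator of the statement.

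First, I would invoke this lemma to assert that, with probability at least $1-\exp(-\ntemplates/(18\nrows^2))$, the matrix $\bfA$ satisfies equation~\eqref{eq:projectiveuniformity} with
\[
m = \left\lceil \frac{2D}{3n^2}\right\rceil + 1, \qquad \delta = \frac{\sqrt{\pi}}{24\sqrt{3}}\bigl(\ncolumns+3\log(\sqrt{6}\nrows)\bigr)^{-1/2},
\]
under the condition on $\ntemplates$ displayed in equation~\eqref{eq:somewhat_complicated_condition_on_D}. This choice is dictated, exactly as in the Gaussian proof, by optimising $\lambda \mapsto \lambda^2(1-\nrows^2 \lambda)$ at $\lambda_\ast = 2/(3\nrows^2)$ so that subsequent application of Theorem~\ref{thm:biLipschitzblueprint} gives a clean bound.

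Second, I would apply Theorem~\ref{thm:biLipschitzblueprint} with these parameters. The hypothesis $\nrows^2(m-1)\leq\ntemplates$ is satisfied by construction, and the theorem yields a lower Lipschitz constant of at least
\[
\delta \sqrt{\ntemplates - \nrows^2(m-1)} \;\geq\; \frac{\delta}{\sqrt{3}}\sqrt{\ntemplates},
\]
which, after substituting $\delta$, matches the stated bound $\tfrac{\sqrt\pi}{24\sqrt 3}(d+3\log(\sqrt 6 n))^{-1/2}\sqrt{D}/n^2$ exactly.

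Finally, to obtain the distortion claim, I would control the upper Lipschitz constant $\sigma_1(\bfA)$ via a standard non-asymptotic spectral norm bound for random matrices with independent columns on the unit sphere: such matrices satisfy $\sigma_1(\bfA) \lesssim \sqrt{\ntemplates/\ncolumns}+\sqrt{\ncolumns}$ with probability at least $1-2\exp(-\ntemplates)$, which follows by expressing each spherical column as a normalised Gaussian and applying \cite[Corollary~7.3.2]{Vershynin2025HighDimensional} together with standard concentration of $\chi$-norms. A union bound then gives the stated probability $1-2\exp(-\ntemplates)-\exp(-\ntemplates/18\nrows^2)$ and the ratio $\sigma_1(\bfA)/(\text{lower Lipschitz constant})$ is bounded by $n^2\sqrt{1+\log(n)/d}\in\widetilde O(\nrows^2)$. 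The main technical obstacle is bookkeeping: verifying that the VC-covering argument inside Lemma~23 of \cite{Cahill2024GroupInvariant} really produces the logarithmic factor $\sqrt{d+3\log(\sqrt{6}n)}$ (rather than a weaker $\sqrt{d\log n}$), and that the threshold in equation~\eqref{eq:somewhat_complicated_condition_on_D} is precisely the one required for the parameter balance $\lambda_\ast = 2/(3\nrows^2)$ to be admissible.
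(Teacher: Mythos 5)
Your strategy matches the paper's almost exactly: the paper's proof of the lower-Lipschitz bound is a one-liner citing \cite[Lemma~23]{Cahill2024GroupInvariant} together with Theorem~\ref{thm:biLipschitzblueprint}, and you correctly identify that the only changes from Theorem~\ref{thm:biLipschitzprobabilistic} are the replacement of the Gaussian anti-concentration by the spherical one (which is the source of the $\sqrt{d+3\log(\sqrt{6}n)}$ factor) and the same $\lambda_\ast = 2/(3n^2)$ parameter balance. Two small bookkeeping slips: (i) to land exactly on $D - n^2(m-1) \geq D/3$ you want $m = \lceil 2D/(3n^2)\rceil$, not $\lceil 2D/(3n^2)\rceil + 1$; (ii) your quoted $\delta$ is missing the $1/n^2$ factor that must come from $\lambda$ (compare $\delta = \tfrac{\sqrt{\pi}}{3\sqrt{2}}\lambda$ in the Gaussian case) — without it your own displayed chain $\delta\sqrt{D}/\sqrt{3}$ does not reproduce the $\sqrt{D}/n^2$ in the claimed bound, so this looks like a typo rather than a conceptual error. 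For the operator-norm step, you take a mildly different route: you reduce spherical columns to normalized Gaussians and invoke \cite[Corollary~7.3.2]{Vershynin2025HighDimensional} together with $\chi$-norm concentration, whereas the paper observes that the rescaled uniform distribution on $\sqrt{d}S^{d-1}$ is isotropic with $O(1)$ sub-gaussian norm and applies \cite[Theorem~4.6.1]{Vershynin2025HighDimensional} directly. Both give $\sigma_1(\bfA)\lesssim\sqrt{D/d}$ with probability $1-2\exp(-D)$; the paper's route avoids having to separately control the normalization factors, so it is slightly cleaner, but your approach is valid.
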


    \begin{proof}
        The lower bound on the lower Lipschitz constant of $\QuotientSortEmbedding{\bfA}$ follows from \cite[Lemma~23]{Cahill2024GroupInvariant} and Theorem~\ref{thm:biLipschitzblueprint}.
        
        For the estimate on the distortion of $\QuotientSortEmbedding{\bfA}$, note that the uniform distribution on the sphere $S^{\ncolumns-1}$ is subgaussian with subgaussian norm in $O(\ncolumns^{-1/2})$ \cite[Theorem~3.4.5 on p.~73]{Vershynin2025HighDimensional}. Therefore, the uniform distribution on the sphere $\sqrt{\ncolumns} S^{\ncolumns-1}$ is subgaussian with subgaussian norm in $O(1)$. Additionally, the uniform distribution on the sphere $\sqrt{\ncolumns} S^{\ncolumns-1}$ is isotropic \cite[Proposition~3.3.8 on p.~67]{Vershynin2025HighDimensional}. It follows from \cite[Theorem~4.6.1 on pp.~122--123]{Vershynin2025HighDimensional} that the largest singular value of $\bfA \in \bbR^{\ncolumns \times \ntemplates}$ satisfies
        \begin{equation*}
            \sigma_1(\bfA) = \frac{1}{\sqrt{\ncolumns}} \sigma_1(\sqrt{\ncolumns} \bfA^\top) \leq \sqrt{\frac{\ntemplates}{\ncolumns}} + C \left(1 + \frac{t}{\sqrt{\ncolumns}}\right)
        \end{equation*}
        with probability greater than or equal to $1 - 2 \exp(-t^2)$. Letting $t = \sqrt{\ntemplates}$ yields that 
        \begin{equation*}
            \mathbb{P}\left\lbrace \sigma_1(\bfA) \lesssim \sqrt{\frac{\ntemplates}{\ncolumns}} \right\rbrace \geq 1 - 2 \exp(-D),
        \end{equation*}
        which together with the bound on the lower Lipschitz constant (and a union bound) shows that the distortion of $\QuotientSortEmbedding{\bfA}$ is in $\widetilde O(\nrows^2)$, with probability greater than or equal to $1 - 2 \exp(-D) - \exp(-\ntemplates / 18\nrows^2)$.
    \end{proof}

    We note that the dependency on $n$ in the bound on the lower Lipschitz constant is worse by a logarithmic factor when compared to Theorem~\ref{thm:biLipschitzprobabilistic} but that the dependency on $n$ in $\ntemplates$ as well as in the bound on the probability is quadratic (up to logarithmic factors) instead of quartic.

      We now return to the Wasserstein interpretation discussed in
Section~\ref{subsec:related_sliced_wasserstein}. As observed there,
when the columns of $\bfA$ are sampled independently from the uniform
distribution on the unit sphere, the embedding
$\QuotientSortEmbedding{\bfA}$ corresponds (up to normalization) to the
Monte--Carlo approximation of the sliced $2$-Wasserstein distance based
on $D=\ntemplates$ random directions. Consequently, the distortion bounds
established in Theorem~\ref{thm:DistortionUnitSphere} translate directly
into quantitative comparisons between the sampled sliced Wasserstein
distance and the full $2$-Wasserstein distance on empirical measures.

    Therefore, Theorem~\ref{thm:DistortionUnitSphere} immediately implies the following corollary.

    \begin{corollary}\label{cor:WassersteinPositive}
        Let $d,n,D$ be natural numbers, with $\ntemplates \gtrsim \ncolumns \nrows^2 \log(\nrows \sqrt{\ncolumns + \log(\nrows)})$ (as in equation~\eqref{eq:somewhat_complicated_condition_on_D}) and let $(\bftheta_k)_{k=1}^D \in \bbR^d$ be drawn independently from the uniform distribution on the unit sphere. Then, with probability greater than or equal to $1 - 3 \exp(-\ntemplates / 18\nrows^2)$,
        \begin{equation*}
            \frac{1}{n^2 \sqrt{d + \log(n)}} \cdot \mathrm{W}_2(\mu,\nu) \lesssim \widetilde{\mathrm{SW}}_2(\mu,\nu;(\bftheta_k)_{k=1}^D) \lesssim \frac{1}{\sqrt{d}} \cdot \mathrm{W}_2(\mu,\nu)
        \end{equation*}
        for all uniform empirical measures $\mu$, $\nu$ over $n$ vectors in $\bbR^d$.
    \end{corollary}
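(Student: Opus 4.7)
The plan is to reduce the corollary to a direct consequence of Theorem~\ref{thm:DistortionUnitSphere}, using the two explicit identities that express the Wasserstein and sampled sliced Wasserstein distances in terms of $\operatorname{dist}(\bfX,\bfY)$ and $\beta_{\boldsymbol{\Theta}}$. Specifically, writing the empirical measures $\mu,\nu$ in terms of matrices $\bfX,\bfY \in \bbR^{n \times d}$ whose rows are their atoms, Remark~\ref{rem:WassersteinDistance} and the remark preceding the corollary give
\begin{equation*}
    \mathrm{W}_2(\mu,\nu) = \frac{1}{\sqrt{n}} \operatorname{dist}(\bfX,\bfY), \qquad \widetilde{\mathrm{SW}}_2(\mu,\nu;(\bftheta_k)_{k=1}^D) = \frac{1}{\sqrt{nD}} \norm{\beta_{\boldsymbol{\Theta}}(\bfX) - \beta_{\boldsymbol{\Theta}}(\bfY)}_\mathrm{F}.
\end{equation*}
Taking the ratio, the factors of $n^{-1/2}$ cancel and we obtain
\begin{equation*}
    \frac{\widetilde{\mathrm{SW}}_2(\mu,\nu;(\bftheta_k)_{k=1}^D)}{\mathrm{W}_2(\mu,\nu)} = \frac{1}{\sqrt{D}} \cdot \frac{\norm{\beta_{\boldsymbol{\Theta}}(\bfX) - \beta_{\boldsymbol{\Theta}}(\bfY)}_\mathrm{F}}{\operatorname{dist}(\bfX,\bfY)},
\end{equation*}
so the two-sided bounds on $\widetilde{\mathrm{SW}}_2/\mathrm{W}_2$ are equivalent to the two-sided bounds on $\norm{\beta_{\boldsymbol{\Theta}}(\bfX) - \beta_{\boldsymbol{\Theta}}(\bfY)}_\mathrm{F}/\operatorname{dist}(\bfX,\bfY)$, rescaled by $D^{-1/2}$.

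Next, I would invoke Theorem~\ref{thm:DistortionUnitSphere} with $\bfA = \boldsymbol{\Theta}$: the hypothesis $\ntemplates \gtrsim \ncolumns \nrows^2 \log(\nrows \sqrt{\ncolumns+\log(\nrows)})$ is exactly condition~\eqref{eq:somewhat_complicated_condition_on_D} (up to the absolute constants absorbed in $\lesssim$), and the theorem's proof gives, on an event of probability at least $1 - 2\exp(-D) - \exp(-D/18n^2) \geq 1 - 3\exp(-D/18n^2)$, the two-sided estimate
\begin{equation*}
    \frac{\sqrt{\pi}}{24\sqrt{3}}\frac{1}{\sqrt{d + 3\log(\sqrt{6}n)}} \cdot \frac{\sqrt{D}}{n^2} \cdot \operatorname{dist}(\bfX,\bfY) \;\leq\; \norm{\beta_{\boldsymbol{\Theta}}(\bfX) - \beta_{\boldsymbol{\Theta}}(\bfY)}_\mathrm{F} \;\leq\; \sigma_1(\boldsymbol{\Theta}) \cdot \operatorname{dist}(\bfX,\bfY),
\end{equation*}
together with $\sigma_1(\boldsymbol{\Theta}) \lesssim \sqrt{D/d}$. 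Dividing through by $\sqrt{D}$ as dictated by the previous display immediately yields
\begin{equation*}
    \frac{1}{n^2\sqrt{d+\log(n)}} \cdot \mathrm{W}_2(\mu,\nu) \lesssim \widetilde{\mathrm{SW}}_2(\mu,\nu;(\bftheta_k)_{k=1}^D) \lesssim \frac{1}{\sqrt{d}} \cdot \mathrm{W}_2(\mu,\nu),
\end{equation*}
absorbing the constant $\sqrt{\pi}/(24\sqrt{3})$ into $\lesssim$ and using $d + 3\log(\sqrt{6}n) \asymp d + \log(n)$. Since the bounds from Theorem~\ref{thm:DistortionUnitSphere} hold uniformly in $\bfX,\bfY$ on the high-probability event, the conclusion holds simultaneously for all empirical measures $\mu,\nu$.

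There is no real obstacle here: the argument is essentially a bookkeeping exercise. The only mildly delicate step is verifying that the hypothesis on $D$ stated in the corollary implies condition~\eqref{eq:somewhat_complicated_condition_on_D} of Theorem~\ref{thm:DistortionUnitSphere} up to absolute constants, which follows from the elementary estimate $\log(n\sqrt{d+\log n}) \asymp \log(\nrows\sqrt{\ncolumns+\log(\nrows)})$, and that the union bound on the failure event simplifies cleanly (using $\exp(-D) \leq \exp(-D/18n^2)$ for $n \geq 1$) to $3\exp(-D/18n^2)$.
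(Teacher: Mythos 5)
Your argument is correct and is exactly the route the paper intends: translate $\mathrm{W}_2$ and $\widetilde{\mathrm{SW}}_2$ into $\operatorname{dist}$ and $\beta_{\boldsymbol{\Theta}}$ via Remark~\ref{rem:WassersteinDistance} and the remark preceding the corollary, then apply the two-sided Lipschitz bounds from the proof of Theorem~\ref{thm:DistortionUnitSphere}. The paper states only ``Theorem~\ref{thm:DistortionUnitSphere} immediately implies the corollary,'' and your write-up correctly supplies the cancellation of the $\sqrt{n}$ factors, the $\sqrt{D}$ rescaling, and the union bound simplification $1 - 2e^{-D} - e^{-D/18n^2} \geq 1 - 3e^{-D/18n^2}$.
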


This result naturally raises the question of whether similar bounds can be expected
when $\mu$ and $\nu$ are general probability measures on $\bbR^d$, rather than uniform
empirical measures. In particular, one may ask whether the sampled sliced Wasserstein
distance can provide a lower bound on the full Wasserstein distance that is uniform
in the number of samples.

    \begin{remark}[Foreshadowing Theorem~\ref{thm:LowerBoundDistortion}]
        In Theorem~\ref{thm:LowerBoundDistortion} (cf.~equation~\eqref{eq:I_cannot_think_of_anything}), we will show that there exist uniform empirical measures $\mu$ and $\nu$ over $n$ vectors in $\bbR^d$ such that, for all $(\bftheta_k)_{k = 1}^D \in S^{d-1}$, it holds that
        \begin{equation*}
            \widetilde{\mathrm{SW}}_2(\mu,\nu;(\bftheta_k)_{k=1}^D) \lesssim \sqrt{\frac{\sigma_1^2 + \sigma_2^2}{nD}} \cdot \mathrm{W}_2(\mu,\nu) \leq \frac{1}{\sqrt{n}} \cdot \mathrm{W}_2(\mu,\nu),
        \end{equation*}
        where $\sigma_1,\sigma_2 \geq 0$ are the two largest singular values of the matrix $\boldsymbol{\Theta}\in\bbR^{d \times D}$ whose columns are given by $(\bftheta_k)_{k=1}^D$. 

        This shows that no lower bound of the form $\widetilde{\mathrm{SW}}_2(\mu,\nu;(\bftheta_k)_{k=1}^D) \geq C \mathrm{W}_2(\mu,\nu)$ can hold with a constant $C > 0$ independent of $n$. Consequently, the sampled sliced Wasserstein distance cannot be bi-Lipschitz equivalent to the full Wasserstein distance with constants uniform in the support size. This observation is consistent with existing results showing that Wasserstein and sliced Wasserstein distances fail to be bi-Lipschitz equivalent in general \cite{Bayraktar2021Strong}.
    \end{remark}

    \subsection{A Universal Lower Bound on the Distortion}\label{ssec:LowerBoundDistortion}

    In all the constructions considered in the prior subsection, we had seen that the distortion grows in the number of rows of the matrices $\bfX \in \bbR^{\nrows \times \ncolumns}$. We will now show that one cannot hope to get rid of this growth in $\nrows$ completely: specifically, the distortion is at least in $\Omega(\nrows^{1/2})$.

\begin{theorem}\label{thm:LowerBoundDistortion}
Let $d,n,D$ be natural numbers and assume that $\ncolumns > 1$. Let
$\sigma_1(\bfA) \geq \sigma_2(\bfA) \geq \cdots \geq \sigma_{\ncolumns}(\bfA) \geq 0$
denote the singular values of the matrix $\bfA$. Then the lower Lipschitz constant of
$\QuotientSortEmbedding{\bfA}$ is less or equal than
\begin{equation*}
    \frac{(2 + 1/\nrows)^{1/2} \pi}{\nrows^{1/2}} \cdot
    \left( \sigma_{\ncolumns-1}^2(\bfA) + \sigma_{\ncolumns}^2(\bfA) \right)^{1/2} \lesssim
    \nrows^{-1/2} \cdot
    \left( \sigma_{\ncolumns-1}^2(\bfA) + \sigma_{\ncolumns}^2(\bfA) \right)^{1/2}.
\end{equation*}
Therefore, the distortion of $\QuotientSortEmbedding{\bfA}$ is in
$\Omega(\nrows^{1/2})$.
\end{theorem}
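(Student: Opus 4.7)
The plan is to construct explicit two-dimensional configurations $\tilde\bfX,\tilde\bfY \in \bbR^{n\times 2}$ in distinct $S_n$-orbits, then lift them to $\bbR^{n\times d}$ via the two smallest left singular vectors of $\bfA$. Let $\bfU := [\bfu_{d-1},\bfu_d] \in \bbR^{d\times 2}$ and set $\tilde\bfA := \bfU^\top \bfA \in \bbR^{2\times D}$; its two rows are orthogonal with norms $\sigma_{d-1},\sigma_d$, so $\|\tilde\bfA\|_F^2 = \sigma_{d-1}^2 + \sigma_d^2$. The lifts $\bfX := \tilde\bfX \bfU^\top$ and $\bfY := \tilde\bfY \bfU^\top$ have rows in $\Ran(\bfU)$; since $\bfU$ has orthonormal columns, the lifting is an isometric embedding on matrices with rows in $\Ran(\bfU)$, so $\distance{\bfX}{\bfY} = \distance{\tilde\bfX}{\tilde\bfY}$, and $\bfX\bfA = \tilde\bfX\tilde\bfA$ gives $\beta_\bfA(\bfX) = \beta_{\tilde\bfA}(\tilde\bfX)$. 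It thus suffices to exhibit 2D configurations satisfying the analogous bound in terms of $\|\tilde\bfA\|_F$.

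For the 2D construction I take $\tilde\bfX$ to be the regular unit $n$-gon, with rows $\tilde\bfx_j = (\cos(2\pi j/n),\sin(2\pi j/n))$, and $\tilde\bfY := R_{\pi/n} \tilde\bfX$ its rotation by the half-spacing angle---these are in distinct orbits since $R_{\pi/n}$ is not among the rotational symmetries of the $n$-gon. Minimizing over the relevant cyclic permutations leaves a residual rotation of $\pi/n$, giving $\distance{\tilde\bfX}{\tilde\bfY}^2 = 2n(1-\cos(\pi/n))$. Writing $\tilde\bfa_k = \|\tilde\bfa_k\|(\cos\tau_k,\sin\tau_k)$ and introducing the $(2\pi/n)$-periodic sorted profile $\phi(\tau) := \sort{(\cos(2\pi j/n - \tau))_{j=0}^{n-1}}$, rotational equivariance yields $\sort{\tilde\bfX\tilde\bfa_k} = \|\tilde\bfa_k\|\phi(\tau_k)$ and $\sort{\tilde\bfY\tilde\bfa_k} = \|\tilde\bfa_k\|\phi(\tau_k - \pi/n)$, so
\[
\|\beta_{\tilde\bfA}(\tilde\bfX) - \beta_{\tilde\bfA}(\tilde\bfY)\|_F^2 = \sum_{k=1}^D \|\tilde\bfa_k\|^2\, \|\phi(\tau_k) - \phi(\tau_k - \pi/n)\|_2^2,
\]
and the identity $\sum_k \|\tilde\bfa_k\|^2 = \|\tilde\bfA\|_F^2$ reduces the goal to a sharp $\tau$-uniform upper bound on $\|\phi(\tau) - \phi(\tau - \pi/n)\|_2^2$.

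The main obstacle is exactly this pointwise estimate. The naive argument uses the a.e.\ identity $\|\partial_\tau\phi\|_2^2 = n/2$ (the squared derivatives of the entries $\cos(2\pi j/n - \tau)$ sum to $n/2$, and sorting is a permutation) together with Cauchy--Schwarz to give $\|\phi(\tau) - \phi(\tau - \pi/n)\|_2^2 \le \pi^2/(2n)$; but this matches $\distance{\tilde\bfX}{\tilde\bfY}^2 \sim \pi^2/n$ in order and so produces only an $O(1)\|\tilde\bfA\|_F$ ratio, losing the decisive factor $n^{-1/2}$. Closing this gap requires either a finer analysis of the sorted cosine order statistics (exploiting their approximate antisymmetry about the $n$-gon's mean to extract an additional factor $1/\sqrt n$), or an alternative 2D construction---for instance, two $n$-gons of slightly different radii with a matched rotation---that genuinely delivers $\|\sort{\tilde\bfX\bfa} - \sort{\tilde\bfY\bfa}\|_2^2 \lesssim \distance{\tilde\bfX}{\tilde\bfY}^2/n$ uniformly in $\bfa\in S^1$. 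Once such a bound is in hand, summing against $\|\tilde\bfa_k\|^2$ and dividing by $\distance{\tilde\bfX}{\tilde\bfY}^2$ delivers the claimed ratio, and a final division by the upper Lipschitz constant $\sigma_1(\bfA)$ yields the distortion lower bound $\Omega(n^{1/2})$.
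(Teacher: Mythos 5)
Your reduction to two dimensions is sound and matches the paper's: passing to $\tilde\bfA = \bfU^\top\bfA$ where $\bfU$ spans the two least significant left singular directions, and lifting 2D configurations by $\bfU^\top$, is exactly the right move and correctly isolates $\sigma_{d-1}^2 + \sigma_d^2$. The gap is in the choice of 2D configuration, and it is a real one that you yourself flag but do not close.

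Your $\tilde\bfX$ (regular $n$-gon) and $\tilde\bfY$ (its rotation by $\pi/n$) cannot work. As you compute, $\distance{\tilde\bfX}{\tilde\bfY}^2 = 2n(1-\cos(\pi/n)) \asymp \pi^2/n$, while the per-direction sorted profile satisfies $\|\phi(\tau)-\phi(\tau-\pi/n)\|_2^2 \le \pi^2/(2n)$ by the Lipschitz/Cauchy--Schwarz argument, and this bound is tight in order: for directions $\tau$ in generic position the sorted cosines have consecutive spacing $\asymp 1/n$ across $\Theta(n)$ indices, so a half-step shift moves $\Theta(n)$ entries each by $\Theta(1/n)$, giving $\|\phi(\tau)-\phi(\tau-\pi/n)\|_2^2 \asymp 1/n$. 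Both numerator and denominator scale the same way in $n$, so the ratio is $\Theta(1)$ and no $n^{-1/2}$ factor can be extracted from this pair --- not by a finer analysis of the order statistics, because the two quantities genuinely have the same order. The problem is structural: for two full $n$-gons the distance is spread evenly across all $n$ points, and so is the embedding difference, so nothing cancels.

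The paper's construction exploits a different asymmetry. Keep $\bfX$ the $n$-gon (in the last two coordinates) but take $\bfY$ equal to $\bfX$ with a \emph{single vertex replaced by the origin}. Then $\distance{\bfX}{\bfY}=1$ is concentrated entirely in one row, while for each column $\bfa_k$ the embedding difference is absorbed by a cyclic-shift permutation $\sigma_k$ that relabels the polygon so the ``hole'' lands on a vertex nearly orthogonal to $\bfa_k$: telescoping the shift gives $\sum_i \|\bfx_i - \bfx_{i+1}\|_2^2 + \abs{\bfx_{i_k}^\top\bfa_k'}^2 \lesssim (1/n)\|\bfa_k'\|_2^2$, and summing over $k$ produces the needed $O(1/n)\,(\sigma_{d-1}^2+\sigma_d^2)$ bound against a distance of order one. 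That is the concrete version of your ``alternative 2D construction'' remark; without it your proof does not reach the $n^{-1/2}$ factor.
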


    \begin{proof}
        Let us consider the singular value decomposition $\bfA = \bfU \bfSigma \bfV$, with $\bfU \in \bbR^{\ncolumns \times \ncolumns}$, $\bfV \in \bbR^{\ntemplates \times \ntemplates}$ orthogonal matrices and $\bfSigma \in \bbR^{\ncolumns \times \ntemplates}$ containing the singular values of $\bfA$ on its diagonal. Then, we may assume, without loss of generality\footnote{Because $\beta_{\mathbf U\bfA}(\mathbf X) = \beta_{\bfA}(\mathbf X \mathbf U)$ and the map $\mathbf X \mapsto \mathbf X \mathbf U$ is an $S_n$-equivariant isometry, the maps $\beta_{\mathbf U\bfA}$ and $\beta_{\bfA}$ share the same lower and upper Lipschitz bounds.}, that
        \begin{equation*}
            \bfA = \left(\begin{array}{ccc|c}
                \sigma_1 & & & \\
                & \ddots & & \bfnull_{\ncolumns \times (\ntemplates - \ncolumns)} \\
                & & \sigma_d & 
            \end{array}\right) \begin{pmatrix}
                \mbox{---} & \overline \bfv_1 & \mbox{---} \\
                & \vdots & \\
                \mbox{---} & \overline \bfv_\ntemplates & \mbox{---} \\
            \end{pmatrix} = \begin{pmatrix}
                \mbox{---} & \sigma_1 \overline \bfv_1 & \mbox{---} \\
                & \vdots & \\
                \mbox{---} & \sigma_\ncolumns \overline \bfv_\ncolumns & \mbox{---} \\
            \end{pmatrix},
        \end{equation*}
        where $(\overline \bfv_i)_{i = 1}^\ntemplates \in \bbR^\ntemplates$ denote the row vectors of $\bfV$, which form an orthonormal basis of $\bbR^{\ntemplates}$. We denote the columns of $\bfA$ by $\bfa_k$.

        Now, consider the matrices $\bfX, \bfY \in \bbR^{\nrows \times \ncolumns}$ with rows 
        \begin{equation*}
            \bfx_i := \begin{pmatrix}
                \bfnull_{1 \times (\ncolumns-2)} & \cos( 2 \pi i / \nrows ) & \sin( 2 \pi i / \nrows )
            \end{pmatrix}
        \end{equation*}
        as well as $\bfy_1 := \bfnull_{1 \times \ncolumns}$ and $\bfy_i := \bfx_i$ for $i = 2,\dots,\nrows$. Since all rows of $\mathbf X$ and $\mathbf Y$ are supported on the last two coordinates, only the components of $\mathbf A$ along the singular directions corresponding to $\sigma_{d-1}$ and $\sigma_d$ contribute. Denote by $\mathbf a_k' \in \mathbb R^2$ the projection of $\mathbf a_k$ onto this two-dimensional subspace. Then, 
        \begin{equation*}
            \sum_{k = 1}^D \lVert \mathbf a_k' \rVert^2 = \sigma_{d-1}^2 + \sigma_d^2.
        \end{equation*}
        Moreover, direct computations show that $\distance{\bfX}{\bfY} = 1$ as well as 
        \begin{equation*}
            \norm{\SortEmbedding{\bfA}(\bfX) - \SortEmbedding{\bfA}(\bfY)}_\mathrm{F}^2 = \sum_{k = 1}^\ntemplates \norm{\sort{\bfX \bfa_k} - \sort{\bfY \bfa_k}}_2^2 \leq \sum_{k = 1}^\ntemplates \norm{(\bfX - \sigma_k \bfY) \bfa_k}_2^2,
        \end{equation*}
        for any choice of permutations $\sigma_k \in S_\nrows$.
        
        Let us choose the permutations in the following way: fix an arbitrary $k \in [\ntemplates]$ and let $i_k \in [\nrows]$ be such that $\bfx_{i_k}$ is almost orthogonal to $\bfa_k$; i.e., such that 
        \begin{equation*}
            \abs{\bfx_{i_k}^\top \bfa_k} = \abs*{ \begin{pmatrix}
                \cos( 2 \pi i_k / \nrows ) & \sin( 2 \pi i_k / \nrows )
            \end{pmatrix} \bfa_k' } \leq \frac{\pi}{\nrows} \norm{\bfa_k'}_2
        \end{equation*}
        where we used that the vectors $\bigl(\cos(2\pi i / \nrows), \sin(2\pi i / \nrows)\bigr)$ are equidistributed on the unit circle with angular spacing $2\pi / \nrows$. Consequently, there always exists such a vector whose angle with a unit vector orthogonal to $\bfa_k'$ is at most $\pi / \nrows$, which yields the stated bound. We will then define $\sigma_k \in S_\nrows$ by 
        \begin{equation*}
            \sigma_k(i) := \begin{cases}
                i+1 & \mbox{if } i < i_k, \\
                1 & \mbox{if } i = i_k, \\
                i & \mbox{if } i > i_k
            \end{cases}
        \end{equation*}
        provided that $i_k \leq \nrows / 2 + 1$ and otherwise
        \begin{equation*}
            \sigma_k(i) := \begin{cases}
                \nrows & \mbox{if } i = 1, \\
                i & \mbox{if } 1 < i < i_k, \\
                1 & \mbox{if } i = i_k, \\
                i-1 & \mbox{if } i > i_k.
            \end{cases}
        \end{equation*}
        (In this way, there are at most $\lceil \nrows / 2 \rceil$ mismatches on the unit circle.)

        Let us consider the case $i_k \leq \nrows / 2 + 1$ first. Let $a$ be the lower Lipschitz bound. We can estimate
        \begin{align*}
            a^2 &= a^2 \distance{\bfX}{\bfY}^2 \leq \norm{\SortEmbedding{\bfA}(\bfX) - \SortEmbedding{\bfA}(\bfY)}_\mathrm{F}^2 \leq \sum_{k = 1}^\ntemplates \norm{(\bfX - \sigma_k \bfY) \bfa_k}_2^2 \\
            &= \sum_{k = 1}^\ntemplates \sum_{i = 1}^\nrows \abs{(\bfx_i - \bfy_{\sigma_k(i)})^\top \bfa_k}^2 = \sum_{k = 1}^\ntemplates \left( \sum_{i = 1}^{i_k-1} \abs{(\bfx_i - \bfx_{i+1})^\top \bfa_k}^2 + \abs{\bfx_{i_k}^\top \bfa_k}^2 \right) \\
            &\leq \sum_{k = 1}^\ntemplates \norm{\bfa_k'}_2^2 \left( \sum_{i = 1}^{i_k-1} \norm{\bfx_i - \bfx_{i+1}}_2^2 + \frac{\pi^2}{\nrows^2} \right) \leq \sum_{k = 1}^\ntemplates \norm{\bfa_k'}_2^2 \left( \frac{4 \pi^2 (i_k-1)}{\nrows^2} + \frac{\pi^2}{\nrows^2} \right) \\
            &\leq \frac{\pi^2}{\nrows}\left( 2 + \frac{1}{\nrows} \right) \left( \sigma_{\ncolumns-1}^2 + \sigma_{\ncolumns}^2 \right)
        \end{align*}
        and a similar estimate shows the same for the case $i_k > \nrows / 2 + 1$.

        Finally, since the upper Lipschitz constant of $\QuotientSortEmbedding{\bfA}$ is given by the largest singular value $\sigma_1$ of $\bfA$, it follows that the distortion must be in $\Omega(\nrows^{1/2})$.
    \end{proof}

    \begin{remark}\label{rem:switch_of_quantors}
        In the above proof, we choose $\bfX, \bfY \in \bbR^{n \times d}$ \emph{depending on $\bfA \in \bbR^{d \times D}$} in order to obtain a bound on the lower Lipschitz constant of $\QuotientSortEmbedding{\bfA}$ that depends on the two smallest singular values, $\sigma_{d-1}$ and $\sigma_d$, of $\bfA$. Alternatively, we might as well let $\bfX, \bfY \in \bbR^{n \times d}$ have rows
        \begin{equation*}
            \bfx_i := \begin{pmatrix}
                \bfnull_{1 \times (\ncolumns-2)} & \cos( 2 \pi i / \nrows ) & \sin( 2 \pi i / \nrows )
            \end{pmatrix}
        \end{equation*}
        and $\bfy_1 := \bfnull_{1 \times \ncolumns}$ as well as $\bfy_i := \bfx_i$ independent of $\bfA$ (i.e., without assuming that the rows of $\bfA$ correspond to its singular values multiplied by its right singular vectors). In this way, we obtain the slightly worse upper bound
        \begin{equation*}
            \frac{(2+1/n)^{1/2}\pi}{\nrows^{1/2}} \cdot \left( \sigma_1^2 + \sigma_2^2 \right)^{1/2}
        \end{equation*}
        for the lower Lipschitz constant. The benefit of this approach is, of course, that it is completely independent of $\bfA$. In particular, this shows that there exist matrices $\bfX, \bfY \in \bbR^{n \times d}$ such that, for all $\bfA \in \bbR^{d \times D}$, it holds that 
        \begin{equation}\label{eq:I_cannot_think_of_anything}
            \norm{\SortEmbedding{\bfA}(\bfX) - \SortEmbedding{\bfA}(\bfY)}_\mathrm{F}^2 \lesssim \frac{\sigma_1^2 + \sigma_2^2}{n} \cdot \distance{\bfX}{\bfY}^2.
        \end{equation}
    \end{remark}

    We have presented three settings in which the distortion is in $O(\nrows^2)$ (or in $\smash{\widetilde O(\nrows^2)}$) and we have shown that the distortion is always in $\Omega(\nrows^{1/2})$. This leaves a slight gap and it would be interesting to understand whether the lower bound is tight; i.e., whether one can construct a matrix $\bfA \in \bbR^{\ncolumns \times \ntemplates}$ such that the distortion of $\QuotientSortEmbedding{\bfA}$ is in $\widetilde O(\nrows^{1/2})$ or even in $O(\nrows^{1/2})$.

\subsection{Bi-Lipschitz Bounds for \texorpdfstring{$\beta_{\bfA,\linop}$}{beta} }
\label{ssec:BiLipschitzBoundsBetaAL}
 
The  results in our previous sections, which guarantee bi-Lipschitzness, require a higher embedding dimension than what is required for injectivity only. For example, for injectivity we know that we can choose $D \sim nd$, but to get a bound of $\sim n^2$ on the bi-Lipschitz distortion in Theorem \ref{thm:DistortionUnitSphere} we needed $D\sim n^2d $.  In this subsection, we claim that the mapping $\QuotientSortEmbedding{\bfA,\linop}=\linop\circ \QuotientSortEmbedding{\bfA}$ obtained by applying a dimension reduction linear map $\linop$ to $\QuotientSortEmbedding{\bfA}$, will have similar distortion as $\QuotientSortEmbedding{\bfA}$ with an embedding dimension which is proportional to $nd$. 
\begin{theorem}\label{th-betaA,S}
	Let $\epsilon,\eta \in(0,1)$ and let $n,d,D\geq 2$ be natural numbers. Let $\bfA\in \RR^{d\times D}$ such that $\QuotientSortEmbedding{\bfA}$ is bi-Lipschitz with lower and upper Lipschitz constants $C_1$ and $C_2$, respectively. Then, for natural 
$$M=O\left(\epsilon^{-2}(nd\log(1/\epsilon)+\log(1/\eta)+nd\log(Dn^2))\right),$$
	we have that with probability of at least $1-\eta$, the function $\QuotientSortEmbedding{\bfA,\linopmat}=\linopmat  \operatorname{vec}(\QuotientSortEmbedding{\bfA})$ defined by 
	a matrix $\linopmat \in \RR^{M\times (nD)}$ whose entries are drawn independently from $\smash{\mathcal{N}(0,\frac{1}{\sqrt{M}})}$, will have a lower Lipschitz constant lower bounded by  $(1-\epsilon)C_1$ and upper bounded by $(1+\epsilon)C_2 $. Here, $\operatorname{vec}:\bbR^{n\times D}\rightarrow\bbR^{nD}$ denotes the flattening map.
\end{theorem}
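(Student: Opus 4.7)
The plan is to use a Johnson–Lindenstrauss–style argument, exploiting the fact that $\beta_\bfA$ is piecewise linear on $\bbR^{n\times d}$. Since $\linopmat$ is linear,
\[
\|\beta_{\bfA,\linopmat}(\bfX) - \beta_{\bfA,\linopmat}(\bfY)\|_2 = \|\linopmat\operatorname{vec}(\beta_\bfA(\bfX) - \beta_\bfA(\bfY))\|_2,
\]
so it suffices to ensure that $\linopmat$ approximately preserves the $2$-norm of every vector in the ``difference set'' $\mathcal{D} := \{\operatorname{vec}(\beta_\bfA(\bfX) - \beta_\bfA(\bfY)) : \bfX,\bfY \in \bbR^{n\times d}\} \subset \bbR^{nD}$.

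First I would exhibit $\mathcal{D}$ as a union of low-dimensional linear subspaces. For each fixed tuple of permutations $(\sigma_1,\ldots,\sigma_D) \in S_n^D$, the map $\bfX \mapsto \beta_\bfA(\bfX)$ is linear on the chamber on which $\sigma_k$ sorts $\bfX\bfa_k$ for every $k$, and this chamber is a cell of the hyperplane arrangement $\mathcal{H} := \{(\bfx_i - \bfx_j)^\top \bfa_k = 0 : i < j,\ k \in [D]\}$ in $\bbR^{n\times d}$. Since $\mathcal{H}$ consists of at most $D\binom{n}{2}$ hyperplanes in an $nd$-dimensional ambient space, Zaslavsky's region bound yields at most $N_c := \sum_{i=0}^{nd}\binom{D\binom{n}{2}}{i}$ chambers, with $\log N_c = O(nd \log(Dn))$. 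For each ordered pair $(C_\alpha,C_\beta)$ of chambers, the map $(\bfX,\bfY) \mapsto \operatorname{vec}(\beta_\bfA(\bfX) - \beta_\bfA(\bfY))$ is linear on $C_\alpha \times C_\beta$, so its image lies in a subspace $V_{\alpha,\beta} \subset \bbR^{nD}$ of dimension at most $2nd$. Consequently, $\mathcal{D} \subset \bigcup_{\alpha,\beta} V_{\alpha,\beta}$, a union of $N_s := N_c^2$ subspaces each of dimension at most $2nd$.

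Next I would invoke the standard subspace Johnson–Lindenstrauss lemma: for $\linopmat \in \bbR^{M\times nD}$ with i.i.d.\ $\mathcal{N}(0,1/M)$ entries and any fixed $k$-dimensional subspace $V$, an $\epsilon/4$-net argument on the unit sphere of $V$ (net size $(12/\epsilon)^k$) combined with sub-exponential concentration of $\|\linopmat v\|_2^2$ around $\|v\|_2^2$ yields
\[
\Pr\bigl[\forall v \in V:\ (1-\epsilon)\|v\|_2 \leq \|\linopmat v\|_2 \leq (1+\epsilon)\|v\|_2\bigr] \geq 1 - 2(12/\epsilon)^k \exp(-cM\epsilon^2),
\]
for a universal $c>0$. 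Taking a union bound over the $N_s$ subspaces with $k = 2nd$, the event that $\linopmat$ is a $(1\pm\epsilon)$-isometry on every $V_{\alpha,\beta}$ simultaneously holds with probability at least $1-\eta$ provided $M \gtrsim \epsilon^{-2}(nd\log(1/\epsilon) + \log N_s + \log(1/\eta))$, which matches the stated bound since $\log N_s = O(nd\log(Dn^2))$.

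Conditional on this event, for any $\bfX,\bfY \in \bbR^{n\times d}$, setting $v := \operatorname{vec}(\beta_\bfA(\bfX) - \beta_\bfA(\bfY))$ (which lies in the appropriate $V_{\alpha,\beta}$) and chaining the $(1\pm\epsilon)$-isometry with the assumed bi-Lipschitz bounds of $\QuotientSortEmbedding{\bfA}$ immediately yields
\[
(1-\epsilon)C_1 \cdot \distance{\bfX}{\bfY} \leq \|\beta_{\bfA,\linopmat}(\bfX) - \beta_{\bfA,\linopmat}(\bfY)\|_2 \leq (1+\epsilon)C_2 \cdot \distance{\bfX}{\bfY},
\]
as required. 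The main obstacle I anticipate is obtaining the sharp chamber count: a naive enumeration by permutation tuples would give $(n!)^D$ pieces and thus $\log N_s = O(Dn\log n)$, which is larger than the claimed $O(nd\log(Dn^2))$ whenever $D \gg d$; passing through the hyperplane arrangement in $\bbR^{nd}$ (so that only the effective domain dimension appears in the exponent) is precisely what delivers the correct dependence.
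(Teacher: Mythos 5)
Your proposal is correct and follows essentially the same route as the paper's proof: exhibit the difference set $\{\beta_\bfA(\bfX)-\beta_\bfA(\bfY)\}$ as a union of at most $(Dn^2)^{O(nd)}$ linear images of dimension $\leq 2nd$ via a hyperplane-arrangement count, then apply a subspace Johnson--Lindenstrauss / oblivious subspace embedding bound with a union bound over the pieces. The only cosmetic difference is that you count chambers of the arrangement separately in each $\bbR^{nd}$ factor and take products $C_\alpha\times C_\beta$, whereas the paper counts regions of the joint arrangement in $\bbR^{2nd}$ directly; since the $\bfX$-hyperplanes and $\bfY$-hyperplanes decouple, these are the same decomposition and yield the same $\log$ count up to constants.
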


\begin{proof}
	We begin with the following lemma
	\begin{lemma}
There is a finite number of linear transformations
\(\calA_1,\ldots,\calA_r:\RR^{2dn}\to \RR^{n\times D}\), where
\[
r=r(n,d,D)\leq (n^2D)^{2nd},
\]
such that, for all \((\bfX,\bfY)\in \RR^{2nd}\), there exists some index
\(t(\bfX,\bfY)\in [r]\) such that
\begin{equation}\label{eq:PWL}
\beta_\bfA(\bfX)-\beta_\bfA(\bfY)=\calA_t(\bfX,\bfY).
\end{equation}
\end{lemma}

	\begin{proof}
		In this proof, we will identify the space of matrices $(\bfX,\bfY)\in \RR^{n\times d}\oplus \RR^{n \times d}  $ with $\RR^{2nd} $.	
		
		We consider for all $k\in D$ and $i,j\in[n]\times [n]$, where $i \not= j$, the hyperplanes 
		\begin{align*}& H_{i,j,k}^{(1)}=\{(\bfX,\bfY)\in \RR^{2nd}\,|\, \bfx_i^T\bfa_k=\bfx_j^T\bfa_k  \}, \\
			&  H_{i,j,k}^{(2)}=\{(\bfX,\bfY)\in \RR^{2nd}\,| \, \bfy_i^T\bfa_k=\bfy_j^T\bfa_k  \} \end{align*}
		This gives us a  collection of 
$$H(n,d,D)=2D\cdot \left( \begin{array}{c} n \\ 2 \end{array}  \right)
=D(n^2-n) $$
		hyperplanes, defined in a vector space of dimension $T(n,d)=2nd $. From the theory of hyperplane arrangement \cite{zaslavsky1975facing,hypernotes}, we know that
		\begin{equation}\label{eq:complement}
			\RR^{2nd}\setminus \bigcup_{1\leq i<j \leq n, k\in [D], \ell\in \{1,2\}} H_{i,j,k}^{(\ell)} 
		\end{equation}
		can be written as a finite union of $r$ disjoint open convex polyhedra, where 
$$r\leq 1+H+
\left( \begin{array}{c} H \\ 2 \end{array}  \right)
+\ldots+ 
\left( \begin{array}{c} H \\ T \end{array}  \right)
. $$
		It can be easily shown by induction that, if $H,T\geq 2$, then this expression is bounded by 
$$r\leq 1+H+ \left( \begin{array}{c} H \\ 2 \end{array}  \right)
+\ldots+ \left( \begin{array}{c} H \\ T \end{array}  \right)
\leq H^T, $$
		which for our value of $T(n,d)$ and $H(n,d,D) $ gives us 
		$$r(n,d,D)\leq (Dn^2)^{2nd}  $$
		disconnected open polyhedra $\calP_1,\ldots,\calP_r$. We claim that, for each such polyhedron $\calP_t $, there corresponds a unique $\calA_t $ satisfying \eqref{eq:PWL} for all $(\bfX,\bfY)\in \calP_t$. To see this, fix some such $(\bfX,\bfY)$. Then, there exist $D $ permutation matrices $\bfP[k,X]$, $k\in [D]$ and $D$ permutation matrices $\bfP[k,Y]$, $k\in [D]$ , such that for  $k \in [D]$ the  $k$-th column of $\beta_\bfA(\bfX)-\beta_\bfA(\bfY)$ is given by
		\begin{equation*}
			[\beta_A(\bfX)-\beta_A(\bfY)]_{*,k} = \sort{\bfX \bfa_k}-\sort{\bfY \bfa_k} = \bfP[k,\bfX]\bfX \bfa_k-\bfP[k,\bfY]\bfY \bfa_k.
		\end{equation*}
		We now claim that, if $(\bfX,\bfY)$ and $(\hat\bfX,\hat \bfY)$ belong to the same polytope ${\cal P}_t $, then 
	\begin{equation}\label{eq:sameP}\bfP[k,\bfX]=\bfP[k,\hat \bfX], \qquad \mbox{for all } k\in [D] .\end{equation}
		Otherwise, there would have to be some $k\in [D]$ and $1\leq i<j \leq n$ such that 
		\[x_i^T a_k - x_j^T a_k < 0 < \hat x_i^T a_k - \hat x_j^T a_k\]
		This would imply, that on the straight line between $\bfX$ and $\hat \bfX$ there is some point $\tilde \bfX $ for which $\tilde \bfx_i^T\bfa_k-\tilde \bfx_j^T\bfa_k=0 $. But $\tilde \bfX$ would also be in the polyhedron $\calP_t$ since it is convex, which would mean that $\calP_t$ instersects the hyperplane $\smash{H_{i,j,k}^{(1)}}$ which  is a contradiction. Thus we have proven \eqref{eq:sameP}, and a similar argument also shows that
		$$\bfP[k,\bfY]=\bfP[k,\hat \bfY] . $$
		Accordingly,  for $k\in [D]$, $t\in [r]$ we define $\bfP[k,t,1] $ and $\bfP[k,t,2] $  to be the permutations satisfying
		\[
            \bfP[k,\bfX]=\bfP[k,t,1], \qquad \bfP[k,\bfY]=\bfP[k,t,2],
        \]
        for all $(\bfX,\bfY)\in \RR^{2nd}$, and we define $\calA_t:\RR^{2nd}\to \RR^{n\times D} $ to be the linear mapping whose $k$-th column is given by 
		$$\left[\calA_t(\bfX,\bfY) \right]_{*,k}=\bfP[k,t,1]\bfX \bfa_k-\bfP[k,t,2]\bfY \bfa_k.$$
		From what we saw, we know that $\calA_t(\bfX,\bfY)=\beta_A(\bfX)-\beta_A(\bfY) $ for all $(\bfX,\bfY)\in \calP_t$. Thus, we know that \eqref{eq:PWL} holds with at most $r$ different linear transformations, at least for all $(\bfX,\bfY)$ in the complement of the hyperplanes we defined. The fact that \eqref{eq:PWL} holds also for $(\bfX,\bfY)$ belonging to one of the hyperplanes follows from a continuity argument.
	\end{proof}

	To conclude the proof of the theorem \ref{th-betaA,S}, we will use some known results from the field of sketching algorithms, see e.g., \cite{robi,mit}.
	
	A random matrix $\linopmat \in \mathbb{R}^{M\times N}$ is called an $(\epsilon, \delta, k)$-Oblivious Subspace Embedding (OSE) if, for all linear $\mathcal A : \mathbb R^k \to \mathbb R^N$,
	\begin{equation*}
	    \mathbb{P}_\linopmat \lbrace \forall \mathbf x \in \mathbb{R}^k,~ \|\linopmat \calA \mathbf x\| \in  (1 \pm \epsilon)\|\calA \mathbf x\| \rbrace \geq  1-\delta.
	\end{equation*}
	It is known that if $M=O(\epsilon^{-2}(k \log(1/\epsilon)+\log(1/\delta))$ and the entries of $\linopmat\in \RR^{M\times N}$ are drawn independently from a normal distribution scaled by $\smash{\tfrac{1}{\sqrt{M}}}$, then $\linopmat$ is a $(\epsilon, \delta, k)$-Oblivious Subspace Embedding. 
	
	Using a simple union bound, we can extend this to the case of $r$ different linear maps, namely, for all linear $\calA_1,\ldots,\calA_r : \mathbb R^k \to \mathbb R^N$
	\begin{equation} \label{eq:As}
		 \mathbb{P}_\linopmat \lbrace \forall \mathbf x \in \mathbb{R}^k, \forall j\in [r]
		,~ \|\linopmat \calA_j \mathbf x\| \in  (1 \pm \epsilon)\|\mathcal A_j \mathbf x\| \rbrace \geq  1-r\delta.
	\end{equation}
	To conclude the proof of the theorem, we use this result, setting $k=2nd$, $N=nD$, $r=r(n,d,D) \leq (Dn^2)^{2nd}$, $\delta=\frac{\eta}{r}$, and obtain that, for  
	\begin{align*}M &= O(\epsilon^{-2}(k \log(1/\epsilon)+\log(1/\delta))\\
		&=O(\epsilon^{-2}(2nd \log(1/\epsilon)+\log(1/\eta)+\log((Dn^2)^{2nd}))\\
		&=O\left(\epsilon^{-2}\left(\log(1/\eta)+nd (\log(1/\epsilon)+\log(Dn^2)) \right) \right),
	\end{align*}
	we have that the matrix $\linopmat$ satisfies \eqref{eq:As} with probability $\geq 1-r\delta=1-\eta$ for the collection of $\calA_1,\ldots,\calA_r$ described in the lemma. Therefore, for any fixed  $\bfX,\bfY \in \RR^{d\times n}$, there is an appropriate $t\in [r]$ such that $\beta_\bfA(\bfX)-\beta_\bfA(\bfY)=\calA_t(\bfX,\bfY), $ and then
	\begin{align*}
		\| \beta_{\bfA,\linopmat}(\bfX)-\beta_{\bfA,\linopmat}(\bfY)\|_2 &=\|\linopmat\left( \beta_{\bfA}(\bfX)-\beta_{\bfA}(\bfY) \right)\|_2 =\|\linopmat\left( \calA_t(\bfX,\bfY) \right)\|_2\\
		&\geq (1-\epsilon)\|\calA_t(\bfX,\bfY)\|_2 = (1-\epsilon) \|\beta_{\bfA}(\bfX)-\beta_{\bfA}(\bfY)\|_F\\
		&\geq (1-\epsilon)C_1    \distance{\bfX}{\bfY} 
	\end{align*}
	Similarly, we can show that 
	$$ \| \beta_{\bfA,\linopmat}(\bfX)-\beta_{\bfA,\linopmat}(\bfY)\|_2\leq  (1+\epsilon)C_2    \distance{\bfX}{\bfY}$$
	which concludes the proof.
\end{proof}

    \section{Numerical Results}\label{sec:NumericalResults}
    We conclude with some numerical experiments looking into the optimal embedding dimension of $\beta_\bfA$. For small dimensions $n$ and $d$, we might use \cite[Proposition~3.8 on p.~14]{Balan2025Permutation} to analyse whether our results (Theorem~\ref{thm:orbit_separation_beta} and \ref{thm:matousek2}) are tight. The set of matrices $\bfX \in \bbR^{\ncolumns \times \nrows}$ at which $\SortEmbedding{\bfA}$ is orbit separating, that is, at which $\SortEmbedding{\bfA}(\bfX) = \SortEmbedding{\bfA}(\bfY)$ implies $\bfX \sim_{S_n} \bfY$ for all $\bfY \in \bbR^{\ncolumns \times \nrows}$, is completely characterized for fixed $\bfA = (\boldsymbol{I}_d | \bfa_1 ~ \dots ~ \bfa_{D-d}) \in \bbR^{\ncolumns \times \ntemplates}$: indeed, $\SortEmbedding{\bfA}$ is \emph{not} orbit separating at $\bfX \in \bbR^{\ncolumns \times \nrows}$ if and only if there exist $(\bfP_i)_{i = 1}^d \in S_n$, $(\bfQ_j)_{j = 1}^{D-d} \in S_n$ such that 
    \begin{gather*}
        \forall j \in [D-d],~ \begin{pmatrix}
            (\bfP_1 - \bfQ_j) \bfx_1 & \dots & (\bfP_d - \bfQ_j) \bfx_d
        \end{pmatrix} \bfa_j = 0, \\
        \text{and}\\
        \forall \bfP \in S_n, \exists i \in [d] : (\bfP - \bfP_i) \bfx_i \neq \bfnull_n.
    \end{gather*}
    
    The conditions above can be implemented so that we may simply check whether a given $\bfA = (\boldsymbol{I}_d | \bfa_1 ~ \dots ~ \bfa_{D-d}) \in \bbR^{\ncolumns \times \ntemplates}$ is such that $\SortEmbedding{\bfA}$ separates orbits. Applying this idea to matrices $\bfA$ whose last $D-d$ columns are randomly generated, allows us to conclude that, in the following cases, $\SortEmbedding{\bfA}$ separates orbits: 

    \begin{itemize}
        \item $n = 3$, $d = 3$, $D = 6$, \[\bfA = \begin{pmatrix}
            1 & 0 & 0 & 0.56 & 0.66 & 0.21 \\
            0 & 1 & 0 & 0.24 & 0.58 & 0 \\
            0 & 0 & 1 & 0.71 & 0.53 & 0.45
        \end{pmatrix}\]
        \item $n = 3$, $d = 4$, $D = 8$, \[\bfA = \begin{pmatrix}
            1 & 0 & 0 & 0 & 0.32 & 0.38 & 0.49 & 0.75 \\
            0 & 1 & 0 & 0 & 0.95 & 0.77 & 0.45 & 0.28 \\
            0 & 0 & 1 & 0 & 0.03 & 0.80 & 0.65 & 0.68 \\
            0 & 0 & 0 & 1 & 0.44 & 0.19 & 0.71 & 0.66
        \end{pmatrix}\]
        \item $n = 4$, $d = 2$, $D = 4$, \[\bfA = \begin{pmatrix}
            1 & 0 & 0.83 & 0.16 \\
            0 & 1 & 0.95 & 0.78
        \end{pmatrix}\]
        \item $n = 5$, $d = 2$, $D = 5$, \[\bfA = \begin{pmatrix}
            1 & 0 & 0.814724 & 0.126987 & 0.632359 \\
            0 & 1 & 0.905792 & 0.913376 & 0.097540
        \end{pmatrix}\]
    \end{itemize}
    In several cases, our implementation produced matrices 
    $\bfA$ for which $\beta_\bfA$ does \emph{not} separate orbits. This might suggest that in these cases orbit separation fails generically. Concretely, randomly generated matrices did not produce orbit generating embeddings when:
    \begin{multicols}{2}
        \begin{itemize}
            \item $n = 3$, $d = 2$, $D = 3$ 
            \item $n = 3$, $d = 3$, $D = 5$ 
            \item $n = 3$, $d = 4$, $D = 7$ 
            \item $n = 5$, $d = 2$, $D = 5$ 
        \end{itemize}
    \end{multicols}
    \noindent
    We have not considered higher dimensional cases because our implementation becomes numerically intractable once $n$ or $d$ are large.

    \begin{table}
        \centering
        \begin{subtable}[t]{0.54\textwidth}
            \centering
            \begin{tabular}{ c | c c c c c }
                 $n \backslash d$ & $2$ & $3$ & $4$ & $5$ & $6$ \\ \hline
                 $2$ & $\boldsymbol{6}$ & $\boldsymbol{10}$ & $\boldsymbol{14}$ & $\boldsymbol{18}$ & $\boldsymbol{22}$ \\
                 $3$ & $12$ & $21^{\scriptscriptstyle (18)}$ & $30^{\scriptscriptstyle (24)}$ & $39$ & $48$ \\
                 $4$ & $20^{\scriptscriptstyle (16)}$ & $36$ & $52$ & $68$ & $84$ \\
                 $5$ & $30^{\scriptscriptstyle (25)}$ & $55$ & $80$ & $105$ & $130$ \\
                 $6$ & $42$ & $78$ & $114$ & $150$ & $186$
            \end{tabular}
            \caption{Minimal embedding dimension $nD$ for which our result, Theorem~\ref{thm:orbit_separation_beta}, guarantees that $\beta_\bfA$ separates orbits (with full spark $\bfA$).}\label{ta:Minimal}
        \end{subtable}
        ~
        \begin{subtable}[t]{0.42\textwidth}
            \centering
            \begin{tabular}{ c | c c c c c }
                 $n \backslash d$ & $2$ & $3$ & $4$ & $5$ & $6$ \\ \hline
                 $2$ & $\boldsymbol{4}$ & $\boldsymbol{8}$ & $\boldsymbol{12}$ & $\boldsymbol{16}$ & $\boldsymbol{20}$ \\
                 $3$ & $6$ & $12$ & $18$ & $24$ & $30$ \\
                 $4$ & $\boldsymbol{12}$ & $24$ & $36$ & $48$ & $60$ \\
                 $5$ & $15$ & $30$ & $45$ & $60$ & $75$ \\
                 $6$ & $18$ & $36$ & $54$ & $72$ & $90$
            \end{tabular}
            \caption{Maximal embedding dimension $nD$ for which our result, Theorem~\ref{thm:matousek2}, shows that $\beta_\bfA$ does not separate orbits (independently of the choice of $\bfA$).}\label{ta:Maximal}
        \end{subtable}
        \caption{Entries in which our results are \textbf{optimal} (i.e., yield the smallest possible $\ntemplates \in \bbN$ for which there exists an $\bfA \in \bbR^{\ncolumns \times \ntemplates}$ such that $\SortEmbedding{\bfA}$ separates orbits/yield the largest possible $\ntemplates$ for which $\SortEmbedding{\bfA}$ does not separate orbits independently of the choice of $\bfA$) are highlighted in \textbf{bold}. Entries for which we know that our results are suboptimal are decorated with a dimension for which we were able to find a orbit separating embedding in brackets. All dimensions for which it is not known whether our result is optimal have no special styling.}
        \label{ta:MinimalMaximal}
    \end{table}

    We summarize our current knowledge, consisting of Theorems~\ref{thm:orbit_separation_beta} and \ref{thm:matousek2} as well as the above results, in two tables. Table~\ref{ta:Minimal} records the minimal embedding dimension $nD$ for which orbit separation is guaranteed while Table~\ref{ta:Maximal} records the maximal embedding dimension $nD$ for which orbit separation is ruled out independently of $\bfA$ (see also Figure~\ref{fig:Gaps} for a visualization when $d=2$). The code used to generate the examples in this section is publicly available on GitHub at \href{https://github.com/rvbalan/SortingBasedUniversalKeys}{\texttt{rvbalan/SortingBasedUniversalKeys}}.

    \section{Conclusions}\label{sec:Conclusion}

In this paper we studied bi-Lipschitz embeddings of the quotient space $\bbR^{n\times d}/{\sim}$, where the equivalence is induced by the
action $\mathbf X\mapsto \mathbf P \mathbf X$ of the permutation group $S_n$. We discussed three $S_n$-invariant embeddings $\beta_\bfA$, $\beta_{\bfA,\linop}$, 
and  $\delta_{\bfA,\bfB}$, constructed via linear mappings and sorting operators.

We demonstrated that injective embeddings are achievable with relatively low embedding dimensions: 
as low as $n^2(d-1)+n$ for $\beta_\bfA$, and as low as $2nd-d$ 
for $\beta_{\bfA,\linop}$ and $\delta_{\bfA,\bfB}$. 

We then analyzed the bi-Lipschitz distortion of these embeddings.
When $D\sim n^2d$, the map $\QuotientSortEmbedding{\bfA}$ achieves  distortion scaling as $O(n^2)$ up to logarithmic factors, independent of $d$. Moreover,
$\QuotientSortEmbedding{\bfA,\linop}$ can attain comparable bi-Lipschitz distortion, 
provided the embedding dimension scales proportionally to $nd$, up to logarithmic factors. On the other hand, we show that the distortion of $\QuotientSortEmbedding{\bfA}$ cannot be better than $\sqrt{n}$.

Many interesting open questions remain. Firstly, there is a gap between the best $\sim n^2$ distortion we can achieve and the $\sqrt n$ lower bound on the distortion, and it will be interesting to close this gap and definitely find the optimal distortion attainable by a mapping of the form $\QuotientSortEmbedding{\bfA}$. Secondly, our results focus on the metric obtained by quotienting the Frobenius norm over the permutation group, and it could be interesting to understand the distortion with respect to other norms. Thirdly, it could be interesting to understand the bi-Lipschitz distortion of other permutation-invariant embeddings, like the max-filtering \cite{Cahill2024GroupInvariant} or FSW \cite{Amir2024Fourier} embeddings. Finally, while some works have tried to establish the advantage of sorting-based embeddings and other bi-Lipschitz embeddings in machine learning tasks \cite{Balan2025Permutation,Davidson2024Hoelder,sverdlov2024fsw}, less expressive pooling mechanisms are still much more prevalent. Empirically establishing cases where bi-Lipschitz embeddings are crucial for high performance is thus an important experimental goal.

    % Though DeepSets strove to grasp the orbits' might,
    % It fell from grace where Lipschitz bounds reside.
    % Yet sorting’s hand, through matrix, draws the light—
    % A path where $\beta_\bfA$ doth abide.
    
    % With rows arranged, projections sharp and keen,
    % We proved injective strength with modest scale.
    % No need for depths of size beyond the seen,
    % Just $n(d{-}1){+}1$ shall now prevail.
    
    % Distortion lingers—bounded yet not tight—
    % Its lower edge no less than root of $n$,
    % But constructions tame its upward flight,
    % To $n^2$, free from $d$’s wide-woven pen.
    
    % So let the learned choose with reasoned aim:
    % This map is swift, and faithful to its name.

\section*{Acknowledgments}
The authors acknowledge the use of OpenAI's ChatGPT to assist with phrasing and typesetting suggestions. N.D. has been supported in part by ISF grant 272/23. 
R.B.~has been supported in part by the National Science Foundation under grant NSF DMS-2510216. 

    \bibliography{lit}

\newcommand{\etalchar}[1]{$^{#1}$}
\begin{thebibliography}{MBHSL19}

\bibitem[ABDE26]{Amir2026Stability}
Tal Amir, Tamir Bendory, Nadav Dym, and Dan Edidin.
\newblock The stability of generalized phase retrieval problem over compact groups.
\newblock {\em Applied and Computational Harmonic Analysis}, 82, February 2026.
\newblock \url{https://doi.org/10.1016/j.acha.2025.101838}.

\bibitem[AD25]{Amir2024Fourier}
Tal Amir and Nadav Dym.
\newblock Fourier sliced-{W}asserstein embedding for multisets and measures.
\newblock In Y.~Yue, A.~Garg, N.~Peng, F.~Sha, and R.~Yu, editors, {\em International Conference on Learning Representations}, volume 2025, pages 24590--24629, 2025.
\newblock \url{https://proceedings.iclr.cc/paper_files/paper/2025/file/3dbb8b6b5576b85afb3037e9630812dc-Paper-Conference.pdf}.

\bibitem[AFRT25]{Abdalla2025Recovery}
Pedro Abdalla, Dan Freeman, Jo{\~a}o P.~G. Ramos, and Mitchell~A. Taylor.
\newblock On sharp stable recovery from clipped and folded measurements.
\newblock \url{https://arxiv.org/abs/2506.20054}, June 2025.

\bibitem[AGA{\etalchar{+}}23]{Amir2023Neural}
Tal Amir, Steven Gortler, Ilai Avni, Ravina Ravina, and Nadav Dym.
\newblock Neural injective functions for multisets, measures and graphs via a finite witness theorem.
\newblock In A.~Oh, T.~Naumann, A.~Globerson, K.~Saenko, M.~Hardt, and S.~Levine, editors, {\em Advances in Neural Information Processing Systems}, volume~36 of {\em NeurIPS}, pages 42516--42551. Curran Associates, Inc., 2023.
\newblock \url{https://proceedings.neurips.cc/paper_files/paper/2023/file/84b686f7cc7b7751e9aaac0da74f755a-Paper-Conference.pdf}.

\bibitem[BCE06]{Balan2006On}
Radu Balan, Pete Casazza, and Dan Edidin.
\newblock On signal reconstruction without phase.
\newblock {\em Applied and Computational Harmonic Analysis}, 2006(3):345--356, May 2006.
\newblock \url{https://doi.org/10.1016/j.acha.2005.07.001}.

\bibitem[BCR98]{Bochnak1998Real}
Jacek Bochnak, Michel Coste, and Marie-Françoise Roy.
\newblock {\em Real Algebraic Geometry}, volume~36 of {\em Ergebnisse der Mathematik und ihrer Grenzgebiete. 3.~Folge}.
\newblock Springer, Berlin, Heidelberg, 1998.
\newblock \url{https://doi.org/10.1007/978-3-662-03718-8}.

\bibitem[BG21]{Bayraktar2021Strong}
Erhan Bayraktar and Gaoyue Guo.
\newblock Strong equivalence between metrics of {W}asserstein type.
\newblock {\em Electronic Communications in Probability}, 26:1--13, 2021.
\newblock \url{https://doi.org/10.1214/21-ECP383}.

\bibitem[BHS25]{Balan2025Permutation}
Radu Balan, Naveed Haghani, and Maneesh Singh.
\newblock Permutation-invariant representations with applications to graph deep learning.
\newblock {\em Applied and Computational Harmonic Analysis}, 79, October 2025.
\newblock \url{https://doi.org/10.1016/j.acha.2025.101798}.

\bibitem[Bir46]{Birkhoff1946Three}
G.~Birkhoff.
\newblock Three observations on linear algebra.
\newblock {\em Univ.~Nac.~Tucumán. Revista A.}, 5:147--151, 1946.

\bibitem[Bon13]{Bonnotte2013Unidimensional}
Nicolas Bonnotte.
\newblock {\em Unidimensional and Evolution Methods for Optimal Transportation}.
\newblock Ph{D} thesis, Université Paris-Sud, Scuola Normale Superiore, December 2013.
\newblock \url{https://www.normalesup.org/~bonnotte/doc/phd-bonnotte.pdf}.

\bibitem[BT23a]{Balan2023GInvariantI}
Radu Balan and Efstratios Tsoukanis.
\newblock G-invariant representations using coorbits: Bi-{L}ipschitz properties.
\newblock \url{https://doi.org/10.48550/arXiv.2308.11784}, August 2023.

\bibitem[BT23b]{Balan2023GInvariantII}
Radu Balan and Efstratios Tsoukanis.
\newblock G-invariant representations using coorbits: Injectivity properties.
\newblock \url{https://doi.org/10.48550/arXiv.2310.16365}, October 2023.

\bibitem[BT23c]{Balan2023Relationships}
Radu Balan and Efstratios Tsoukanis.
\newblock Relationships between the phase retrieval problem and permutation invariant embeddings.
\newblock In {\em 2023 International Conference on Sampling Theory and Applications (SampTA)}, New Haven, CT, USA, July 2023. IEEE.
\newblock \url{https://doi.org/10.1109/SampTA59647.2023.10301202}.

\bibitem[BTW24]{Balan2024Stability}
Radu Balan, Efstratios Tsoukanis, and Matthias Wellershoff.
\newblock Stability of sorting based embeddings.
\newblock \url{https://doi.org/10.48550/arXiv.2410.05446}, October 2024.

\bibitem[CCO17]{Carriere2017Sliced}
Mathieu Carri{\`e}re, Marco Cuturi, and Steve Oudot.
\newblock Sliced {W}asserstein kernel for persistence diagrams.
\newblock In {\em Proceedings of the 34th International Conference on Machine Learning}, volume~70 of {\em PMLR}, Sydney, Australia, August 2017.
\newblock \url{https://proceedings.mlr.press/v70/carriere17a/carriere17a.pdf}.

\bibitem[CIM24]{Cahill2024Towards}
Jameson Cahill, Joseph~W. Iverson, and Dustin~G. Mixon.
\newblock Towards a bilipschitz invariant theory.
\newblock {\em Applied and Computational Harmonic Analysis}, 72, September 2024.
\newblock \url{https://doi.org/10.1016/j.acha.2024.101669}.

\bibitem[CIMP24]{Cahill2024GroupInvariant}
Jameson Cahill, Joseph~W. Iverson, Dustin~G. Mixon, and Daniel Packer.
\newblock Group-invariant max filtering.
\newblock {\em Foundations of Computational Mathematics}, 2024.
\newblock \url{https://doi.org/10.1007/s10208-024-09656-9}.

\bibitem[Coh16]{mit}
Michael Cohen.
\newblock {MIT} advanced alogrithms, {MIT} lecture notes, lecture 24, 2016.
\newblock \url{https://people.csail.mit.edu/moitra/docs/6854lec24.pdf}.

\bibitem[DD25]{Davidson2024Hoelder}
Yair Davidson and Nadav Dym.
\newblock On the {H}\"{o}lder stability of multiset and graph neural networks.
\newblock In Y.~Yue, A.~Garg, N.~Peng, F.~Sha, and R.~Yu, editors, {\em International Conference on Learning Representations}, volume 2025, pages 55289--55331, 2025.
\newblock \url{https://proceedings.iclr.cc/paper_files/paper/2025/file/89d0d5c2f720921df93bbb8fef514571-Paper-Conference.pdf}.

\bibitem[Der24]{Derksen2024BiLipschitz}
Harm Derksen.
\newblock Bi-{L}ipschitz quotient embedding for {E}uclidean group actions on data.
\newblock \url{https://doi.org/10.48550/arXiv.2409.06829}, September 2024.

\bibitem[DG24]{Dym2024LowDimensional}
Nadav Dym and Steven~J. Gortler.
\newblock Low-dimensional invariant embeddings for universal geometric learning.
\newblock {\em Foundations of Computational Mathematics}, 25:375--415, 2024.
\newblock \url{https://doi.org/10.1007/s10208-024-09641-2}.

\bibitem[DLM25]{dym2025bi}
Nadav Dym, Jianfeng Lu, and Matan Mizrachi.
\newblock Bi-{L}ipschitz ansatz for anti-symmetric functions.
\newblock {\em arXiv preprint arXiv:2503.04263}, 2025.

\bibitem[Gr{\"u}03]{Gruenbaum2003Convex}
Branko Gr{\"u}nbaum.
\newblock {\em Convex polytopes}, volume 221 of {\em Graduate Texts in Mathematics}.
\newblock Springer, New York, NY, second edition, 2003.
\newblock \url{https://doi.org/10.1007/978-1-4613-0019-9}.

\bibitem[JBM{\etalchar{+}}23]{Joshi}
Chaitanya~K. Joshi, Cristian Bodnar, Simon~V. Mathis, Taco Cohen, and Pietro Li\`{o}.
\newblock On the expressive power of geometric graph neural networks.
\newblock In {\em Proceedings of the 40th International Conference on Machine Learning}, ICML'23. JMLR.org, 2023.

\bibitem[Kra24]{robi}
Robert Krauthgamer.
\newblock Randomized algorithms, lecture notes 5, 2024.
\newblock \url{https://www.wisdom.weizmann.ac.il/~robi/teaching/2025a-RandomizedAlgorithms/}.

\bibitem[Kuh55]{Kuhn1955}
H.~W. Kuhn.
\newblock The {H}ungarian method for the assignment problem.
\newblock {\em Naval Research Logistics Quarterly}, 2(1):83--97, 1955.

\bibitem[MBHSL19]{NEURIPS2019_bb04af0f}
Haggai Maron, Heli Ben-Hamu, Hadar Serviansky, and Yaron Lipman.
\newblock Provably powerful graph networks.
\newblock In H.~Wallach, H.~Larochelle, A.~Beygelzimer, F.~d\textquotesingle Alch\'{e}-Buc, E.~Fox, and R.~Garnett, editors, {\em Advances in Neural Information Processing Systems}, volume~32. Curran Associates, Inc., 2019.

\bibitem[MP23]{Mixon2023Max}
Dustin~G. Mixon and Daniel Packer.
\newblock Max filtering with reflection groups.
\newblock {\em Advances in Computational Mathematics}, 49(82), 2023.
\newblock \url{https://doi.org/10.1007/s10444-023-10084-6}.

\bibitem[MPv08]{matousek}
Ji\v{r}\'{\i} Matou\v{s}ek, Ale\v{s} P\v{r}\'{\i}v\v{e}tiv\'{y}, and Petr \v{S}kovro\v{n}.
\newblock How many points can be reconstructed from k projections?
\newblock {\em SIAM Journal on Discrete Mathematics}, 22(4):1605--1623, 2008.

\bibitem[MQ25]{Mixon2025Injectivity}
Dustin~G. Mixon and Yousef Qaddura.
\newblock Injectivity, stability, and positive definiteness of max filtering.
\newblock {\em Constructive Approximation}, 2025.
\newblock \url{https://doi.org/10.1007/s00365-025-09707-6}.

\bibitem[Qad25]{qaddura2025max}
Yousef Qaddura.
\newblock A max filtering local stability theorem with application to weighted phase retrieval and cryo-em.
\newblock {\em Applied and Computational Harmonic Analysis}, page 101821, 2025.

\bibitem[RD23]{Ravina}
Ravina Ravina and Nadav Dym.
\newblock Analysis of stability and accuracy of permutation invariant embedding schemes / ravina ravina ; [supervision: Nadav dym]., 2023.

\bibitem[RD25]{reshef2025noninjectivitypiecewiselinear}
Ilai Reshef and Nadav Dym.
\newblock On the (non) injectivity of piecewise linear {J}anossy pooling, 2025.

\bibitem[RPDB11]{Rabin2011WassersteinBA}
Julien Rabin, Gabriel Peyr{\'e}, Julie Delon, and Marc Bernot.
\newblock Wasserstein barycenter and its application to texture mixing.
\newblock In {\em Scale Space and Variational Methods in Computer Vision}, 2011.

\bibitem[SDDA24]{sverdlov2024fsw}
Yonatan Sverdlov, Yair Davidson, Nadav Dym, and Tal Amir.
\newblock {FSW-GNN}: A bi-{L}ipschitz {WL}-equivalent graph neural network.
\newblock {\em arXiv preprint arXiv:2410.09118}, 2024.

\bibitem[Sta06]{hypernotes}
Richard~P. Stanley.
\newblock An introduction to hyperplane arrangements, 2006.
\newblock \url{https://www.cis.upenn.edu/~cis6100/sp06stanley.pdf}.

\bibitem[TW24]{pmlr-v237-tabaghi24a}
Puoya Tabaghi and Yusu Wang.
\newblock Universal representation of permutation-invariant functions on vectors and tensors.
\newblock In Claire Vernade and Daniel Hsu, editors, {\em Proceedings of The 35th International Conference on Algorithmic Learning Theory}, volume 237 of {\em Proceedings of Machine Learning Research}, pages 1134--1187. PMLR, 25--28 Feb 2024.

\bibitem[Ver25]{Vershynin2025HighDimensional}
Roman Vershynin.
\newblock {\em High-Dimensional Probability: An Introduction with Applications in Data Science}.
\newblock Cambridge University Press, second edition, May 2025.
\newblock \url{https://www.math.uci.edu/~rvershyn/papers/HDP-book/HDP-book.html}.

\bibitem[vN53]{vonNeumann1953A}
John von Neumann.
\newblock {\em A certain zero-sum two-person game equivalent to the optimal assignment problem}, volume~28 of {\em Annals of Mathematics Studies}, chapter~1, pages 5--12.
\newblock Princeton University Press, Princeton, NJ, 1953.
\newblock \url{doi.org/10.1515/9781400881970-002}.

\bibitem[Wei23]{Weighill2023Coarse}
Thomas Weighill.
\newblock Coarse embeddings of quotients by finite group actions.
\newblock \url{https://doi.org/10.48550/arXiv.2310.09369}, October 2023.

\bibitem[WFE{\etalchar{+}}22]{wagstaff2022universal}
Edward Wagstaff, Fabian~B Fuchs, Martin Engelcke, Michael~A Osborne, and Ingmar Posner.
\newblock Universal approximation of functions on sets.
\newblock {\em Journal of Machine Learning Research}, 23(151):1--56, 2022.

\bibitem[WYL{\etalchar{+}}24]{Wang2024Polynomial}
Peihao Wang, Shenghao Yang, Shu Li, Zhangyang Wang, and Pan Li.
\newblock Polynomial width is sufficient for set representation with high-dimensional features.
\newblock In {\em The Twelfth International Conference on Learning Representations (ICLR)}, Vienna, Austria, May 2024.
\newblock \url{https://openreview.net/forum?id=34STseLBrQ}.

\bibitem[XHLJ19]{xu2018how}
Keyulu Xu, Weihua Hu, Jure Leskovec, and Stefanie Jegelka.
\newblock How powerful are graph neural networks?
\newblock In {\em International Conference on Learning Representations}, 2019.

\bibitem[Zas75]{zaslavsky1975facing}
T~Zaslavsky.
\newblock {\em Facing up to Arrangements: Face-Count Formulas for Partitions of Space by Hyperplanes}, volume 154 of {\em Mem. Amer. Math. Soc.}
\newblock Amer. Math. Soc., 1975.

\bibitem[ZKR{\etalchar{+}}17]{deepsets}
Manzil Zaheer, Satwik Kottur, Siamak Ravanbakhsh, Barnabas Poczos, Russ~R Salakhutdinov, and Alexander~J Smola.
\newblock Deep sets.
\newblock In I.~Guyon, U.~V. Luxburg, S.~Bengio, H.~Wallach, R.~Fergus, S.~Vishwanathan, and R.~Garnett, editors, {\em Advances in Neural Information Processing Systems}, volume~30. Curran Associates, Inc., 2017.

\end{thebibliography}
    \bibliographystyle{alpha}

\appendix

\section{Background on Real Algebraic Geometry}\label{app:semialgebraic}
    
    A subset $\calS \subset \bbR^n$ is \emph{semialgebraic} if it can be constructed from building blocks of the form
    \begin{equation*}
        \set{ x \in \bbR^n }{ p(x) = 0 }, \qquad \set{ x \in \bbR^n }{ p(x) > 0 }
    \end{equation*}
    by taking finite unions, intersections and complements, where $p$ is a real-valued polynomial in $n$ variables. Similarly, a function $f : \calS \subset \bbR^n \to \bbR^m$ is \emph{semialgebraic} if its graph,
    \begin{equation*}
        \mathrm{Graph}(f) := \set{ (x,f(x)) \in \bbR^{n + m} }{ x \in \calS },
    \end{equation*}
    is semialgebraic. Given two semialgebraic sets $\calS \subset \bbR^n$ and $\mathcal{T} \subset \bbR^m$, a \emph{(semialgebraic) homeomorphism} is a bijective continuous semialgebraic map $f : \calS \to \mathcal{T}$ with continuous semialgebraic inverse. If a semialgebraic homeomorphism exists between semialgebraic sets $\calS$ and $\mathcal{T}$, we call them \emph{(semialgebraically) homeomorphic}.
    
    Semialgebraic sets are known to decompose in the following way. 

    \begin{theorem}[{\cite[Theorem~2.3.6 on p.~33]{Bochnak1998Real}}]
        Every semialgebraic subset of $\bbR^n$ is the disjoint union of a finite number of semialgebraic sets, each of them (semialgebraically) homeomorphic to an open hypercube $(0,1)^d$, for some $d \in \bbN$ (with $(0,1)^0$ being a point).
    \end{theorem}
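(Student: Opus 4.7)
The plan is to prove this classical cell decomposition by induction on the ambient dimension $n$, via cylindrical algebraic decomposition (CAD). A cell of dimension $d$ will be a semialgebraic subset of $\bbR^n$ semialgebraically homeomorphic to $(0,1)^d$, and my goal is to exhibit any semialgebraic $\calS \subset \bbR^n$ as a finite disjoint union of such cells.

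For the base case $n=1$, I use the fact that every semialgebraic subset of $\bbR$ is a finite Boolean combination of sets $\{p(x)=0\}$ and $\{p(x)>0\}$ for polynomials $p \in \bbR[x]$. Since each such polynomial has finitely many real roots, any semialgebraic subset of $\bbR$ is a finite disjoint union of points (cells of dimension $0$, homeomorphic to $(0,1)^0$) and open intervals (cells of dimension $1$, each semialgebraically homeomorphic to $(0,1)$ via an affine map, with $\pm\infty$ handled by a bounded semialgebraic reparametrization such as $t \mapsto t/(1+|t|)$ or its inverse).

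For the inductive step, suppose the result holds in $\bbR^{n-1}$ and let $\calS \subset \bbR^n$ be defined by finitely many sign conditions on polynomials $P_1,\dots,P_k \in \bbR[x_1,\dots,x_n]$. View each $P_i$ as a polynomial in $x_n$ with coefficients in $\bbR[x_1,\dots,x_{n-1}]$, and form the finite family $\calF$ consisting of the leading coefficients, discriminants, and pairwise resultants (with respect to $x_n$) of the $P_i$. The sign conditions on the elements of $\calF$ partition $\bbR^{n-1}$ into finitely many semialgebraic subsets $B_1,\dots,B_N$ with the property that, over each $B_j$, the polynomials $P_i(x',\cdot)$ have constant degree in $x_n$, no multiple roots, and no coincident roots between different $P_i$. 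A standard continuity-of-roots argument (Thom's lemma, or an application of the implicit function theorem combined with the Tarski–Seidenberg principle) then yields continuous semialgebraic functions $\xi_1^{(j)} < \dots < \xi_{m_j}^{(j)}: B_j \to \bbR$ enumerating all real roots of the $P_i$, so that over $B_j$ the set $\calS$ is a union of some of the graphs $\Gamma_r^{(j)} := \{(x',\xi_r^{(j)}(x')) : x' \in B_j\}$ and some of the open bands $\Delta_r^{(j)} := \{(x',t) : x'\in B_j,\, \xi_r^{(j)}(x') < t < \xi_{r+1}^{(j)}(x')\}$ (with the convention $\xi_0^{(j)} \equiv -\infty$, $\xi_{m_j+1}^{(j)} \equiv +\infty$).

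Applying the inductive hypothesis to each $B_j \subset \bbR^{n-1}$, I further decompose $B_j$ into cells $C$ semialgebraically homeomorphic to hypercubes $(0,1)^{d-1}$ (for various $d$). Restricting the $\xi_r^{(j)}$ to each such cell $C$, the graph pieces $\Gamma_r^{(j)} \cap (C \times \bbR)$ inherit a semialgebraic homeomorphism to $C$ itself, and hence to $(0,1)^{d-1}$. For the band pieces, the fiberwise affine rescaling $(x', t) \mapsto (x', (t - \xi_r^{(j)}(x'))/(\xi_{r+1}^{(j)}(x') - \xi_r^{(j)}(x')))$ gives a semialgebraic homeomorphism of $\Delta_r^{(j)} \cap (C \times \bbR)$ onto $C \times (0,1) \cong (0,1)^d$; the two unbounded bands are handled by composing with a semialgebraic homeomorphism $\bbR \to (0,1)$ such as $t \mapsto t/(1+|t|)$ restricted suitably. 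Taking the disjoint union of all resulting cells over $j$ and over the sub-cells of each $B_j$ yields the required decomposition of $\calS$.

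The main obstacle is the construction in the inductive step of the continuous semialgebraic root functions $\xi_r^{(j)}$ with a globally consistent ordering on each $B_j$. This is where the bulk of the real-algebraic content sits: one needs Thom's lemma (or equivalently a careful use of sign invariance of derivatives) to guarantee that after refining $\calF$ to include all higher $x_n$-derivatives of the $P_i$, the roots stay separated and vary continuously and semialgebraically across each $B_j$, rather than merely locally. Once this is in place, the rest of the argument is an essentially combinatorial gluing together of graphs and bands with the inductive hypothesis applied to the base cells.
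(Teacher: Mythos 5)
The paper does not prove this theorem; it cites it as a black box from Bochnak, Coste, and Roy, \emph{Real Algebraic Geometry}, Theorem~2.3.6, where it is established via cylindrical algebraic decomposition. Your proposal is a correct sketch of exactly that CAD argument (induction on ambient dimension, projection via leading coefficients/discriminants/resultants, continuous semialgebraic root functions, decomposition into graphs and bands over base cells), so it matches the cited source's approach.

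One presentational wrinkle worth noting: you define the root functions $\xi_1^{(j)}<\dots<\xi_{m_j}^{(j)}$ over the sign-invariant regions $B_j$ \emph{before} invoking the inductive hypothesis, but a sign-invariant $B_j$ may be disconnected, and over a disconnected set the number of real roots of $P_i(x',\cdot)$ need not be constant and the ordering need not be globally consistent, so these functions are only guaranteed to exist over each connected component. The clean fix — which is what the reference does — is to apply the inductive cell decomposition of $\bbR^{n-1}$ first (cells are connected, being homeomorphic to open hypercubes), and then construct the root functions over each cell; your appeal to Thom's lemma and the later restriction to cells does effectively resolve this, but the steps should be carried out in the opposite order to avoid asserting something not yet justified.
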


    Consider a semialgebraic set $\calS \subset \bbR^n$ which is the finite union of semialgebraic sets homeomorphic to hypercubes of dimensions $(d_i)_{i = 1}^p \in \bbN$. Then, the \emph{(semialgebraic) dimension} of $\calS$ is $\max_{i \in [p]} d_i$.
    
    Finally, we note that, if $\calS \subset \bbR^n$ and $\mathcal{T} \subset \bbR^m$ are two semialgebraic sets and $f : \calS \times \mathcal{T} \to \bbR$ is a semialgebraic function, then all sets of the form 
    \begin{equation*}
        \set{ y \in \mathcal{T} }{ f(x,y) = 0 }, \qquad x \in \calS,
    \end{equation*}
    are semialgebraic as well: indeed, the above set is the image of $\mathrm{Graph}(f) \cap ( \lbrace x \rbrace \times \mathcal{T} \times \lbrace 0 \rbrace )$ by the projection $\calS \times \mathcal{T} \times \bbR \to \mathcal{T}$ and semialgebraic sets are stable under projections \cite[Theorem~2.2.1 on p.~26]{Bochnak1998Real}.

\section{Proof of Theorem \texorpdfstring{\ref{thm:biLipschitzdeterministic}}{2}}
   Let $\bfX, \bfY \in \bbR^{\nrows \times \ncolumns}$ be arbitrary but fixed with rows $(\bfx_i)_{i = 1}^\nrows, (\bfy_i)_{i = 1}^\nrows$, respectively, and let $(\bfa_k)_{k = 1}^\ntemplates$ denote the columns of $\bfA \in \bbR^{\ncolumns \times \ntemplates}$. There exist permutations $(\sigma_k)_{k = 1}^\ntemplates \in S_\nrows$ and associated permutation matrices $(\boldsymbol{\Pi}_k)_{k=1}^D$ such that 
        \begin{align*}
            \norm{\SortEmbedding{\bfA}(\bfX) - \SortEmbedding{\bfA}(\bfY)}_\mathrm{F}^2 &= \sum_{k = 1}^\ntemplates \norm{\sort{\bfX \bfa_k} - \sort{\bfY \bfa_k}}_2^2 = \sum_{k = 1}^\ntemplates \norm{\bfX \bfa_k - \boldsymbol{\Pi}_k \bfY \bfa_k}_2^2 \\
            &= \sum_{i = 1}^\nrows \sum_{k = 1}^\ntemplates \abs{(\bfx_i - \bfy_{\sigma_k(i)})^\top \bfa_k}^2 = \sum_{i,j = 1}^\nrows \sum_{k \in I_{i,j}} \abs{(\bfx_i - \bfy_j)^\top \bfa_k}^2,
        \end{align*}
        where $I_{i,j} := \set{k \in [\ntemplates]}{ \sigma_k(i) = j }$.

        Consider the following trick: we observe that the matrix $\bfS \in \bbR^{\nrows \times \nrows}$ given by
        \begin{equation}\label{eq:DS}
            S_{i,j} := \frac{\abs{I_{i,j}}}{\ntemplates} 
        \end{equation}
        is doubly stochastic. As  such,  it can be written as the convex combination of permutation matrices, due to a classical result of Birkhoff \cite{Birkhoff1946Three} and von Neumann \cite{vonNeumann1953A}. In fact, the polytope of doubly stochastic matrices has dimension $(\nrows-1)^2$, and thus  Carathéodory's theorem (cf.~e.g.~\cite{Gruenbaum2003Convex}) implies that we can write $\bfS$ as a convex combination of $N=(n-1)^2+1 $ permutation matrices, namely
        \begin{equation*}
            \bfS = \sum_{\ell = 1}^\BirkhoffDimension t_\ell \bfP^{(\ell)}, 
        \end{equation*}
        where the $t_\ell$ are nonnegative numbers with $\sum_{\ell = 1}^\BirkhoffDimension t_\ell = 1$, and the  $\bfP^{(\ell)}$ are permutation matrices. It follows that (at least) one of the coefficients $k$ out of $N$ satisfies $t_k\geq 1/N$. Let $\sigma$ be the permutation  for which $\bfP^{(k)}_{i,\sigma(i)}=1 $ for all $i\in [n]$. Then, 
        \begin{equation*}
            \bfS_{i,\sigma(i)}= \sum_{\ell = 1}^\BirkhoffDimension t_\ell \bfP^{(\ell)}_{i,\sigma(i)}\geq t_k\bfP^{(k)}_{i,\sigma(i)}=t_k\geq \frac{1}{N}, \qquad i\in [n].
        \end{equation*}
       This result, together with the definition of $\bfS$ in \eqref{eq:DS}, implies that $I_{i,\sigma(i)}$ has cardinality greater than or equal to $\ntemplates/\BirkhoffDimension \geq r \ncolumns$.

        Going back to our initial computation and letting $I_i \subset I_{i,\sigma(i)}$ be an arbitrary subset of cardinality $r \ncolumns$, we conclude that

                \begin{align*}
            \MoveEqLeft[3] \norm{\SortEmbedding{\bfA}(\bfX) - \SortEmbedding{\bfA}(\bfY)}_\mathrm{F}^2 
        	= \sum_{i,j = 1}^\nrows \sum_{k \in I_{i,j}} \abs{(\bfx_i - \bfy_j)^\top \bfa_k}^2 \geq \sum_{i = 1}^\nrows \sum_{k \in I_i} \abs{(\bfx_i - \bfy_{\sigma(i)})^\top \bfa_k}^2 \\
        	={}& \sum_{i = 1}^\nrows \norm{(\bfx_i - \bfy_{\sigma(i)})^\top \bfA(I_i)}_2^2 \geq \sum_{i = 1}^\nrows \sigma_\ncolumns^2(\bfA(I_i)) \norm{\bfx_i - \bfy_{\sigma(i)}}_2^2 \\
        	\geq{}& \min_{\substack{I \subset [\ntemplates]\\\abs{I} = r\ncolumns}} \sigma_\ncolumns^2(\bfA(I)) \sum_{i = 1}^\nrows \norm{\bfx_i - \bfy_{\sigma(i)}}_2^2 = \min_{\substack{I \subset [\ntemplates]\\\abs{I} = r\ncolumns}} \sigma_\ncolumns^2(\bfA(I)) \norm{\bfX - \GroupAction{P}{\bfY}}_\mathrm{F}^2 \\
        	\geq{}& \min_{\substack{I \subset [\ntemplates]\\\abs{I} = r\ncolumns}} \sigma_\ncolumns^2(\bfA(I)) \cdot \distance{\bfX}{\bfY}^2,
        \end{align*}
        which finishes the proof.
\end{document}